\newcommand{\bmark}[2]{%
  {\color{blue}\underbrace{\color{black}{#1}}_{\color{blue}\text{#2}}}%
}
\theoremstyle{plain}
\newtheorem{theorem}{Theorem}[section]
\newtheorem{lemma}[theorem]{Lemma}
\theoremstyle{definition}
\newtheorem{assumption}[theorem]{Assumption}
\theoremstyle{remark}
\newtheorem{remark}[theorem]{Remark}
\newcommand{\EE}{\mathbb{E}}
\newcommand{\cF}{\mathcal{F}}
\newcommand{\cJ}{\mathcal{J}}
\newcommand{\cL}{\mathcal{L}}
\newcommand{\cO}{\mathcal{O}}
\newcommand{\RR}{\mathbb{R}}
\newcommand{\SSS}{\mathbb{S}}
\newcommand{\diag}{\operatorname{diag}}
\newcommand{\Polar}{\operatorname{Polar}}
\newcommand{\rank}{\operatorname{rank}}
\newcommand{\sym}{\operatorname{sym}}
\newcommand{\tr}{\operatorname{tr}}
\newcommand{\vect}{\operatorname{vec}}
\newcommand{\alg}{FISMO }
\icmltitlerunning{FISMO: Fisher-Structured Momentum-Orthogonalized Optimizer}
\begin{document}

\twocolumn[
  \icmltitle{FISMO: Fisher-Structured Momentum-Orthogonalized Optimizer}

  % It is OKAY to include author information, even for blind submissions: the
  % style file will automatically remove it for you unless you've provided
  % the [accepted] option to the icml2026 package.

  % List of affiliations: The first argument should be a (short) identifier you
  % will use later to specify author affiliations Academic affiliations
  % should list Department, University, City, Region, Country Industry
  % affiliations should list Company, City, Region, Country

  % You can specify symbols, otherwise they are numbered in order. Ideally, you
  % should not use this facility. Affiliations will be numbered in order of
  % appearance and this is the preferred way.
  \icmlsetsymbol{equal}{*}

  \begin{icmlauthorlist}
    \icmlauthor{Chenrui Xu}{cuhkie}
    \icmlauthor{Wenjing Yan}{cuhkie}
    \icmlauthor{Ying-Jun Angela Zhang}{cuhkie}
  \end{icmlauthorlist}

  \icmlaffiliation{cuhkie}{Department of  Information Engineering, The Chinese University of Hong Kong, Hong Kong}
  \icmlcorrespondingauthor{Wenjing Yan}{wjyan@ie.cuhk.edu.hk}

  % You may provide any keywords that you find helpful for describing your
  % paper; these are used to populate the "keywords" metadata in the PDF but
  % will not be shown in the document
  \icmlkeywords{Machine Learning, ICML}

  \vskip 0.3in
]

% this must go after the closing bracket ] following \twocolumn[ ...

% This command actually creates the footnote in the first column listing the
% affiliations and the copyright notice. The command takes one argument, which
% is text to display at the start of the footnote. The \icmlEqualContribution
% command is standard text for equal contribution. Remove it (just {}) if you
% do not need this facility.

% Use ONE of the following lines. DO NOT remove the command.
% If you have no special notice, KEEP empty braces:
\printAffiliationsAndNotice{}  % no special notice (required even if empty)
% Or, if applicable, use the standard equal contribution text:
% \printAffiliationsAndNotice{\icmlEqualContribution}

\begin{abstract}
Training large-scale neural networks requires solving nonconvex optimization where the choice of optimizer fundamentally determines both convergence behavior and computational efficiency. While adaptive methods like Adam have long dominated practice, the recently proposed Muon optimizer achieves superior performance through orthogonalized momentum updates that enforce isotropic geometry with uniform singular values. However, this strict isotropy discards potentially valuable curvature information encoded in gradient spectra, motivating optimization methods that balance geometric structure with adaptivity. We introduce \textbf{FISMO} (Fisher-Structured Momentum-Orthogonalized) optimizer, which generalizes isotropic updates to incorporate anisotropic curvature information through Fisher information geometry. By reformulating the optimizer update as a trust-region problem constrained by a Kronecker-factored Fisher metric, FISMO achieves structured preconditioning that adapts to local loss landscape geometry while maintaining computational tractability. We establish convergence guarantees for FISMO in stochastic nonconvex settings, proving an $\mathcal{O}(1/\sqrt{T})$ rate for the expected squared gradient norm with explicit characterization of variance reduction through mini-batching. Empirical evaluation on image classification and language modeling benchmarks demonstrates that FISMO achieves superior training efficiency and final performance compared to established baselines.
%Experiments on image classification and language modeling tasks demonstrate that FISMO consistently outperforms both Adam and Muon in training stability and final performance, with ablations confirming that the synergy between Fisher-structured geometry and orthogonalization drives these improvements. %Our work provides a principled framework for designing geometry-aware optimizers that bridge first-order efficiency with second-order adaptivity.
\end{abstract}

% \begin{abstract}
% Modern neural network training requires solving nonconvex optimization problems in which gradient geometry significantly influences convergence stability and computational efficiency. The recently proposed Muon optimizer demonstrates strong empirical performance by orthogonalizing momentum matrices to produce isotropic updates with unit-normalized singular values. However, this isotropic constraint eliminates heterogeneous curvature information encoded in the gradient's singular value spectrum, potentially limiting the optimizer's ability to adapt to the local geometry of loss landscape. To address this problem, we propose \textbf{FISMO} (Fisher-Structured Momentum-Orthogonalized Optimizer), an optimizer that incorporates principled geometric structure derived from the Fisher information matrix while maintaining computational efficiency. FISMO performs steepest descent with respect to a Fisher-information-informed metric, made tractable through Kronecker-factored approximations of the Fisher matrix. We establish theoretical convergence guarantees for FISMO, proving an $\mathcal{O}(1/\sqrt{T})$ convergence rate to a neighborhood determined by gradient variance under standard assumptions. Empirical evaluation on image classification and language modeling benchmarks demonstrates that FISMO achieves superior training efficiency and final performance compared to established baselines.
% \end{abstract}

\section{Introduction}

The remarkable progress of large language models (LLMs) \cite{touvron2023llama,achiam2023gpt,team2023gemini} has intensified the challenge of training neural networks with millions to billions of parameters. This scale necessitates solving highly nonconvex optimization problems, where the choice of optimizer fundamentally determines both computational efficiency and model performance. While Adam \cite{kingma2014adam} and its variant AdamW \cite{loshchilov2017decoupled} have served as the de facto standards for over a decade, the demands of modern deep learning continue to drive the search for more effective optimization algorithms.

A recent breakthrough in this pursuit is Muon \cite{jordan2024muon}, a matrix-parameter optimizer that departs radically from element-wise adaptive methods. By orthogonalizing the momentum matrix to normalize all singular values to unity, Muon produces isotropic updates that prevent pathological amplification or suppression along specific directions. This geometric approach has demonstrated substantial improvements in both training stability and downstream accuracy across diverse applications \cite{liu2025muon,wang2025muon,tveit2025muon}. In practice, Muon employs Newton--Schulz iterations \cite{higham2008functions} to efficiently approximate the orthogonalization operation.

Theoretically, Muon implements steepest descent under a spectral norm constraint \cite{bernstein2024old,chen2025muon}. Following \citet{bernstein2025deriving}, Muon's update solves the constrained linear minimization oracle (LMO) problem \cite{lan2013complexity}:
\begin{equation}
\label{eq:muon_lmo_spec}
\min_{\Delta W\in\mathbb{R}^{m\times n}} \langle \nabla_W \mathcal{L}, \Delta W \rangle
\quad \text{s.t.} \quad \|\Delta W\|_2 \leq \eta,
\end{equation}
where $\mathcal{L}$ denotes the objective function, $W$ represents the matrix parameters, and $\|\cdot\|_2$ denotes the spectral norm. This formulation seeks the update that maximally decreases the linearized objective while constraining the worst-case amplification: $\|\Delta W\|_2 = \max_{\|x\|_2=1}\|\Delta W x\|_2$. The solution aligns with the gradient's singular directions and saturates the spectral constraint, yielding an isotropic update with uniform singular values \cite{chen2025muon,bernstein2025deriving}.

However, the optimality of isotropic geometry for deep learning remains contested. The heterogeneous singular-value spectrum of gradient matrices encodes valuable information about curvature and scale variations across directions—information that uniform normalization may discard \cite{lau2025polargrad}. Furthermore, theoretical analysis under isotropic-curvature models suggests that while Muon excels when curvature exhibits rapid transitions, such conditions may not characterize the loss landscapes of large neural networks \cite{su2025isotropic}. Therefore, the isotropic update of Muon might not be the optimal choice in complicated nonconvex settings, especially in large-scale deep learning. These considerations raise a fundamental question:

\begin{tcolorbox}[colback=gray!5,
    colframe=black!70, 
    frame style={line width=10pt}, arc=2mm]
\textbf{Beyond isotropic updates, what additional structure should an optimizer leverage to produce more informative and effective update directions for training deep networks?}
\end{tcolorbox}

To address this question, we develop a structured-geometry framework for optimizer design. We reformulate optimizer updates through the LMO lens \cite{lan2013complexity} and establish connections to natural-gradient methods \cite{martens2020new} under Fisher information geometry \cite{costa2015fisher}. Recognizing that exact Fisher information is computationally intractable, we derive the optimal Kronecker-structured approximation \cite{martens2015optimizing} and solve the resulting Fisher-structured LMO in closed form. This solution motivates a practical optimizer that balances computational efficiency with geometric informativeness, for which we establish convergence guarantees and demonstrate empirical effectiveness.

\subsection{Main Contributions}

Our main contributions are summarized below.

\begin{itemize}
    \item \textbf{Structured-Geometry Framework.} We develop a principled approach that bridges second-order methods and LMO-based optimizers by replacing uniform spectral constraints with Fisher information-induced trust regions. This framework reveals how local geometry shapes update directions, enabling adaptive, anisotropic updates that preserve curvature information lost in isotropic orthogonalization.
    
    \item \textbf{The FISMO Algorithm.} We introduce FISMO (\textbf{FI}sher-\textbf{S}tructured \textbf{M}omentum-\textbf{O}rthogonalized) optimizer, a practical instantiation of our theoretical framework. FISMO combines Kronecker-factored Fisher preconditioning with orthogonalized momentum, achieving computational efficiency while adapting to local curvature. This design synthesizes the stability of orthogonalization with the informativeness of second-order geometry.
    
    \item \textbf{Convergence theory.} We establish an $\mathcal{O}(1/\sqrt{T})$ convergence rate for FISMO's expected squared gradient norm in stochastic nonconvex optimization, matching the standard nonconvex rate achieved by Muon. Our analysis characterizes the variance reduction achieved through mini-batching, providing theoretical guidance for practical hyperparameter selection.
    
    \item \textbf{Empirical Validation.} Through experiments on representative deep learning benchmarks, we demonstrate FISMO's improved training stability and performance relative to established optimizers. 

\end{itemize}

\paragraph{Notation.} We use lowercase letters for vectors and uppercase letters for matrices. $I_d$ denotes the $d \times d$ identity matrix and $\tr(\cdot)$ is the trace operator. For matrices $A, B \in \mathbb{R}^{m \times n}$, the Frobenius inner product is $\langle A, B \rangle_F := \tr(A^\top B)$. 
%The Frobenius, operator (spectral), and nuclear norms are denoted by $\|\cdot\|_F$, $\|\cdot\|_2$, and $\|\cdot\|_*$, respectively. 
We denote $\|\cdot\|_F$ for the Frobenius norm (the $\ell_2$ norm of the entries), $\|\cdot\|_2$ (or $\|\cdot\|_{\mathrm{op}}$) for the operator/spectral norm (the largest singular value), and $\|\cdot\|_*$ for the nuclear norm (the sum of singular values). The vectorization operator $\vect(\cdot)$ stacks the columns of a matrix into a vector, and $A \otimes B$ denotes the Kronecker product of matrices $A$ and $B$. For a square matrix $X$, $\sym(X) := \frac{1}{2}(X+X^\top)$ is its symmetric part. For a matrix $M$ with SVD $M=U\Sigma V^\top$, its orthogonal polar factor is $\Polar(M) := UV^\top$. For symmetric matrices $A, B$, we write $A \succ B$ (or $A \succeq B$) if the matrix $A-B$ is positive definite (or positive semi-definite).

\section{Related Work}

\paragraph{Matrix-Parameter Optimizers.}
Matrix-parameter optimizers treat key network weights (e.g., attention and linear layers) as matrices and design update rules that explicitly exploit their row/column or block structure. Among these approaches, some works mainly aim to reduce Adam-style state and improve efficiency by sharing or factorizing second-moment statistics at the block or layer level, including Adafactor \cite{shazeer2018adafactor} and Adam-mini \cite{zhang2024adam}.
Other methods, such as Shampoo \cite{gupta2018shampoo} and its variant SOAP \cite{vyas2024soap}, exploit tractable matrix factorizations (like Kronecker structure) to approximate large curvature preconditioners.
More recently, orthogonalized matrix updates have emerged as a prominent design choice, with Muon \cite{jordan2024muon} as a representative example and an expanding line of work that extends this matrix-geometry-driven update shaping. Closely related to Muon, ASGO \cite{an2025asgo} utilizes a single-sided structured preconditioner built from accumulated matrix-gradient second moments. 
PolarGrad \cite{lau2025polargrad} retains Muon’s orthogonalized direction but adds a trace-based, curvature-dependent scaling, rather than enforcing a purely isotropic update.
MuonBP \cite{khaled2025muonbp} adapts Muon’s updates to large-scale parallel training via block-periodic orthogonalization.

\paragraph{Natural Gradient Methods.}
Natural gradient methods view learning as steepest descent under the Fisher information metric \cite{martens2020new}. 
In machine-learning settings, the Fisher matrix coincides with the generalized Gauss–Newton matrix \cite{shrestha2023natural}, so natural gradient updates can be interpreted as estimates of Hessian. Since forming the full Fisher is infeasible at modern scales, practical algorithms therefore rely on structured approximations.
For example, K-FAC \cite{martens2015optimizing} uses a Kronecker-factored approximation of Fisher blocks, with extensions including recurrent-network variants and eigenbasis corrections such as EKFAC \cite{george2018fast}.
Other approaches pursue lighter-weight or implicit approximations, such as quasi-diagonal Riemannian constructions like TONGA \cite{roux2007topmoumoute}. Some works like TANGO \cite{ollivier2017true} also asymptotically exact trajectories without explicit Fisher estimation.

\section{Preliminaries}

Modern neural network training requires minimizing a loss function $\mathcal{L}(\theta)$ over high-dimensional parameter spaces $\theta \in \mathbb{R}^d$, where $d$ ranges from millions to billions of parameters:
\begin{equation*}
\min_{\theta \in \mathbb{R}^d} \mathcal{L}(\theta) = \mathbb{E}_{(x,y) \sim \mathcal{D}} \left[ \ell(f_\theta(x), y) \right],
\end{equation*}
where $f_\theta$ denotes the neural network, $\ell$ represents the mini-batch loss function, and \(\mathcal{D}\) is the distribution of the data $z=(x,y)$. Predominant optimization methods employ first-order updates of the form $\theta_{t+1} = \theta_t - \eta_t \nabla \mathcal{L}(\theta_t)$ or adaptive variants thereof \cite{klein2009adaptive}.

A substantial portion of these parameters, particularly in transformer architectures, consists of matrix-valued weights $W \in \mathbb{R}^{m \times n}$. Standard optimizers, such as Adam, typically vectorize these matrices, treating them as unstructured arrays. This vectorization, however, discards critical geometric structures, including: low-rank gradient properties, coupling between input and output spaces, and spectral characteristics that govern optimization dynamics in high dimensions. Recent evidence suggests that preserving matrix geometry can accelerate convergence, improve conditioning, and enhance training stability \cite{gupta2018shampoo}.

\paragraph{The Muon Optimizer and Isotropic Geometry.} The Muon optimizer \cite{jordan2024muon} represents a recent advance in this direction, preserving matrix geometry through polar decomposition. For matrix parameters $W \in \mathbb{R}^{m \times n}$, Muon maintains a momentum accumulator and orthogonalizes it as:
\begin{align*}
M_t &= \beta M_{t-1} + (1-\beta) \nabla_W \mathcal{L}(W_{t-1}), \\
W_t &= W_{t-1} - \eta_t \text{Polar}(M_t),
\end{align*}
where \(\beta \in [0,1)\) is the momentum coefficient and \(\text{Polar}(M_t) = UV^\top\) is the orthogonal factor from the SVD, \(M_t = U\Sigma V^\top\).

This update mechanism has an elegant interpretation as the solution to a Linear Minimization Oracle (LMO) subproblem with a spectral norm constraint:
\begin{equation*}
\min_{\|\Delta W\|_2 \leq \eta} \langle M_t, \Delta W \rangle_F.
\end{equation*}
The solution, \(\Delta W^* = -\eta \cdot \text{Polar}(M_t)\), normalizes the singular values of the momentum matrix to unity while preserving the singular vectors. Consequently, it enforces a perfectly \textbf{isotropic} geometry: all update directions are scaled equally, irrespective of the curvature information encoded in the gradient's singular value spectrum. Computationally, Muon avoids the cost of a full SVD by approximating the polar decomposition using Newton-Schulz iterations, which rely only on efficient matrix multiplications. Within months of its release, Muon has been adopted across diverse training scenarios, from language model pretraining to vision transformers, consistently demonstrating 1.5-2× faster convergence than AdamW while maintaining or improving final performance \cite{jordan2024muon,liu2025muon}.

However, this isotropic constraint discards the heterogeneous curvature information encoded in the gradient's singular value spectrum—High-curvature directions (corresponding to large singular values) are treated identically to low-curvature ones (small singular values), potentially limiting adaptation to local landscape geometry.

\paragraph{Natural Gradient Descent and Anisotropic Geometry.}
In contrast, Natural Gradient Descent (NGD) offers a principled framework for adapting to the local geometry of the loss landscape \cite{martens2015optimizing}. NGD preconditions the gradient with the Fisher Information Matrix (FIM), which serves as the canonical Riemannian metric on the statistical manifold of the model's parameters \cite{ly2017tutorial}:
\begin{equation*}
F(\theta) = \mathbb{E}_{z \sim p_\theta} \left[\nabla_\theta \log p_\theta(z) \nabla_\theta \log p_\theta(z)^\top\right].
\end{equation*}
The NGD is updated by solving a FIM-regularized trust-region problem:
\begin{equation}
\begin{aligned}
    \label{eq:fisher-prob}
    &\min_{\Delta\theta}~ \langle \nabla\mathcal{L}(\theta_t), \Delta\theta\rangle \\
    &~~{\rm s.t.} ~~ \Delta\theta^\top F(\theta_t) \Delta\theta \le \eta^2,
\end{aligned}    
\end{equation}
which yields the update \(\Delta\theta^* \propto -F(\theta_t)^{-1}\nabla\mathcal{L}(\theta_t)\). This \textbf{anisotropic} update adapts to local curvature—taking larger steps along flat directions and smaller steps along steep ones. In stark contrast to Muon's isotropic updates, NGD thus exploits the heterogeneous geometric structure of the parameter space.

The practical utility of NGD, however, is severely limited by the prohibitive cost of computing, storing, and inverting the \(d \times d\) FIM for large-scale models. This establishes a fundamental dichotomy in modern optimization:  Muon achieves efficiency through isotropic orthogonalization but sacrifices curvature information, while NGD captures geometric structure at prohibitive cost. Our work bridges this gap by incorporating FIM-derived geometry into an LMO framework, thereby achieving geometry-aware updates with the computational efficiency of Muon.

\section{Theoretical Framework}

Building upon the NGD principles, we now develop our approach for optimizing matrix-valued parameters $W \in \mathbb{R}^{m \times n}$. Our goal is to derive a geometry-aware update rule that preserves the benefits of NGD while remaining computationally tractable for large-scale models.

\subsection{Kronecker-Factored Approximation of FIM}

Direct manipulation of the empirical FIM—an $(mn) \times (mn)$ matrix—is computationally prohibitive for large-scale neural networks. We therefore adopt the K-FAC strategy \cite{martens2015optimizing} to approximate the FIM through a Kronecker product. Specifically, we introduce two symmetric positive definite matrices, $P \in \SSS_{++}^m$ and $Q \in \SSS_{++}^n$, such that:
\begin{equation*}
    F_W \leftarrow Q \otimes P.
\end{equation*}
Under this approximation, the natural gradient trust region constraint in \eqref{eq:fisher-prob} becomes:
\begin{equation*}
    \operatorname{vec}(\Delta W)^\top (Q \otimes P) \operatorname{vec}(\Delta W) \leq \eta^2,
\end{equation*}
where $\operatorname{vec}(\Delta W)$ denotes the vectorization of the matrix update. Using the identity $\operatorname{vec}(\Delta W)^\top (Q \otimes P) \operatorname{vec}(\Delta W) = \operatorname{tr}(\Delta W^\top P\Delta WQ)$, we reformulate this constraint as:
\begin{equation*}
    \operatorname{tr}(\Delta W^\top P \Delta W Q) = \|P^{1/2}\Delta W Q^{1/2}\|_F^2 \leq \eta^2.
\end{equation*}
We replace the Frobenius norm with the spectral norm based on their relationship $\|X\|_2 \leq \|X\|_F \leq \sqrt{\operatorname{rank}(X)}\|X\|_2$ \cite{horn2012matrix}. This substitution preserves the geometric structure while aligning our formulation with the Muon method \cite{jordan2024muon}, which employs a spectral norm constraint (see \eqref{eq:muon_lmo_spec}).

This leads to our central optimization problem, which seeks the steepest descent direction $\Delta W$ within a trust region defined by the Kronecker-factored geometry:
\begin{empheq}[box=\fbox]{equation}
\begin{aligned}
    \label{eq:lmo}
    \min_{\Delta W \in \mathbb{R}^{m \times n}} &~\langle G, \Delta W \rangle_F \\
    {\rm s.t.}\quad &\| P^{1/2} \Delta W Q^{1/2} \|_2 \leq \eta,
\end{aligned}
\end{empheq}

where $G:=\nabla_W\cL(W)$ denotes the gradient of the loss function. This formulation raises two fundamental questions:
\begin{enumerate}[label={[Q\arabic*]}, ref={[Q\arabic*]}]
    \item What choices of $P$ and $Q$ best approximate the empirical Fisher information matrix $F_W$?
    \label{Q:1}
    \item What is the optimal solution to \eqref{eq:lmo} that determines our update rules? %, and how does it determine our update rule?
    \label{Q:2}
\end{enumerate}

\subsection{The Best Kronecker Approximation}
\label{sec:best-kronecker-approx}

To answer question \ref{Q:1}, we need to first define what constitutes the ``best'' Kronecker-product approximation. We measure the discrepancy between two positive definite matrices using the \emph{log-det divergence} \cite{cichocki2015log}:
\begin{equation*}
\label{eq:logdet_div}
D_{\mathrm{ld}}(A\,\|\,B) \coloneqq \tr(B^{-1}A) - \log\det(B^{-1}A) - d,
\end{equation*}
which is equivalent (up to constants) to the Kullback--Leibler (KL) divergence between two centered multivariate Gaussian \(\mathcal{N}(0, A)\) and \(\mathcal{N}(0, B)\). We seek the preconditioners \(P\) and \(Q\) that minimize this divergence between the damped empirical FIM and its Kronecker approximation that:
\begin{equation}
    \label{eq:best_kron_problem}
\min_{P\in\SSS_{++}^m,\ Q\in\SSS_{++}^n}
\;D_{\mathrm{ld}}\left(F_W + \mu I_{mn}\ \big\|\ Q\otimes P\right).
\end{equation}
Here, \(\mu > 0\) is a small damping constant that ensures the matrix is strictly positive definite, a common practice in natural gradient methods. Using the properties \((Q\otimes P)^{-1}=Q^{-1}\otimes P^{-1}\) and \(\log\det(Q\otimes P)=n\log\det P+m\log\det Q\), Problem \eqref{eq:best_kron_problem} is equivalent to minimizing the objective \(\cJ(P,Q)\) defined as:
\begin{equation}
\label{eq:best_kron_obj_matrix}
    \begin{aligned}
        \cJ(P,&Q)
        := \tr\left((Q^{-1}\!\otimes P^{-1})(F_W+\mu I_{mn})\right)\\
        &\;+\; n\log\det P \;+\; m\log\det Q.\\
        =&\mathbb{E}\!\left[\mathrm{tr}\!\left(P^{-1}GQ^{-1}G^\top\right)\right]
        +\mu\,\mathrm{tr}(P^{-1})\,\mathrm{tr}(Q^{-1})\\
        &\;+\; n\log\det P \;+\; m\log\det Q.
    \end{aligned}
\end{equation}
The optimal preconditioners \(P\) and \(Q\) are characterized by the following theorem, which suggests an alternating minimization scheme for their computation. 
\begin{theorem}[Optimal Kronecker Approximation]
\label{thm:opt-Kron-approx}
    Suppose \(\EE\|G\|_F < \infty\). The objective function \(\cJ(P,Q)\) in \eqref{eq:best_kron_obj_matrix} is convex in \(P\) (for fixed \(Q\)) and in \(Q\) (for fixed \(P\)). The unique minimizers are coupled via the following fixed-point equations:
    \begin{itemize}
        \item For any fixed \(Q \in \SSS_{++}^n\), the unique minimizer over \(P \in \SSS_{++}^m\) is:
        \[
        P^*(Q)=\frac{1}{n}\EE[GQ^{-1}G^\top] + \frac{\mu\tr(Q^{-1})}{n} I_m.
        \]
        \item For any fixed \(P \in \SSS_{++}^m\), the unique minimizer over \(Q \in \SSS_{++}^n\) is:
        \[
        Q^*(P)=\frac{1}{m}\EE[G^\top P^{-1}G] + \frac{\mu\tr(P^{-1})}{m} I_n.
        \]
    \end{itemize}
\end{theorem}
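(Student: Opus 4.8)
The plan is to exploit the symmetry of $\cJ$ under the relabeling $(P,Q,m,n,G)\mapsto(Q,P,n,m,G^\top)$, which leaves $\cJ$ invariant, so that it suffices to analyze the minimization over $P$ with $Q$ fixed and then read off the $Q$-statement verbatim. First I would fix $Q\in\SSS_{++}^n$, drop the additive constant $m\log\det Q$, and collapse the damping term via $\mu\tr(Q^{-1})\tr(P^{-1})=\tr\!\big(P^{-1}\,[\mu\tr(Q^{-1})I_m]\big)$, so the $P$-dependent part of $\cJ$ is $\varphi(P):=\tr(P^{-1}S_Q)+n\log\det P$ with $S_Q:=\EE[GQ^{-1}G^\top]+\mu\tr(Q^{-1})I_m$. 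The integrability hypothesis makes $\EE[GQ^{-1}G^\top]$ finite, and since $\mu>0$ we have $S_Q\succeq\mu\tr(Q^{-1})I_m\succ0$.

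Next I would establish convexity and uniqueness through the substitution $X=P^{-1}$, a smooth bijection of $\SSS_{++}^m$ onto itself, under which $\varphi$ becomes $\psi(X)=\tr(XS_Q)-n\log\det X$. This is the form in which the claim is transparent: $X\mapsto\tr(XS_Q)$ is linear and $X\mapsto-\log\det X$ is strictly convex on $\SSS_{++}^m$, so $\psi$ is strictly convex; it is also coercive, since $\tr(XS_Q)\ge\lambda_{\min}(S_Q)\tr(X)\to\infty$ as $\|X\|\to\infty$ while $-n\log\det X\to\infty$ as $X$ approaches the boundary of the cone. Hence $\psi$, and therefore $\cJ(\cdot,Q)$, has a unique global minimizer. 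If one insists on arguing in $P$ directly, the same conclusion follows from $\varphi$ having a single interior stationary point together with $\varphi(P)\to\infty$ both as $P\to\partial\SSS_{++}^m$ and as $\|P\|\to\infty$.

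I would then pin down the minimizer by the first-order condition. Using $d\,\tr(P^{-1}S_Q)=-\tr(P^{-1}S_QP^{-1}dP)$ and $d\log\det P=\tr(P^{-1}dP)$ gives $\nabla_P\varphi(P)=-P^{-1}S_QP^{-1}+nP^{-1}$; setting this to zero and left-multiplying by $P$ yields $S_QP^{-1}=nI_m$, i.e. $P=\tfrac1nS_Q=\tfrac1n\EE[GQ^{-1}G^\top]+\tfrac{\mu\tr(Q^{-1})}{n}I_m$, which is the asserted $P^*(Q)$; by the previous paragraph it is the unique global minimizer. (In the $X$-variable this is quicker still: $\nabla_X\psi=S_Q-nX^{-1}=0$ forces $X=\tfrac1nS_Q^{-1}$.) Finally, applying the transposition symmetry produces the companion formula $Q^*(P)=\tfrac1m\EE[G^\top P^{-1}G]+\tfrac{\mu\tr(P^{-1})}{m}I_n$ for fixed $P$, and the two identities are exactly the stated coupled fixed-point equations.

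The main obstacle is purely the convexity/uniqueness part: $\varphi$ is \emph{not} convex as a function of $P$ on all of $\SSS_{++}^m$ — already visible for $m=1$, where $s/p+n\log p$ has an inflection point — so one must either pass to $X=P^{-1}$, in which strict convexity and coercivity are immediate, or argue global optimality from uniqueness of the stationary point plus blow-up at the boundary and at infinity. Once that is secured, the closed forms follow from routine matrix calculus and the vectorization/Kronecker identities already recorded before the theorem, and the $Q$-block requires no further work.
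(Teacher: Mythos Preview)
Your computational core---fix $Q$, absorb the damping into $S_Q:=\EE[GQ^{-1}G^\top]+\mu\tr(Q^{-1})I_m$, compute $\nabla_P\varphi=-P^{-1}S_QP^{-1}+nP^{-1}$, solve for $P=\tfrac1nS_Q$, and invoke symmetry for $Q$---is exactly what the paper does in its appendix proof.

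Where you diverge is the convexity/uniqueness step, and here your argument is the more careful one. The paper simply asserts that $P\mapsto\tr(P^{-1}S)+n\log\det P$ with $S\succ0$ is strictly convex on $\SSS_{++}^m$ and concludes that the stationary point is the unique minimizer. You correctly flag that this assertion is false: already in the scalar case $\varphi(p)=s/p+n\log p$ has $\varphi''(p)=(2s-np)/p^3$, which changes sign at $p=2s/n$, so $\varphi$ is not convex on the whole cone (and in fact the theorem's claim ``$\cJ$ is convex in $P$'' is literally incorrect as stated). Your remedy---reparametrize by $X=P^{-1}$ so that the objective becomes the manifestly strictly convex and coercive $\psi(X)=\tr(XS_Q)-n\log\det X$---is the standard clean fix and yields uniqueness of the minimizer rigorously. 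So the two proofs coincide on the calculus but differ on the justification of uniqueness: the paper's is a one-line assertion that does not actually hold, while yours supplies a correct argument via change of variables (or, as you also note, via coercivity plus a unique stationary point).
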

The proof of \cref{thm:opt-Kron-approx} is provided in Appendix~\ref{app:best-pq-thm}.

\subsection{The Best Update}

With the best choice of $P$ and $Q$, we now answer the question~\ref{Q:2} to find the optimal solution for Problem \eqref{eq:lmo}. 
Applying a whitening change of variables to the objective, we discover a lower bound of \eqref{eq:lmo} by von Neumann trace inequality \ref{lem:tr-ineq}.
By constructing a feasible update that attains this bound, we obtain the following theorem, whose proof is placed in Appendix~\ref{app:update-thm}.

\begin{theorem}
    \label{thm:update}
    Suppose $G \in \RR^{m \times n}$ is a nonzero matrix, $P \in \SSS^{m}_{++}, Q \in \SSS^{n}_{++}$ are defined as in Theorem~\ref{thm:opt-Kron-approx}. Let $\eta>0$ be a given radius. Denote $\widetilde{G}:=P^{-1/2}GQ^{-1/2}$, and suppose $\widetilde{G}=U\Sigma V$ is the singular value decomposition of $\widetilde{G}$. Then the optimal solution to \eqref{eq:lmo} is
        \begin{align*}
            \Delta W^* &= -\eta P^{-1/2}UV^\top Q^{-1/2} \\
            &= -\eta P^{-1/2}\Polar(\widetilde{G})Q^{-1/2}.
        \end{align*}
    Moreover, the optimal objective value is 
    $$\min_{\|P^{1/2}\Delta W Q^{1/2}\|_2\le \eta} \langle G,\Delta W\rangle_F = -\eta\|\widetilde{G}\|_*.$$
 \end{theorem}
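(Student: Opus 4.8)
The plan is to reduce Problem~\eqref{eq:lmo} to the Muon-style spectral LMO by a whitening change of variables, and then to certify optimality via the von Neumann trace inequality. Concretely, I would set $X := P^{1/2}\Delta W Q^{1/2}$. Since $P\succ 0$ and $Q\succ 0$, the matrices $P^{1/2}$ and $Q^{1/2}$ are symmetric and invertible, so $\Delta W \mapsto X$ is a linear bijection of $\RR^{m\times n}$ onto itself with inverse $\Delta W = P^{-1/2} X Q^{-1/2}$. Under this substitution the feasible set becomes simply $\{\,X : \|X\|_2 \le \eta\,\}$, and by cyclicity of the trace the objective transforms as
\[
\langle G, \Delta W\rangle_F = \tr\!\big(G^\top P^{-1/2} X Q^{-1/2}\big) = \tr\!\big((P^{-1/2}GQ^{-1/2})^\top X\big) = \langle \widetilde G, X\rangle_F .
\]
Hence \eqref{eq:lmo} is \emph{equivalent} to $\min_{\|X\|_2\le\eta}\langle \widetilde G, X\rangle_F$: a bijection of feasible sets that preserves the objective value, so an optimal $X^*$ yields an optimal $\Delta W^* = P^{-1/2}X^* Q^{-1/2}$ with the same value.

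Next I would bound the reduced objective from below. Writing the (thin) SVD $\widetilde G = U\Sigma V^\top$ with $\Sigma=\diag(\sigma_1,\dots,\sigma_r)$ and $r=\rank(\widetilde G)\ge 1$ (nonzero since $G\neq 0$), the von Neumann trace inequality (Lemma~\ref{lem:tr-ineq}) applied to $-X$ gives, for every feasible $X$,
\[
\langle \widetilde G, X\rangle_F \;\ge\; -\sum_{i}\sigma_i(\widetilde G)\,\sigma_i(X) \;\ge\; -\Big(\max_i\sigma_i(X)\Big)\!\sum_{i}\sigma_i(\widetilde G) \;=\; -\|X\|_2\,\|\widetilde G\|_* \;\ge\; -\eta\,\|\widetilde G\|_* .
\]
To see this is tight, take $X^* := -\eta\,UV^\top = -\eta\,\Polar(\widetilde G)$, a scaled partial isometry, so $\|X^*\|_2=\eta$ (feasible) and $\langle \widetilde G, X^*\rangle_F = -\eta\,\tr(V\Sigma U^\top U V^\top) = -\eta\,\tr(\Sigma) = -\eta\|\widetilde G\|_*$. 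Thus $X^*$ is optimal with value $-\eta\|\widetilde G\|_*$, and mapping back through the bijection gives $\Delta W^* = -\eta\,P^{-1/2}\Polar(\widetilde G)Q^{-1/2}$, with the optimal value of \eqref{eq:lmo} equal to $-\eta\|\widetilde G\|_*$, as claimed.

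The routine-but-essential points are: justifying that the whitening map is a genuine bijection so the two problems are truly equivalent; pairing the singular values in nonincreasing order when invoking Lemma~\ref{lem:tr-ineq} and tracking the sign correctly (we want the \emph{minimum}, hence a negative lower bound). The one genuine subtlety, and the step I expect to need the most care, is the rank-deficient case: when $\widetilde G$ is not of full rank the thin-SVD factors $U,V$ (and hence $\Polar(\widetilde G)$) are not unique, and the minimizer itself is not unique — only the component of $X^*$ on the top singular subspace of $\widetilde G$ is pinned down, while the action on $\ker\widetilde G$ is free subject to the norm bound. Accordingly the theorem is best read as \emph{exhibiting} an optimal solution and identifying the optimal value; if one wants a uniqueness statement it must be qualified by "$\widetilde G$ has full rank" (equivalently $\Sigma\succ 0$). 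I would state this caveat explicitly and verify that the displayed $\Delta W^*$ is well-defined (independent of the SVD choice) precisely on the subspace that matters.
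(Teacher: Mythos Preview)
Your proposal is correct and follows essentially the same route as the paper: the whitening substitution $X=P^{1/2}\Delta W Q^{1/2}$ (the paper calls it $\Phi$), the trace-cyclicity reduction to $\langle \widetilde G, X\rangle_F$ over $\|X\|_2\le\eta$, the lower bound via Lemma~\ref{lem:tr-ineq}, and the explicit attainment at $-\eta\,UV^\top$ are all identical to the paper's argument. Your added remarks on the bijection and on non-uniqueness in the rank-deficient case are valid refinements that the paper does not spell out.
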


\begin{remark}
    Theorem~\ref{thm:update} gives a closed-form expression for the best update direction under the $(P,Q)$-preconditioned spectral-norm trust region. In particular, the optimal step is characterized by the polar factor $\Polar(\widetilde{G})=UV^\top$ of the preconditioned gradient $\widetilde{G}=P^{-1/2}GQ^{-1/2}$. These characterizations will be used as the main building block for the algorithmic construction in the next section.
\end{remark}

\begin{algorithm}[tb]
  \caption{\alg}
  \label{alg:kl}
  \begin{algorithmic}[1]
    \STATE {\bfseries Input:} $W_0 \in \RR^{m \times n}$; learning rate $\eta$;
    momentum $\beta$; EMA $\gamma$; damping factor $\mu$; $P_0\leftarrow I_m$; $Q_0\leftarrow I_n$; $M_0=0$
    \FOR{$t=1, \cdots , T$}
      \STATE Compute gradient $G_t = \frac{1}{B}\sum_{i=1}^B \nabla_W \ell(W_{t-1};\xi_{t,i})$
      \STATE {Update left preconditioner}\\
       \qquad $L_t \leftarrow \frac{1}{n}\,G_t\,Q_{t-1}^{-1}\,G_t^\top + \mu \frac{\mathrm{tr}(P_{t-1})}{m} I_m$\\
       \qquad $\widetilde{P}_t \leftarrow \gamma P_{t-1} + (1-\gamma)L_t$\\
       \qquad $P_t \leftarrow \sym\left(\frac{m}{\tr(\widetilde{P}_t)}\,\widetilde{P}_t \right)$\\
      \STATE {Update right preconditioner}\\
       \qquad $R_t \leftarrow \frac{1}{m}\,G_t^\top\,P_t^{-1}\,G_t + \mu \frac{\mathrm{tr}(Q_{t-1})}{n} I_n$\\
       \qquad $\widetilde{Q}_t \leftarrow \gamma Q_{t-1} + (1-\gamma)R_t$\\
       \qquad $Q_t \leftarrow \sym\left(\frac{n}{\tr(\widetilde{Q}_t)}\,\widetilde{Q}_t \right)$
      \STATE {Whiten gradient} $\widetilde{G}_t \leftarrow P_t^{-1/2}\,G_t\,Q_t^{-1/2}$
      \STATE $M_t \leftarrow \beta M_{t-1} + (1-\beta)\widetilde{G}_t$
      \STATE $\Delta W_t \leftarrow P_t^{-1/2}\,\mathrm{Polar}(M_t)\,Q_t^{-1/2}$
      \STATE {Update} $W_{t} \leftarrow W_{t-1} - \eta\,\Delta W_t$
    \ENDFOR
  \end{algorithmic}
\end{algorithm}

\section{ {\alg} Algorithm}
\subsection{Algorithm Design}
\label{subsec:alg-design}

Based on the results of \cref{thm:opt-Kron-approx} and \cref{thm:update}, we develop the Fisher-Information Structured Momentum Orthogonalization (FISMO) algorithm, presented in Algorithm~\ref{alg:kl}. The algorithm iteratively (i) updates the left and right preconditioners $(P_t,Q_t)$, (ii) whitens the gradient accordingly, and (iii) computes the optimal update in the whitened space. We now describe the practical implementation details.

\paragraph{Gauss--Seidel Preconditioner Updates.}
Theorem~\ref{thm:opt-Kron-approx} establishes that the optimal left preconditioner $P$ depends on the right factor $Q$, and vice versa. Consequently, $(P,Q)$ must be updated sequentially rather than simultaneously. We employ Gauss--Seidel iteration: as shown in Steps~4--5 of Algorithm~\ref{alg:kl}, we compute $P_t$ using $Q_{t-1}$, then compute $Q_t$ using the newly obtained $P_t$.

This sequential updating offers advantages over Jacobi-style iteration, which computes both factors from $(P_{t-1},Q_{t-1})$ simultaneously. First, the Gauss--Seidel method achieves superior conditioning by incorporating the updated left geometry when computing $Q_t$, thereby accelerating convergence of the whitening transform $P_t^{-1/2}(\cdot)Q_t^{-1/2}$. Second, it reduces memory requirements: after computing $P_t$, only the current value is retained, eliminating storage of $P_{t-1}$.

\paragraph{Preconditioner Stabilization.}
The optimal preconditioner $(P,Q)$ given in \cref{thm:opt-Kron-approx} are derived from instantaneous curvature estimates that may exhibit high variance in stochastic settings. To mitigate this, we employ three stabilization techniques in Algorithm~\ref{alg:kl}.

\emph{1) Exponential Moving Average.} Rather than directly setting $P_t\leftarrow L_t$ and $Q_t\leftarrow R_t$, we apply Exponential Moving Average (EMA) smoothing that:
$$
\widetilde{P}_t=\gamma P_{t-1}+(1-\gamma)L_t,
\quad
\widetilde{Q}_t=\gamma Q_{t-1}+(1-\gamma)R_t,
$$
where $\gamma \in [0,1]$ is the decay factor controlling the trade-off between stability and responsiveness. This recursive filtering scheme assigns exponentially decaying weights to historical observations, with recent estimates receiving higher weights than older ones. Specifically, the contribution of an observation from $k$ steps ago is weighted by $\gamma^k(1-\gamma)$, ensuring smooth adaptation to changing curvature while suppressing high-frequency noise from mini-batch sampling.

\emph{2) Identity Regularization.} The matrices $L_t$ and $R_t$ incorporate scaled-identity terms (e.g., $\mu \tr(P_{t-1})/m \cdot I_m$ for $L_t$) to maintain a minimum eigenvalue threshold. This regularization arises naturally from our theoretical framework in Section~\ref{sec:best-kronecker-approx}. In problem~\eqref{eq:best_kron_problem}, the damping factor $\mu$ stabilizes the Kronecker approximation; Theorem~\ref{thm:opt-Kron-approx} yields solutions that explicitly contain this $\mu I$ component. This ensures that preconditioners remain positive definite even when empirical factors are rank-deficient, thereby preventing numerical instability in directions with insufficient gradient information. The trace-based scaling factors $\tr(P_{t-1})/m$ and $\tr(Q_{t-1})/n$ maintain dimensional consistency with the current preconditioner magnitudes.

\emph{3) Trace Normalization.} Algorithm~\ref{alg:kl} symmetrizes and normalizes the EMA estimates that:
$$P_t \leftarrow \text{sym}\left(\frac{m}{\tr(\widetilde{P}_t)}\widetilde{P}_t\right), \quad Q_t \leftarrow \text{sym}\left(\frac{n}{\tr(\widetilde{Q}_t)}\widetilde{Q}_t\right).$$
This eliminates scale non-identifiability in the bilinear metric, since $(P,Q)$ and $(cP,c^{-1}Q)$ induce identical preconditioned geometries for any $c>0$. Fixing $\tr(P_t)=m$ and $\tr(Q_t)=n$ prevents scale drift between factors and ensures numerical stability. Combined with $\mu$-damping from Theorem~\ref{thm:opt-Kron-approx}, this normalization yields uniform bounds on $\|P_t^{-1/2}\|_2$ and $\|Q_t^{-1/2}\|_2$, which is essential for convergence analysis.

\paragraph{Structured Orthogonalized Momentum.}
Theorem~\ref{thm:update} establishes that the optimal update for Problem~\eqref{eq:lmo} requires an orthogonal transformation in preconditioned coordinates. Accordingly, FISMO maintains momentum in the whitened space and applies orthogonalization to determine the update direction.

After whitening the stochastic gradient using current preconditioners (Step 6, Algorithm~\ref{alg:kl}), we update the momentum buffer in the whitened space (Step 7, Algorithm~\ref{alg:kl}). Rather than using $\widetilde{G}_t$ directly, we compute the polar decomposition of the smoothed momentum, $\text{Polar}(M_t)$. This orthogonalization procedure combines exponential averaging for variance reduction with the structured direction prescribed by Theorem~\ref{thm:update} within the $(P_t,Q_t)$-geometry. The resulting update direction is transformed back to the original coordinates via
$$\Delta W_t = P_t^{-1/2}\text{Polar}(M_t)Q_t^{-1/2},$$
as derived in Theorem~\ref{thm:update}.

Since exact polar decomposition via SVD is computationally prohibitive at scale, we follow Muon \cite{jordan2024muon} and approximate $\text{Polar}(M_t)$ using Newton--Schulz iteration \cite{higham2008functions}. This iterative scheme requires only matrix multiplications, enabling efficient implementation on modern accelerator architectures.

% \begin{remark}
% \label{rem:stoch_grad}
% In many machine learning settings, the gradient matrix $G$ in \eqref{eq:lmo} is typically a stochastic gradient computed from a mini-batch
% $\BB_t=\{\xi_{t,1},\ldots,\xi_{t,B}\}$ of size $B$, where $\{\xi_{t,i}\}_{i=1}^B$ are assumed i.i.d.\ from the underlying data distribution $\cD$.
% Therefore, $G_t := \frac{1}{B}\sum_{i=1}^{B}\nabla_W \ell\!\left(W_{t-1};\xi_{t,i}\right)$.
% Throughout the paper, expectations are taken with respect to mini-batch sampling randomness unless stated otherwise.
% \end{remark}
\begin{remark}
In practice, exact gradient computation is often intractable, necessitating stochastic approximations. We therefore approximate the gradient matrix $G$ in \eqref{eq:lmo} by a mini-batch average in our algorithm implementation:
%\begin{equation}
$G_t = \frac{1}{B}\sum_{i=1}^{B}\nabla_W \ell(W_{t-1};\xi_{t,i})$,
%\end{equation}
where $\mathcal{B}_t=\{\xi_{t,1},\ldots,\xi_{t,B}\}$ denotes a mini-batch of size $B$ with samples drawn i.i.d.\ from the data distribution $\mathcal{D}$. %Unless otherwise specified, all expectations throughout this work are taken with respect to the mini-batch sampling randomness.
\end{remark}

\subsection{Convergence Analysis}

This subsection establishes convergence guarantees for Algorithm~\ref{alg:kl} in stochastic nonconvex optimization. Our analysis relies on the following standard assumptions.
\begin{assumption}[Lower-Bounded Objective]
The objective function $\cL$ is lower-bounded with an inferior of $\cL_*$ that:
    \label{ass:LB}
    $$\cL_*:=\inf_W\cL(W)>-\infty.$$
\end{assumption}
\begin{assumption}[$L$-Smoothness]
\label{ass:L-smooth}
The funciton $\cL(W)$ is Lipschitz smooth in sense of Frobenius norm, i.e., there exist a constant $L$ such that, for all $W_1,W_2 \in \RR^{m\times n}$, 
    $$\|\nabla\cL(W_1)-\nabla\cL(W_2)\|_F\le L\|W_1-W_2\|_F.$$
\end{assumption}
\begin{assumption}[Unbiased Stochastic Gradients]
\label{ass:unbias-grad}
    For all iteration $t$ and any random sample $\xi$, the stochastic gradient is unbiased that: $$\EE[\nabla \ell(W_{t-1}; \xi)|\cF_{t-1}]=\nabla \cL(W_{t-1}),$$
    where $\mathcal{F}_{t-1}$ is the $\sigma$-algebra generated by all random variables revealed up to iteration $t-1$.
\end{assumption}
\begin{assumption}[Bounded Variance]
    \label{ass:bv}
    For all iteration $t$ and any random sample $\xi$, there exists $\sigma^2<\infty$ such that $$\mathbb{E} \left[ \left\| \nabla \ell(W_{t-1}; \xi) - \nabla \mathcal{L}(W_{t-1}) \right\|_F^2 \,\middle|\, \cF_{t-1} \right] \leq \sigma^2.$$
\end{assumption}
\begin{assumption}[Bounded Gradient]
\label{ass:B-grad}
The stochastic gradient is bounded at each iteration $t$, i.e.,
there is a constant $G_*$ such that
$\|G_t\|_* \le G_*$ holds for all $t$.
\end{assumption}

Then we have the following lemma, whose proof can be found in Appendix~\ref{app:cvg-lemma}.
\begin{lemma} %Cvg-lemma
    \label{lem:cvg-lem}
    Suppose that Assumptions~\ref{ass:LB}--\ref{ass:bv} hold. Denote $K_{PQ}(t) = \|P_t^{-1/2}\|_2 \cdot\|Q_t^{-1/2}\|_2$. Then for each iteration $t\ge1$ in Algorithm~\ref{alg:kl}, the following one-step upper bound for $\cL(W_t)$ holds:
\begin{align}
    \label{eq:cvg-lem-final}
    \cL(W_t) &\leq \cL(W_{t-1}) 
    - \eta K_{PQ}(t)\| \nabla \mathcal{L}(W_{t-1}) \|_*\nonumber\\
    &\;+ 2\eta K_{PQ}(t)\| \nabla \mathcal{L}(W_{t-1}) - G_t \|_*\nonumber \\
    &\;+ 2\eta \| \widetilde{G}_t - M_t \|_* + \frac{L\eta^2}{2}\min(m,n)K_{PQ}^2(t). \nonumber
\end{align}
\end{lemma}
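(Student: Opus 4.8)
The plan is to carry out a one-step descent analysis in the style of normalized/steepest-descent methods, but relative to the preconditioned geometry: apply $L$-smoothness along the update, control the second-order term by the step size, and decompose the first-order term so as to peel off the mini-batch noise and the momentum staleness from a genuine descent contribution. Only \cref{ass:L-smooth} and pure matrix-norm algebra are needed for this deterministic one-step bound; the stochastic assumptions enter later, once expectations are taken.

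I would first apply \cref{ass:L-smooth} with $W_t-W_{t-1}=-\eta\,\Delta W_t$ and $\Delta W_t=P_t^{-1/2}\Polar(M_t)Q_t^{-1/2}$ to obtain
\[
\cL(W_t)\le\cL(W_{t-1})-\eta\,\langle\nabla\cL(W_{t-1}),\Delta W_t\rangle_F+\frac{L\eta^2}{2}\,\|\Delta W_t\|_F^2 .
\]
For the quadratic term, submultiplicativity gives $\|\Delta W_t\|_F\le\|P_t^{-1/2}\|_2\,\|\Polar(M_t)\|_F\,\|Q_t^{-1/2}\|_2$, and since $\Polar(M_t)=UV^\top$ is a partial isometry, $\|\Polar(M_t)\|_F^2=\rank(M_t)\le\min(m,n)$; hence $\|\Delta W_t\|_F^2\le\min(m,n)\,K_{PQ}^2(t)$, which produces the last term $\tfrac{L\eta^2}{2}\min(m,n)K_{PQ}^2(t)$ of the claimed bound.

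The core of the argument is the first-order term. Using symmetry of $P_t^{-1/2},Q_t^{-1/2}$ and cyclicity of the Frobenius inner product, $\langle\nabla\cL(W_{t-1}),\Delta W_t\rangle_F=\langle\widetilde{H}_t,\Polar(M_t)\rangle_F$ with $\widetilde{H}_t:=P_t^{-1/2}\nabla\cL(W_{t-1})Q_t^{-1/2}$, i.e. everything now lives in the whitened space where $\widetilde{G}_t$ and $M_t$ reside. I would write $\widetilde{H}_t=M_t+(\widetilde{G}_t-M_t)+(\widetilde{H}_t-\widetilde{G}_t)$, expand the inner product, and use: (i) self-polarity, $\langle M_t,\Polar(M_t)\rangle_F=\|M_t\|_*$; (ii) nuclear--operator duality, $|\langle X,\Polar(M_t)\rangle_F|\le\|X\|_*$ since $\|\Polar(M_t)\|_2\le 1$; (iii) $\|\widetilde{H}_t-\widetilde{G}_t\|_*=\|P_t^{-1/2}(\nabla\cL(W_{t-1})-G_t)Q_t^{-1/2}\|_*\le K_{PQ}(t)\,\|\nabla\cL(W_{t-1})-G_t\|_*$; and (iv) the reverse triangle inequality $\|M_t\|_*\ge\|\widetilde{H}_t\|_*-\|\widetilde{H}_t-\widetilde{G}_t\|_*-\|\widetilde{G}_t-M_t\|_*$. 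Adding and subtracting $\widetilde{G}_t$ and $M_t$ in this order keeps the momentum error from ever being amplified by $K_{PQ}(t)$ while the mini-batch error always is, and the two copies of each that arise match the factors $2$ in the statement. Collecting terms yields
\[
\langle\nabla\cL(W_{t-1}),\Delta W_t\rangle_F\ \ge\ \|\widetilde{H}_t\|_*-2\|\widetilde{G}_t-M_t\|_*-2K_{PQ}(t)\,\|\nabla\cL(W_{t-1})-G_t\|_* .
\]

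The remaining step — and the one I expect to be the main obstacle — is to lower-bound the preconditioned descent $\|\widetilde{H}_t\|_*$ by the controlled positive multiple $K_{PQ}(t)\,\|\nabla\cL(W_{t-1})\|_*$ appearing in the statement, using $\nabla\cL(W_{t-1})=P_t^{1/2}\widetilde{H}_t Q_t^{1/2}$ together with singular-value inequalities for products with positive definite factors. This is exactly where the preconditioner stabilization in \cref{alg:kl} is essential: the trace normalizations $\tr(P_t)=m$, $\tr(Q_t)=n$ together with the $\mu$-damping inherited from \cref{thm:opt-Kron-approx} keep the conditioning of $(P_t,Q_t)$ — and hence $K_{PQ}(t)$ — uniformly controlled, so the anisotropic distortion that the Kronecker-factored whitening inflicts on the nuclear norm is exactly what is absorbed into $K_{PQ}(t)$. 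Substituting the second-order bound, the first-order lower bound, and this last estimate back into the smoothness inequality and rearranging gives the stated one-step bound.
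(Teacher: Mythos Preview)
Your plan mirrors the paper's proof almost exactly: apply $L$-smoothness, pass to whitened coordinates via cyclicity, use $\langle M_t,\Polar(M_t)\rangle_F=\|M_t\|_*$ together with nuclear--operator duality and a triangle-inequality split through $\widetilde G_t$, and bound $\|\Delta W_t\|_F^2$ by $\min(m,n)K_{PQ}^2(t)$. The three-way decomposition $\widetilde H_t=M_t+(\widetilde G_t-M_t)+(\widetilde H_t-\widetilde G_t)$ is merely a repackaging of the paper's two-step split of $\widetilde\nabla\cL(W_{t-1})-M_t$ into $(\widetilde\nabla\cL(W_{t-1})-\widetilde G_t)+(\widetilde G_t-M_t)$; the resulting first-order lower bound
\[
\langle\nabla\cL(W_{t-1}),\Delta W_t\rangle_F\ \ge\ \|\widetilde H_t\|_*-2\|\widetilde G_t-M_t\|_*-2K_{PQ}(t)\|\nabla\cL(W_{t-1})-G_t\|_*
\]
is identical to the paper's intermediate inequality.

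The step you flag as ``the main obstacle'' is exactly where the paper and your plan both stall, and your instinct that something nontrivial is required is correct. What is needed to recover the stated lemma is the \emph{lower} bound $\|\widetilde H_t\|_*\ge K_{PQ}(t)\,\|\nabla\cL(W_{t-1})\|_*$, i.e.\ $\|P_t^{-1/2}XQ_t^{-1/2}\|_*\ge \|P_t^{-1/2}\|_2\|Q_t^{-1/2}\|_2\|X\|_*$. The paper simply invokes the Schatten--H\"older bound $\|P_t^{-1/2}XQ_t^{-1/2}\|_*\le\|P_t^{-1/2}\|_2\|X\|_*\|Q_t^{-1/2}\|_2$ at this point, but that is an \emph{upper} bound and, since the term enters with a minus sign, it moves the inequality the wrong way. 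Your alternative route via $\nabla\cL(W_{t-1})=P_t^{1/2}\widetilde H_tQ_t^{1/2}$ and H\"older yields only
\[
\|\widetilde H_t\|_*\ \ge\ \frac{\|\nabla\cL(W_{t-1})\|_*}{\|P_t^{1/2}\|_2\,\|Q_t^{1/2}\|_2},
\]
and $\|P_t^{1/2}\|_2^{-1}\|Q_t^{1/2}\|_2^{-1}$ equals $K_{PQ}(t)$ only when $P_t,Q_t$ are scalar multiples of the identity. Trace normalization and $\mu$-damping bound the condition numbers, so they keep this coefficient uniformly positive, but they do not collapse it to $K_{PQ}(t)$. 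In short: your approach coincides with the paper's, and you have correctly located the one genuinely delicate step; however, neither the route you sketch nor the paper's own argument delivers the factor $K_{PQ}(t)$ in front of the descent term as literally claimed.
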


We aggregate Lemma~\ref{lem:cvg-lem} over iterations to obtain the main convergence theorem. The detailed proof of the theorem is deferred to Appendix~\ref{app:cvg-thm}.

\begin{theorem}[Main convergence theorem]
\label{thm:main_convergence_kl_muon_bigO}
Suppose the Assumptions~\ref{ass:LB}--\ref{ass:bv} holds. Define $R:=\cL(W_0)-\cL_\star$ with $\cL_\star:=\inf_W \cL(W)$, and let $r:=\min(m,n)$. Suppose the stochastic gradient magnitude is bounded by $\|G_t\|_*\le G_*$ for all $t$. Run Algorithm~\ref{alg:kl} for $T$ iterations with stepsize $\eta=\frac{C}{\sqrt{T}}$ for some constant $C>0$. Then the iterates $\{W_t\}_{t=0}^{T-1}$ satisfy:
\begin{equation*}
\label{eq:main_convergence_bigO}
\begin{aligned}
&\frac{1}{T}\sum_{t=1}^T \EE\left[\|\nabla \cL(W_{t-1})\|_*\right]\\
&=\left(\frac{R + rL + G_*}{\sqrt{T}} +\frac{G_*}{T}+\frac{\sigma\sqrt{r}}{\sqrt{B}}\right).
\end{aligned}
\end{equation*}

\end{theorem}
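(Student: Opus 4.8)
The plan is to aggregate the one-step descent bound from \cref{lem:cvg-lem} over $t=1,\dots,T$, take expectations, and control each of the three error terms using the stated assumptions. First I would telescope: summing \eqref{eq:cvg-lem-final} from $t=1$ to $T$ and using $\cL(W_T)\ge \cL_\star$ gives
\begin{align*}
\eta\sum_{t=1}^T K_{PQ}(t)\,\EE\|\nabla\cL(W_{t-1})\|_*
&\le R + 2\eta\sum_{t=1}^T K_{PQ}(t)\,\EE\|\nabla\cL(W_{t-1})-G_t\|_* \\
&\quad + 2\eta\sum_{t=1}^T \EE\|\widetilde G_t-M_t\|_* + \tfrac{L\eta^2}{2}\,r\sum_{t=1}^T \EE[K_{PQ}^2(t)].
\end{align*}
The crucial structural ingredient is that the trace normalization and $\mu$-damping (discussed after \cref{thm:opt-Kron-approx}) force $\|P_t^{-1/2}\|_2$ and $\|Q_t^{-1/2}\|_2$ to be bounded above and below by absolute constants, so $K_{PQ}(t)$ lies in a fixed interval $[\kappa_{\min},\kappa_{\max}]\subset(0,\infty)$; this lets me lower-bound the left side by $\eta\kappa_{\min}\sum_t\EE\|\nabla\cL(W_{t-1})\|_*$ and upper-bound the last term by $\tfrac{L\eta^2}{2}r\kappa_{\max}^2 T$.

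Next I would bound the stochastic-gradient error term. Since $G_t$ is a $B$-sample mini-batch average of unbiased gradients with per-sample variance $\sigma^2$ (\cref{ass:unbias-grad,ass:bv}), we have $\EE\|\nabla\cL(W_{t-1})-G_t\|_F^2\le \sigma^2/B$, and via the nuclear/Frobenius relation $\|X\|_*\le\sqrt{r}\,\|X\|_F$ together with Jensen, $\EE\|\nabla\cL(W_{t-1})-G_t\|_*\le\sqrt{r}\cdot\sigma/\sqrt{B}$. Summed over $T$ and multiplied by $2\eta\kappa_{\max}$ this contributes a $\cO(\eta T\sigma\sqrt{r}/\sqrt{B})$ term; after dividing by $\eta\kappa_{\min}T$ it becomes the $\sigma\sqrt{r}/\sqrt{B}$ term in the statement. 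For the momentum-lag term $\sum_t\EE\|\widetilde G_t-M_t\|_*$, I would unroll the EMA recursion $M_t=\beta M_{t-1}+(1-\beta)\widetilde G_t$ to write $\widetilde G_t-M_t$ as a geometrically weighted sum of differences $\widetilde G_t-\widetilde G_{t-k}$, then bound each such difference using \cref{ass:B-grad} (giving $\|\widetilde G_t\|_*\le G_*$ after absorbing the bounded preconditioner norms into the constant) or, more carefully, by a smoothness-plus-step-size telescoping argument so that the total is $\cO(G_*)$ uniformly — i.e. the whole momentum-lag sum is $\cO(G_*)$ rather than growing with $T$, yielding the isolated $G_*/T$ term after normalization.

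Collecting terms, dividing through by $\eta\kappa_{\min}T$ with $\eta=C/\sqrt T$, gives
\[
\frac{1}{T}\sum_{t=1}^T\EE\|\nabla\cL(W_{t-1})\|_*
\le \frac{1}{\kappa_{\min}}\!\left(\frac{R/C + \tfrac{L r C\kappa_{\max}^2}{2}}{\sqrt T} + \frac{\cO(G_*)}{\eta\kappa_{\min}T}\cdot\frac{1}{1} + 2\kappa_{\max}\frac{\sigma\sqrt r}{\sqrt B}\right),
\]
which after bundling constants is exactly $\cO\!\big((R+rL+G_*)/\sqrt T + G_*/T + \sigma\sqrt r/\sqrt B\big)$ as claimed (the $rL$ comes from the $\tfrac{L\eta^2}{2}r$ step-size term, the $R$ from the telescoped objective gap, and the $G_*$ terms from the momentum lag). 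The main obstacle I anticipate is the momentum-lag bound: showing $\sum_{t=1}^T\EE\|\widetilde G_t-M_t\|_*$ is $\cO(G_*)$ independent of $T$ requires carefully handling the interaction between the EMA of the preconditioners (so that $\widetilde G_t$ itself is slowly varying) and the momentum buffer, and getting the constant to collapse to a clean $G_*$ rather than something like $G_*\beta/(1-\beta)$ may need the uniform bound from \cref{ass:B-grad} applied somewhat crudely; a secondary subtlety is verifying that the constants $\kappa_{\min},\kappa_{\max}$ genuinely follow from trace normalization plus $\mu$-damping under the mini-batch gradient model, which relies on $\|G_t\|$ being controlled (again \cref{ass:B-grad}).
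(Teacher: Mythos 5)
Your overall skeleton matches the paper's: telescope \cref{lem:cvg-lem}, bound $K_{PQ}(t)\in[\underline K,\overline K]$ via the trace normalization and $\mu$-damping, control the noise term by $\|\cdot\|_*\le\sqrt r\|\cdot\|_F$ plus Jensen and \cref{lem:var-bd}, and normalize by $\eta T$ with $\eta=C/\sqrt T$. The quadratic-term, noise-term, and $K_{PQ}$-bound steps are all correct and essentially identical to the paper's.

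The gap is in your treatment of the momentum-lag term. You claim $\sum_{t=1}^T\EE\|\widetilde G_t-M_t\|_*=\cO(G_*)$ uniformly in $T$, but this is false, and no telescoping will save it. Unrolling the EMA (the paper's Lemma~\ref{lem:ema-update-bd}) gives
\[
\sum_{t=1}^{T}\|\widetilde G_t-M_t\|_*
\le \frac{\beta}{1-\beta}\sum_{t=2}^{T}\|\widetilde G_t-\widetilde G_{t-1}\|_*
+\frac{\beta}{1-\beta}\sup_{t\le T}\|\widetilde G_t\|_*.
\]
The second piece is indeed $\cO(G_*)$, but the first piece is a sum of $T-1$ \emph{nonnegative} quantities, each only $\cO(\eta)$ (it needs the preconditioner drift bound $\|P_t^{-1/2}-P_{t-1}^{-1/2}\|_2=\cO(\eta)$ from Lemma~\ref{lem:drift-PQ-bdd} \emph{and} $L$-smoothness to control $\|G_t-G_{t-1}\|_*=\cO(\eta)$). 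There is no sign cancellation, so this sum is $\Theta(T\eta)=\Theta(\sqrt T)$, not $\cO(1)$. When divided by $T$, it contributes $\cO(\eta)=\cO(1/\sqrt T)$; this is precisely where the $G_*$ and the extra $rL$ in the $\tfrac{R+rL+G_*}{\sqrt T}$ numerator come from, while only the $\sup_t$ piece yields the isolated $G_*/T$. Your version would silently drop the $G_*/\sqrt T$ contribution on faulty grounds.

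A secondary issue: the drift bound $\|P_t^{-1/2}-P_{t-1}^{-1/2}\|_2\le C_P\eta$ (and likewise for $Q$) requires the EMA rate to be tied to the step size via $1-\gamma\le c_\gamma\eta$ (Lemma~\ref{lem:drift-PQ-bdd}); without this coupling, $\widetilde G_t$ need not be slowly varying at the $\cO(\eta)$ scale, and the increment bound you want to invoke does not hold. You gesture at "the EMA of the preconditioners" keeping $\widetilde G_t$ slowly varying, but this hypothesis should be stated explicitly; it is not implied by Assumptions~\ref{ass:LB}--\ref{ass:B-grad} alone.
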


\paragraph{Remarks.}
The bound in \eqref{eq:main_convergence_bigO} separates the optimization error from the stochastic sampling error.
In particular, the term $\cO(\sigma\sqrt{r}/\sqrt{B})$ is unavoidable when using mini-batch stochastic gradients: it captures the intrinsic variance of the gradient estimator and can only be reduced by increasing the batch size $B$.
If we ignore this stochastic term (e.g., in the large-batch or deterministic regime), the remaining terms imply an $\cO(1/\sqrt{T})$ convergence rate with respect to the number of iterations $T$, matching the standard nonconvex rate achieved by Muon (up to problem- and dimension-dependent constants) \cite{shen2025convergence, li2025note}.
%Therefore, although Algorithm~\ref{alg:kl} involves additional steps (e.g., maintaining structured preconditioners $P$ and $Q$), these operations do not degrade the asymptotic convergence order in $T$.

\section{Experimental Results}

To evaluate the performance of the proposed \alg optimizer, we conduct experiments on two standard benchmarks: (i) the OpenWebText \cite{Gokaslan2019OpenWeb} corpus for language modeling, and (ii) the CIFAR-10 \cite{krizhevsky2009learning} dataset for image classification. For the language modeling task, we utilize the NanoGPT \cite{karpathy2022nanogpt} framework to reproduce a 124M-parameter GPT-2 \cite{radford2019language} model. For CIFAR-10, we employ the SimpleDLA \cite{yu2018deep} architecture. 
We compare \alg with several established and state-of-the-art baseline optimizers, including widely adopted methods like SGD \cite{robbins1951stochastic} and AdamW \cite{loshchilov2017decoupled}, as well as advanced algorithms such as Shampoo \cite{gupta2018shampoo} and Muon \cite{jordan2024muon}.

\begin{figure}[ht]
  \centering

  \begin{subfigure}[t]{\columnwidth}
    \centering
    \includegraphics[width=0.95\linewidth]{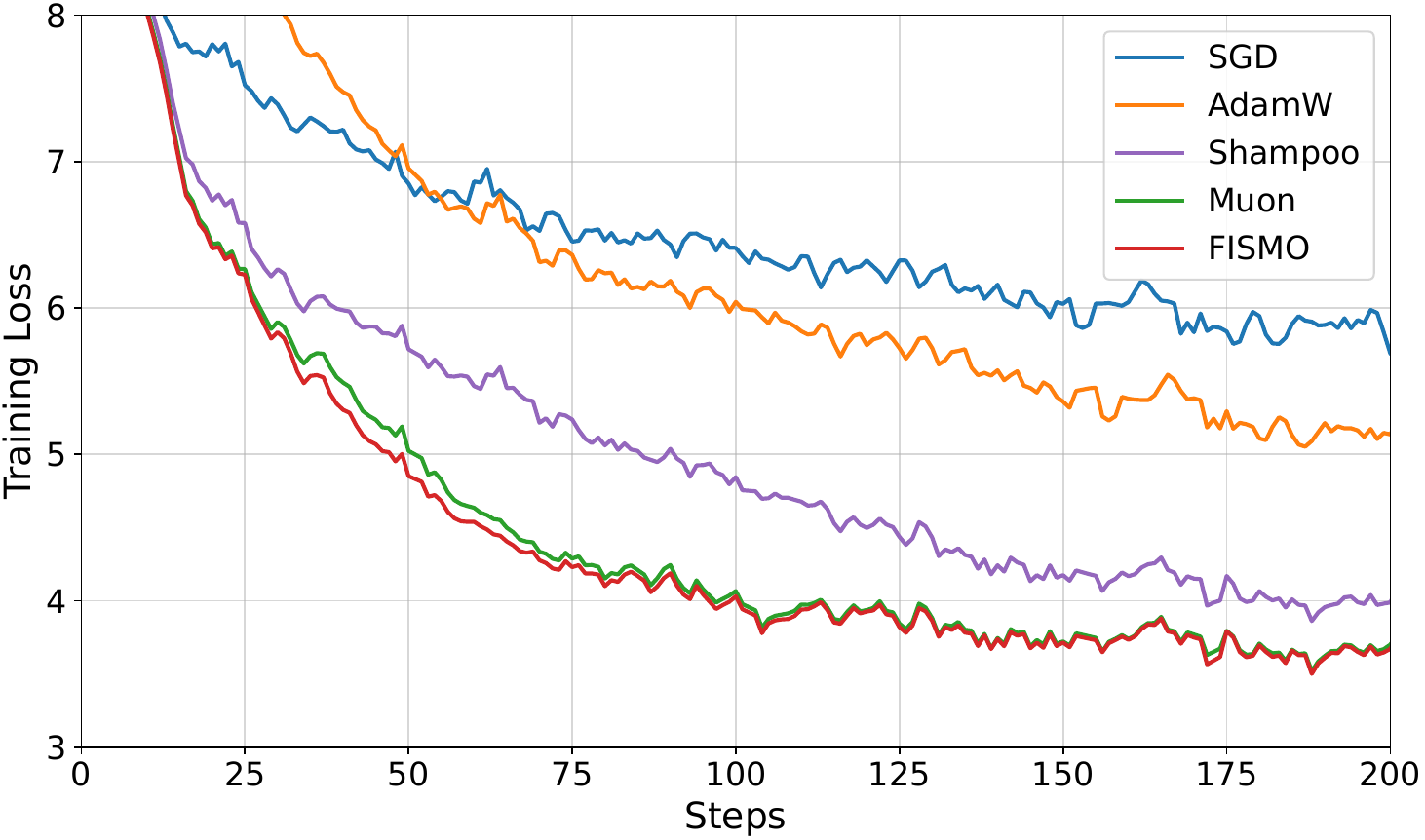}
    \caption{Training loss}
    \label{fig:r1-train}
  \end{subfigure}
  \begin{subfigure}[t]{\columnwidth}
    \centering
    \includegraphics[width=0.95\linewidth]{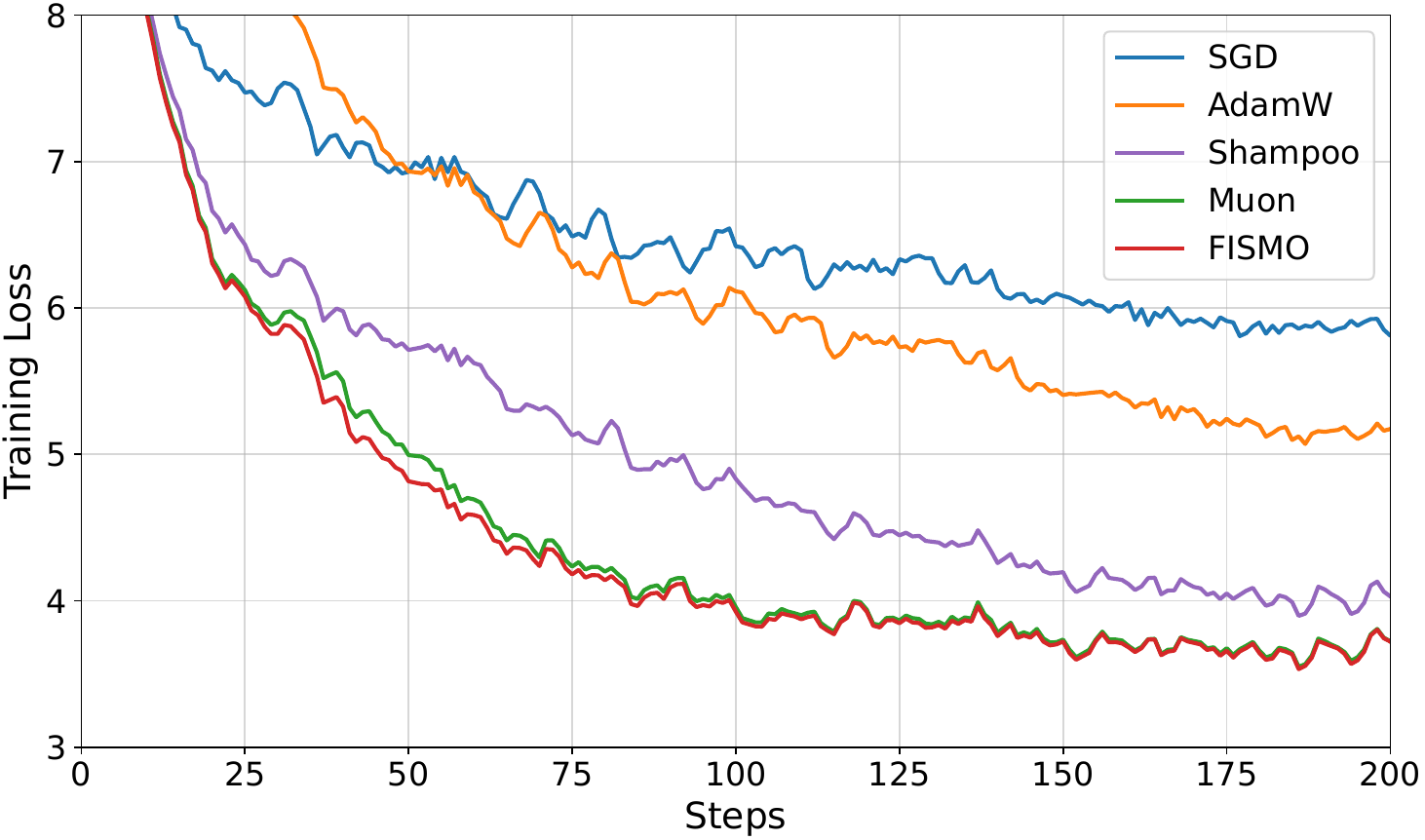}
    \caption{Validation loss}
    \label{fig:r1-val}
  \end{subfigure}

  \caption{Training and validation loss versus the number of training steps on the OpenWebText dataset (nanoGPT).}
  \label{fig-r1}
\end{figure}

Figure~\ref{fig-r1} reports the training (\ref{fig:r1-train}) and validation (\ref{fig:r1-val}) loss versus training steps for the 124M-parameter GPT-2 model trained on OpenWebText under the NanoGPT framework, comparing FISMO with baselines SGD, AdamW, Shampoo and Muon. Across the whole training horizon, \alg achieves the fastest loss reduction and the lowest final loss over the entire trajectory, surpassing even the strongest baseline Muon. The consistent ordering between training and validation suggests that the advantage of FISMO is not due to aggressive fitting but reflects more effective and stable optimization dynamics, aligning with FISMO’s update mechanism that preconditions matrix gradients under a Fisher-inspired geometry while avoiding the overly isotropic spectrum imposed by strict orthogonalization.

Figure~\ref{fig:cifar} illustrates the comparison results of FISMO and the baselines on the CIFAR-10 image classification task.
The results show that our method FISMO maintains a clear advantage across all four metrics: it drives the training loss down more rapidly, sustains the lowest testing loss over the full trajectory, and consequently achieves the highest training and testing accuracies throughout training. 
Moreover, compared with Muon, we observe that FISMO yields a noticeably smoother validation trajectory: both the testing loss (\ref{fig:four-valloss}) decreases and the testing accuracy (\ref{fig:four-valacc}) increases with fewer oscillations across training. A plausible explanation is that FISMO preserves informative curvature variation in matrix updates, instead of enforcing an aggressively isotropic step. Therefore, it avoids over-correction and improving the stability of generalization-oriented progress.

\begin{figure*}[!t]
  \centering

  % Row 1
  \begin{subfigure}[t]{0.24\textwidth}
    \centering
    \includegraphics[width=\linewidth]{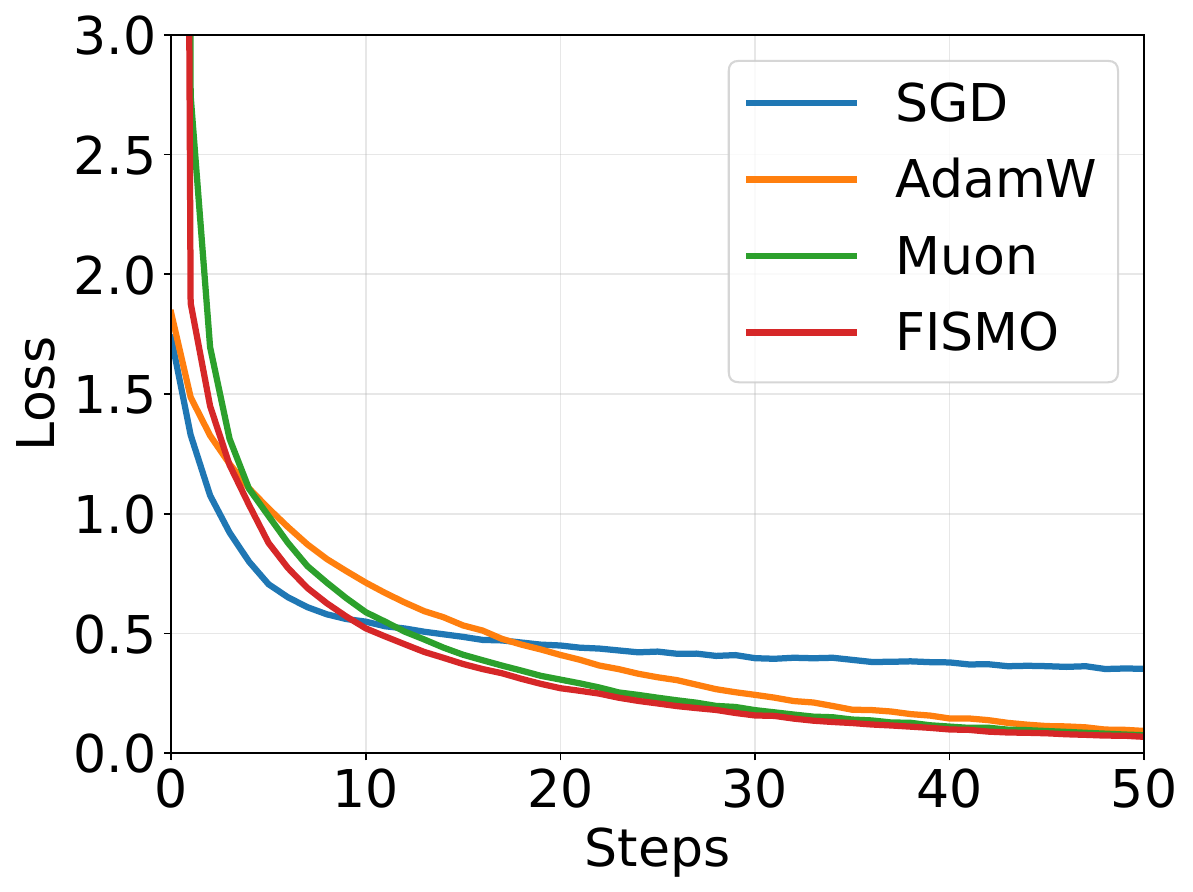}
    \caption{Training loss}
    \label{fig:four-trainloss}
  \end{subfigure}
  \hfill
  \begin{subfigure}[t]{0.24\textwidth}
    \centering
    \includegraphics[width=\linewidth]{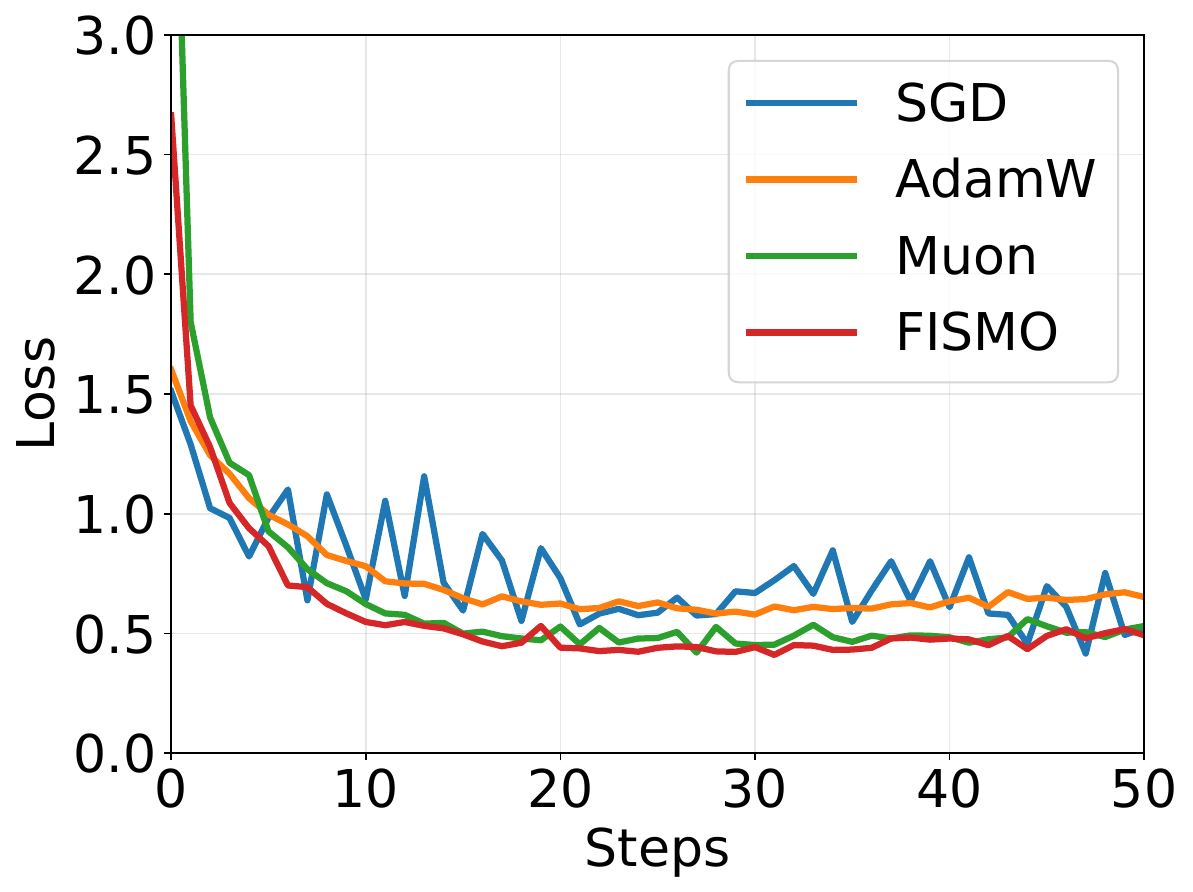}
    \caption{Validation loss}
    \label{fig:four-valloss}
  \end{subfigure}
  % Row 2
  \begin{subfigure}[t]{0.24\textwidth}
    \centering
    \includegraphics[width=\linewidth]{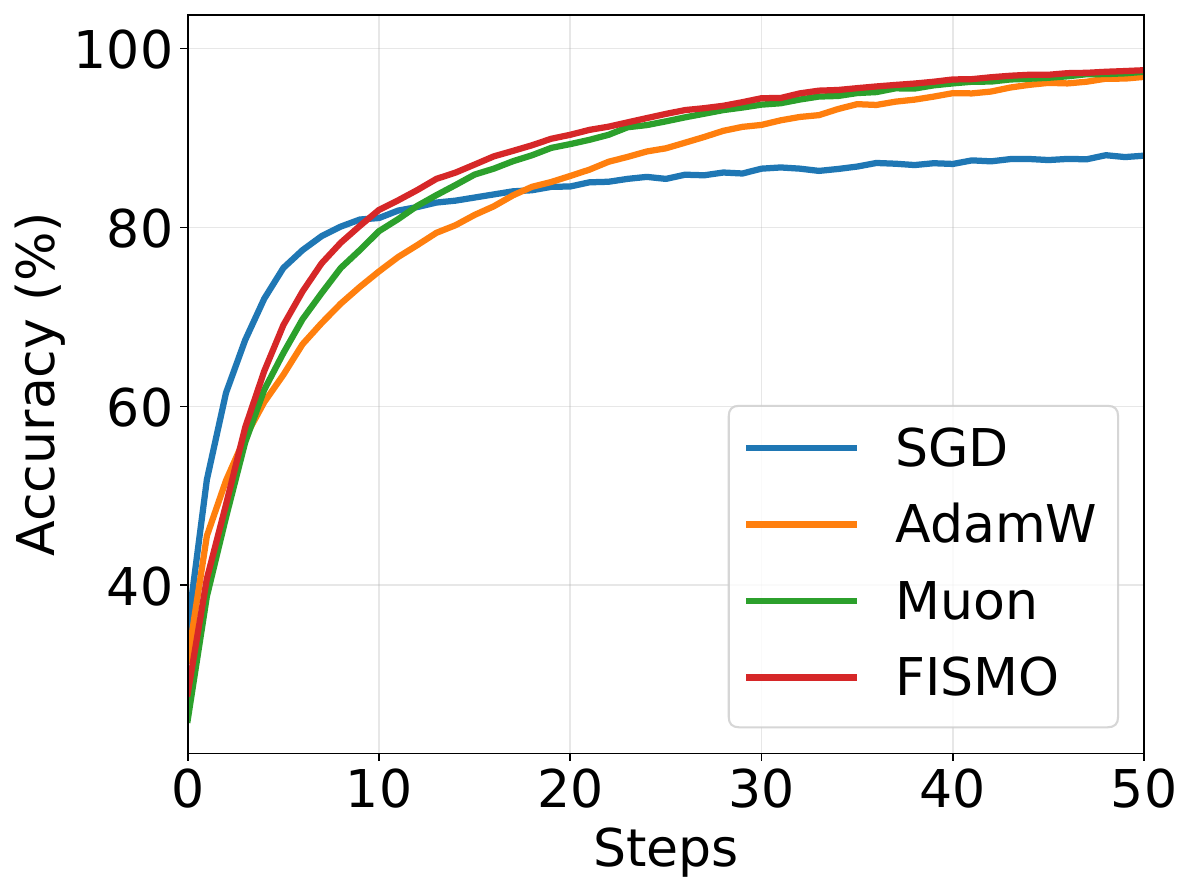}
    \caption{Training accuracy}
    \label{fig:four-trainacc}
  \end{subfigure}
  \hfill
  \begin{subfigure}[t]{0.24\textwidth}
    \centering
    \includegraphics[width=\linewidth]{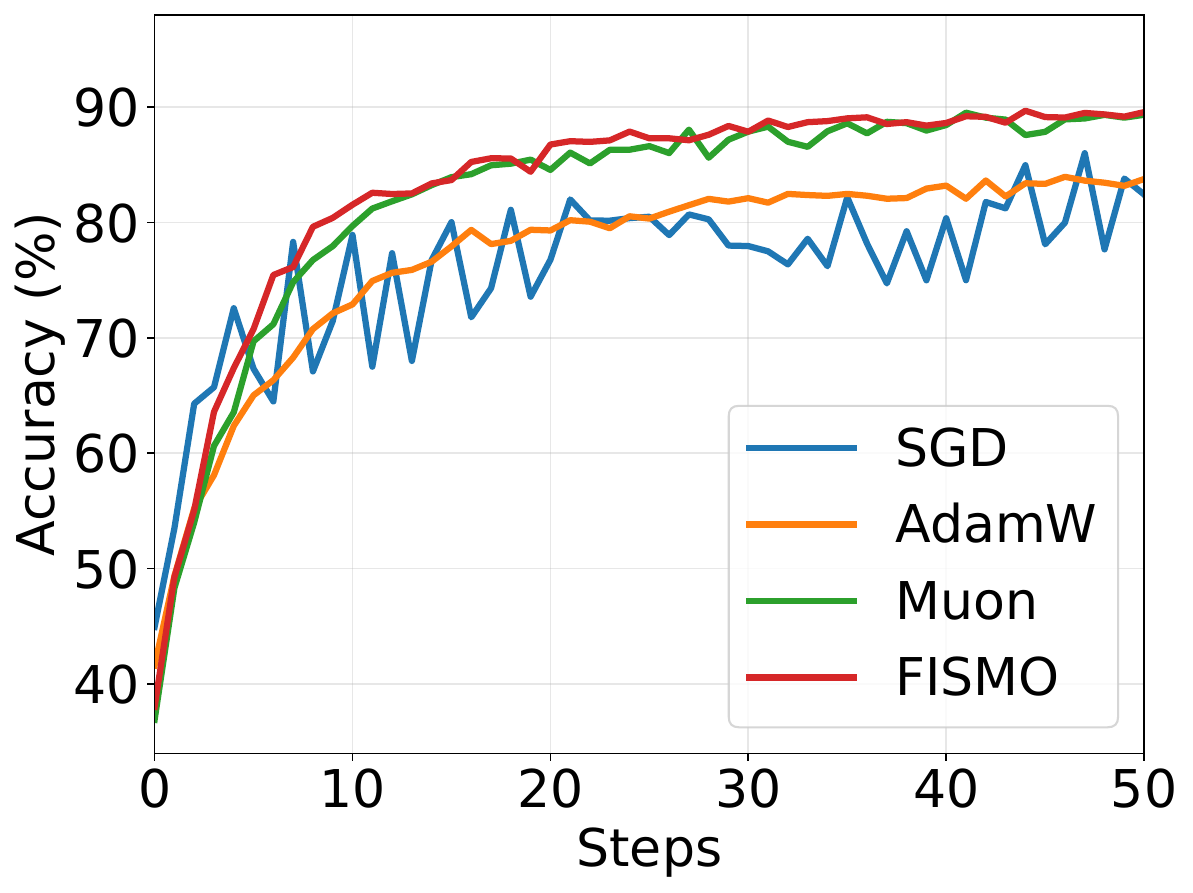}
    \caption{Validation accuracy}
    \label{fig:four-valacc}
  \end{subfigure}

  \caption{Traning/validation loss/accuracy versus the number of steps on the CIFAR-10 dataset.}
  \label{fig:cifar}
\end{figure*}

\section{Further Discussion on FISMO: A Condition Number Perspective}

To further understand the geometric efficacy of FISMO, we analyze the spectral properties of the update matrices, specifically focusing on the condition number. The condition number of a matrix (denoted as $\kappa$) is defined as the ratio of its greatest and lowest singular value, where a high condition indicates a highly anisotropic spectrum dominated by a few principal directions \cite{saad2003iterative}. Standard adaptive methods like Adam rely on element-wise scaling, often resulting in ill-conditioned updates with unconstrained global spectra. 
In contrast, Muon enforces strict orthogonality to achieve an isotropic update with an approximate condition number of 1, treating all spectral directions equally \cite{lau2025polargrad}.

As illustrated in Figure~\ref{fig:con_num}, we track the average condition number of the update matrices throughout training, comparing FISMO against Adam and Muon with Newton-Schulz (NS) iteration numbers of 5 (NS=5) and 7 (NS=7). Note that increasing NS iterations in Muon will enforce stricter orthogonalization, driving the update towards the ideal isotropic limit ($\kappa \to 1$). Generally, we observe the following relation of condition number among those optimizers.
\begin{equation*}
    \kappa_{\text{Adam}} \gg \kappa_{\text{FISMO}} > \kappa_{\text{Muon Practical}} > \kappa_{\text{Muon Ideal}} = 1.
\end{equation*}
As observed, Adam exhibits pathologically high condition numbers ($\kappa > 10^8$). In contrast, increasing Muon's NS iterations progressively flattens the spectrum, with NS=7 approaching the ideal isotropic limit. Crucially, FISMO establishes a stable spectral profile in the range of $10^2$--$10^3$. This represents a significant reduction in ill-conditioning compared to Adam (by several orders of magnitude) while maintaining a condition number generally higher than the orthogonalized updates from Muon.

We characterize this regime as an \textit{optimal conditioning trade-off}: FISMO effectively mitigates severe spectral skewness to ensure stability, yet retains a degree of anisotropy necessary to preserve relative curvature information that strict orthogonalization would eliminate. 

These spectral properties perfectly align with the recently proposed \textit{Isotropic Curvature Model} \cite{su2025isotropic}, which challenges the optimality of strict gradient orthogonalization. \citeauthor{su2025isotropic} demonstrates that under realistic super-quadratic curvature growth, the theoretically optimal update requires (partial) spectrum homogenization: the singular values should be compressed to be closer in magnitude to improve conditioning, but \textit{not necessarily forced to be completely equal}. 
Specifically, the optimal spectral transformation is rarely perfectly uniform ($\kappa=1$), because practical loss landscapes typically do not exhibit the extreme asymptotic behavior required to justify strict orthogonalization.
This implies that there may exist update mechanisms superior to Muon that operate with a condition number theoretically larger than 1.
By maintaining a condition number that is low but bounded away from unity, \alg effectively implements partial spectrum homogenization, navigating the middle ground between the chaotic anisotropy of element-wise methods and the rigid isotropy of fully orthogonalized updates.

\begin{figure}[t]
  \begin{center}    \centerline{\includegraphics[width=\columnwidth]{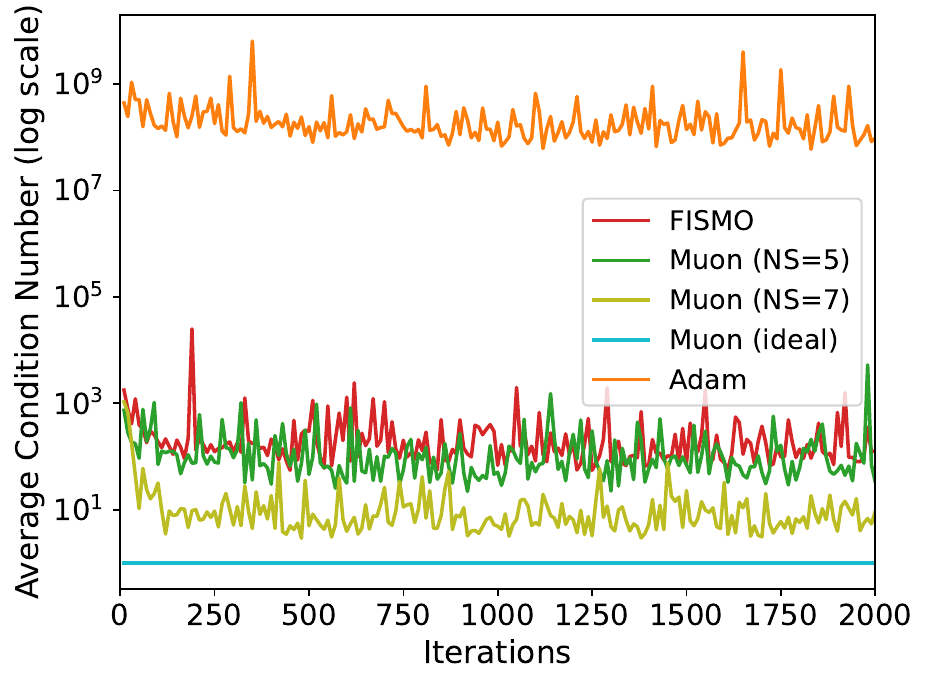}}
    \caption{Average condition number of the update matrices over training iterations. FISMO is compared with Adam and Muon (with 5 and 7 Newton-Schulz iterations), alongside the ideal isotropic Muon (strictly orthogonalized update).}
    \label{fig:con_num}
  \end{center}
\end{figure}

\section{Conclusions}
We proposed \textbf{FISMO}, a Fisher-structured momentum-orthogonalized optimizer for large-scale training with matrix-parameter updates.
By casting update as a trust-region problem under a Kronecker-factored Fisher metric, FISMO yields a structured preconditioning update rule that remains tractable while preserving informative spectral variation beyond strict isotropy. 
Through rigorous analysis, we prove that FISMO achieves an $\mathcal{O}(1/\sqrt{T})$ convergence guarantee for stochastic optimization.
Extensive experiments validate the efficiency and stability of FISMO.

% \section*{Accessibility}

% Authors are kindly asked to make their submissions as accessible as possible
% for everyone including people with disabilities and sensory or neurological
% differences. Tips of how to achieve this and what to pay attention to will be
% provided on the conference website \url{http://icml.cc/}.

% \section*{Software and Data}
% Acknowledgements should only appear in the accepted version.

\section*{Impact Statement}
This paper presents work whose goal is to advance the field
of Machine Learning. There are many potential societal
consequences of our work, none of which we feel must be
specifically highlighted here.

% In the unusual situation where you want a paper to appear in the
% references without citing it in the main text, use \nocite

\bibliography{main}
\bibliographystyle{icml2026}

%%%%%%%%%%%%%%%%%%%%%%%%%%%%%%%%%%%%%%%%%%%%%%%%%%%%%%%%%%%%%%%%%%%%%%%%%%%%%%%
%%%%%%%%%%%%%%%%%%%%%%%%%%%%%%%%%%%%%%%%%%%%%%%%%%%%%%%%%%%%%%%%%%%%%%%%%%%%%%%
% APPENDIX
%%%%%%%%%%%%%%%%%%%%%%%%%%%%%%%%%%%%%%%%%%%%%%%%%%%%%%%%%%%%%%%%%%%%%%%%%%%%%%%
%%%%%%%%%%%%%%%%%%%%%%%%%%%%%%%%%%%%%%%%%%%%%%%%%%%%%%%%%%%%%%%%%%%%%%%%%%%%%%%
\newpage
\appendix
\onecolumn

\section{Preliminary Theorems and Lemmas on Matrix Analysis}

\begin{lemma}[Generalized von Neumann Trace Inequality (rectangular form)]
    \label{lem:tr-ineq}
Let $A,B \in \RR^{m\times n}$ be two given matrices, and let $p=\min\{m,n\}$. Suppose their singular values are $\sigma_1(A)\ge\sigma_2(A)\ge \cdots \ge \sigma_p(A)$ and $\sigma_1(B)\ge\sigma_2(B)\ge \cdots \ge \sigma_p(B)$. Then we have the following inequality
\begin{equation}
    |\tr(A^\top B)|\le\sum_{i=1}^{p}\sigma_i(A)\sigma_i(B).
\end{equation}
\end{lemma}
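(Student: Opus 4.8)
The plan is to prove the generalized von Neumann trace inequality by reducing the rectangular case to the classical square case, which states that $\mathrm{tr}(A^\top B) \le \sum_i \sigma_i(A)\sigma_i(B)$ whenever $A$ and $B$ are square matrices of the same size. First I would recall the two standard ingredients: (i) the classical von Neumann trace inequality for $p \times p$ matrices (proved via the SVD and the fact that the maximum of $\langle X, Y\rangle$ over orthogonal conjugations is attained when singular directions are aligned, or alternatively via the doubly-stochastic majorization argument), and (ii) the invariance of $\mathrm{tr}(A^\top B)$ and of the singular value spectra under the padding operation that embeds an $m \times n$ matrix into a larger square matrix by appending zero rows or columns.

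The key steps, in order, would be as follows. Assume without loss of generality that $m \le n$, so $p = m$; the other case is symmetric (or follows by transposition, since $\mathrm{tr}(A^\top B) = \mathrm{tr}(B^\top A)$ and singular values are transpose-invariant). Form the $n \times n$ square matrices $\widehat{A} := \begin{bmatrix} A \\ 0 \end{bmatrix}$ and $\widehat{B} := \begin{bmatrix} B \\ 0 \end{bmatrix}$ by stacking $(n-m)$ zero rows beneath $A$ and $B$ respectively. Observe that $\widehat{A}^\top \widehat{B} = A^\top B$ (the zero blocks contribute nothing), hence $\mathrm{tr}(\widehat{A}^\top \widehat{B}) = \mathrm{tr}(A^\top B)$. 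Next, note that $\widehat{A}^\top \widehat{A} = A^\top A$, so $\widehat{A}$ has the same nonzero singular values as $A$, namely $\sigma_1(A) \ge \cdots \ge \sigma_m(A)$, together with $n-m$ additional zero singular values; the same holds for $\widehat{B}$. Apply the classical square von Neumann inequality to $\widehat{A}, \widehat{B} \in \RR^{n\times n}$:
\begin{equation*}
|\mathrm{tr}(\widehat{A}^\top \widehat{B})| \le \sum_{i=1}^{n} \sigma_i(\widehat{A})\,\sigma_i(\widehat{B}) = \sum_{i=1}^{m} \sigma_i(A)\,\sigma_i(B),
\end{equation*}
where the last equality holds because all terms with index $i > m$ vanish (both factors being zero). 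Since $|\mathrm{tr}(\widehat{A}^\top \widehat{B})| = |\mathrm{tr}(A^\top B)|$, this is exactly the claimed bound with $p = \min\{m,n\}$.

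The main obstacle is not conceptual but presentational: one must either cite the classical square-case von Neumann trace inequality as a known result (e.g., from Horn and Johnson's \emph{Matrix Analysis}, already in the bibliography) or include its short proof. If a self-contained proof is desired, the cleanest route is to write the SVDs $\widehat{A} = U_A \Sigma_A V_A^\top$ and $\widehat{B} = U_B \Sigma_B V_B^\top$, substitute to get $\mathrm{tr}(\widehat{A}^\top\widehat{B}) = \mathrm{tr}(\Sigma_A Z \Sigma_B Z'^\top)$ for suitable orthogonal $Z, Z'$, and then bound this using the fact that the matrix with entries $|(V_A^\top U_B)_{ij}|^2$ (or the relevant product) is doubly substochastic, so Birkhoff's theorem plus the rearrangement inequality gives the sorted-product bound. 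I would likely just invoke the square case as standard and present only the padding reduction, since that is the genuinely new (if routine) content of the rectangular form.
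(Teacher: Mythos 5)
Your proof is correct. The paper itself does not prove this lemma; it simply cites it as a known result from Horn and Johnson's \emph{Topics in Matrix Analysis}, so there is no ``paper's own proof'' to compare against. What you have supplied is a clean and standard derivation of the rectangular form from the classical square form via zero-padding, and every step checks out: the blocks $\widehat{A}, \widehat{B} \in \RR^{n\times n}$ satisfy $\widehat{A}^\top \widehat{B} = A^\top B$ and $\widehat{A}^\top \widehat{A} = A^\top A$, so traces and nonzero singular values are preserved, and the extra $n-m$ zero singular values contribute nothing to the sorted-product sum. The transposition argument for the case $m > n$ is also sound, since $\tr(B^\top A) = \tr\bigl((A^\top B)^\top\bigr) = \tr(A^\top B)$ and singular values are invariant under transpose. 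Your instinct to invoke the square case as standard (with the doubly-substochastic/Birkhoff argument sketched only if self-containment is demanded) and present only the padding reduction matches how one would typically write this up; the paper takes the even more economical route of citing the whole statement.
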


The above lemma can be found in the book \textit{Topics in Matrix analysis} \cite{horn1994topics}.

\begin{theorem}[H\"older inequality for Schatten norm]\label{thm:schatten_holder}
Let $A,B\in\mathbb{R}^{m\times n}$ and let $p,q\in[1,\infty]$ satisfy $\frac{1}{p}+\frac{1}{q}=1$
(with the convention $\frac{1}{\infty}=0$). Then
\begin{equation}\label{eq:schatten_holder_two}
\big|\langle A,B\rangle_F\big|
=\big|\mathrm{tr}(A^\top B)\big|
\;\le\;
\|A\|_{S_p}\,\|B\|_{S_q},
\end{equation}
where $\|\cdot\|_{S_p}$ denotes the Schatten-$p$ norm.
In particular, taking $(p,q)=(1,\infty)$ yields
\begin{equation}\label{eq:nuclear_spectral_duality}
\big|\mathrm{tr}(A^\top B)\big|
\;\le\;
\|A\|_{*}\,\|B\|_{\mathrm{op}},
\end{equation}
where $\|A\|_*=\|A\|_{S_1}$ is the nuclear norm and $\|B\|_{\mathrm{op}}=\|B\|_{S_\infty}$ is the spectral (operator) norm.
\end{theorem}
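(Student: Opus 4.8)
The plan is to reduce the matrix inequality to the classical Hölder inequality for finite real sequences applied to the singular-value vectors of $A$ and $B$. Let $k := \min\{m,n\}$ and order the singular values as $\sigma_1(A)\ge\cdots\ge\sigma_k(A)$ and $\sigma_1(B)\ge\cdots\ge\sigma_k(B)$. The first step is to invoke the generalized von Neumann trace inequality (Lemma~\ref{lem:tr-ineq}), which yields
\[
\big|\langle A,B\rangle_F\big| \;=\; \big|\tr(A^\top B)\big| \;\le\; \sum_{i=1}^{k}\sigma_i(A)\,\sigma_i(B).
\]
This collapses the operator-theoretic estimate into a bound on a finite sum of products of nonnegative reals, with the two singular-value sequences matched index by index in decreasing order.

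The second step applies the scalar Hölder inequality to the vectors $a=(\sigma_1(A),\dots,\sigma_k(A))$ and $b=(\sigma_1(B),\dots,\sigma_k(B))$ in $\RR^k$. For conjugate exponents $p,q\in(1,\infty)$ with $\tfrac1p+\tfrac1q=1$,
\[
\sum_{i=1}^{k}\sigma_i(A)\,\sigma_i(B)\;\le\;\Big(\sum_{i=1}^{k}\sigma_i(A)^{p}\Big)^{1/p}\Big(\sum_{i=1}^{k}\sigma_i(B)^{q}\Big)^{1/q}\;=\;\|A\|_{S_p}\,\|B\|_{S_q},
\]
where the final equality is just the definition of the Schatten-$p$ norm as the $\ell^p$ norm of the singular-value vector. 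Composing this with the previous display gives \eqref{eq:schatten_holder_two}. If one wants the argument self-contained, scalar Hölder itself follows in one line from Young's inequality $xy\le x^p/p+y^q/q$ after normalizing $\|a\|_p=\|b\|_q=1$; otherwise it is a standard citation.

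It remains to handle the endpoints $(p,q)=(1,\infty)$ and $(\infty,1)$, which the generic statement does not cover because of the convention $\tfrac1\infty=0$. For $(p,q)=(1,\infty)$ the scalar step is the elementary $\ell^1$--$\ell^\infty$ bound
\[
\sum_{i=1}^{k}\sigma_i(A)\,\sigma_i(B)\;\le\;\Big(\max_{1\le j\le k}\sigma_j(B)\Big)\sum_{i=1}^{k}\sigma_i(A)\;=\;\|B\|_{\mathrm{op}}\,\|A\|_{*},
\]
and the case $(\infty,1)$ is symmetric; combined with Lemma~\ref{lem:tr-ineq} this gives \eqref{eq:nuclear_spectral_duality}. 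There is no substantive obstacle in this proof; the only points that genuinely need care are (i) ensuring both singular-value sequences are sorted in the same decreasing order so the indices in von Neumann's inequality line up, and (ii) dispatching the $\infty$-exponent endpoints by the direct $\ell^1$--$\ell^\infty$ estimate rather than forcing them through the generic inequality.
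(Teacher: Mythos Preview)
Your proof is correct and follows the standard route: von Neumann's trace inequality (the paper's Lemma~\ref{lem:tr-ineq}) reduces the trace bound to a sum of products of singular values, and then scalar H\"older on the singular-value vectors finishes it, with the $(1,\infty)$ endpoint handled by the direct $\ell^1$--$\ell^\infty$ estimate. There is nothing to fix.

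Note, however, that the paper does not supply its own proof of this theorem: it is listed in the appendix among preliminary matrix-analysis facts and simply stated without argument, alongside the von Neumann inequality (attributed to Horn--Johnson) and the generalized three-factor version. So there is no ``paper's proof'' to compare against; your argument is exactly the standard one that any such proof would use, and it dovetails with the paper's own toolkit since it invokes Lemma~\ref{lem:tr-ineq} as the only nontrivial input.
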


\begin{theorem}[Generalized H\"older inequality for Schatten norm]
\label{thm:schatten_holder_three}
Let $A\in\mathbb{R}^{m\times m}$, $X\in\mathbb{R}^{m\times n}$, and $B\in\mathbb{R}^{n\times n}$.
Let $p\in[1,\infty]$ and let $p_1,p_2,p_3\in[1,\infty]$ satisfy
$$\frac{1}{p}= \frac{1}{p_1}+\frac{1}{p_2}+\frac{1}{p_3}.$$
Then the following inequalities hold:
$$\|A X B\|_{S_p}\le \|A\|_{S_{P_{1}}}\,\|X\|_{S_{p_2}}\,\|B\|_{S_{p_3}}.
$$
In particular, taking $p=p_2=1$ and $p_1=p_3=\infty$ gives the commonly used nuclear-norm bound
\begin{equation}\label{eq:AXB_nuclear}
\|A X B\|_{*}
\le
\|A\|_{\mathrm{op}}\,\|X\|_{*}\,\|B\|_{\mathrm{op}}.
\end{equation}
\end{theorem}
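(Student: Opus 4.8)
The plan is to reduce everything to the two-factor product inequality $\|CD\|_{S_r}\le\|C\|_{S_p}\|D\|_{S_q}$, valid for conformable (possibly rectangular) matrices $C,D$ and exponents $p,q,r\in[1,\infty]$ with $\tfrac1p+\tfrac1q=\tfrac1r$, and then to obtain the three-factor statement by grouping $AXB=A\,(XB)$ and applying the two-factor bound twice. The special case $(p,p_1,p_2,p_3)=(1,\infty,1,\infty)$ in \eqref{eq:AXB_nuclear} is then immediate by substitution.

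The heart of the argument is a singular-value \emph{log-majorization} under products: for every $k$, $\prod_{i=1}^{k}\sigma_i(CD)\le\prod_{i=1}^{k}\sigma_i(C)\,\sigma_i(D)$. I would prove this through $k$-th exterior powers, using the identities $\|\Lambda^kM\|_{\mathrm{op}}=\prod_{i=1}^k\sigma_i(M)$ and $\Lambda^k(CD)=\Lambda^k(C)\,\Lambda^k(D)$, so that submultiplicativity of the operator norm gives $\prod_{i\le k}\sigma_i(CD)=\|\Lambda^k(CD)\|_{\mathrm{op}}\le\|\Lambda^kC\|_{\mathrm{op}}\,\|\Lambda^kD\|_{\mathrm{op}}=\big(\prod_{i\le k}\sigma_i(C)\big)\big(\prod_{i\le k}\sigma_i(D)\big)$; alternatively one may simply cite this as Horn's inequality from standard references on matrix analysis.

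Next I would convert this majorization into a Schatten-norm bound. Write $a=(\sigma_i(CD))_i$ and $b=(\sigma_i(C)\sigma_i(D))_i$, both nonnegative and nonincreasing (the latter as a product of two nonincreasing sequences). The previous step says $a\prec_{w\log}b$, and by the standard majorization lemma that $a\prec_{w\log}b$ implies $\sum_i\phi(a_i)\le\sum_i\phi(b_i)$ whenever $t\mapsto\phi(e^t)$ is convex and increasing — applied to $\phi(t)=t^{r}$ for finite $r$ — we obtain $\|CD\|_{S_r}^{r}=\sum_i\sigma_i(CD)^{r}\le\sum_i\big(\sigma_i(C)\sigma_i(D)\big)^{r}$. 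Since $\tfrac1p+\tfrac1q=\tfrac1r$ forces $p,q\ge r$ and $\tfrac rp+\tfrac rq=1$, the classical Hölder inequality for real sequences with exponents $(p/r,q/r)$ bounds the last sum by $\big(\sum_i\sigma_i(C)^{p}\big)^{r/p}\big(\sum_i\sigma_i(D)^{q}\big)^{r/q}=\|C\|_{S_p}^{r}\|D\|_{S_q}^{r}$, and taking $r$-th roots yields the two-factor inequality for finite $r$. If some exponent equals $\infty$, the claim reduces to $\sigma_i(CD)\le\sigma_i(C)\,\sigma_1(D)$ (and its transpose), i.e. the $k$-fold log-majorization with one factor collapsed to its largest singular value, so those cases are covered too.

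Finally, for the three factors set $\tfrac1s:=\tfrac1{p_2}+\tfrac1{p_3}$; since $p_1,p_2,p_3\ge1$ and $\tfrac1p=\tfrac1{p_1}+\tfrac1s\le1$ we have $s\ge1$, so the two-factor inequality applies both to $A\cdot(XB)$ with exponents $(p_1,s)$ and to $X\cdot B$ with exponents $(p_2,p_3)$, chaining to $\|AXB\|_{S_p}\le\|A\|_{S_{p_1}}\|XB\|_{S_s}\le\|A\|_{S_{p_1}}\|X\|_{S_{p_2}}\|B\|_{S_{p_3}}$; the inequality \eqref{eq:AXB_nuclear} is the instance $p=p_2=1$, $p_1=p_3=\infty$. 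The main obstacle is exactly the first step — the log-majorization inequality for singular values of a product — since all the genuinely matrix-analytic content sits there; everything afterward is scalar Hölder plus routine bookkeeping about which exponents are finite.
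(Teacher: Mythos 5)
The paper does not actually prove Theorem~\ref{thm:schatten_holder_three}; it is listed in the appendix as a preliminary fact from matrix analysis, with no proof or citation attached, so there is no ``paper proof'' to compare against. Your proposal supplies a correct and essentially complete argument, and it follows the standard textbook route: reduce to the two-factor Schatten--H\"older bound, establish that by Horn's singular-value log-majorization $\prod_{i\le k}\sigma_i(CD)\le\prod_{i\le k}\sigma_i(C)\sigma_i(D)$ (proved cleanly through $k$-th exterior powers and submultiplicativity of $\|\cdot\|_{\mathrm{op}}$), pass from weak log-majorization to a sum inequality via the classical convexity lemma for $t\mapsto\phi(e^t)$ with $\phi(t)=t^r$, finish with scalar H\"older with exponents $(p/r,q/r)$, and then chain $A\cdot(XB)$ and $X\cdot B$ using an intermediate exponent $1/s=1/p_2+1/p_3\le 1$.

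One small imprecision worth flagging: for the case where an exponent is infinite (say $p=\infty$, $q=r<\infty$) you write that $\sigma_i(CD)\le\sigma_i(C)\,\sigma_1(D)$ is ``the $k$-fold log-majorization with one factor collapsed''. That is not quite a consequence of the log-majorization; ``collapsing'' only gives $\prod_{i\le k}\sigma_i(CD)\le\big(\prod_{i\le k}\sigma_i(C)\big)\sigma_1(D)^k$, from which the termwise bound does not follow. The termwise bound is instead Weyl's product inequality $\sigma_{i+j-1}(CD)\le\sigma_i(C)\sigma_j(D)$ with $j=1$, or can be derived directly from $D^\top C^\top C D\preceq\|C\|_{\mathrm{op}}^2\,D^\top D$ together with Weyl eigenvalue monotonicity. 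Equally classical, and an easy substitute, but the source you quote is the wrong one; alternatively a limiting argument in $r$ from the finite-exponent case would also close this. Everything else in the argument is airtight, including the observations that $1/p+1/q=1/r$ forces $p,q\ge r$ and that the log-majorization statement remains valid for rectangular conformable matrices.
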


\begin{lemma}[Equivalence of Frobenius and Nuclear norm]
\label{lem:eq-norm}
Let $A \in \RR^{m\times n}$, suppose $r$ is the rank of matrix $A$, then we have the following inequality:
\begin{equation}
    \|A\|_F \le \|A\|_* \le \sqrt{r}\|A\|_F.
\end{equation}
    
\end{lemma}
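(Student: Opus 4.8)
The plan is to reduce both inequalities to the singular values of $A$ and then invoke two elementary scalar estimates. I would start from the (thin) SVD $A = U\Sigma V^\top$, where $\Sigma = \diag(\sigma_1,\dots,\sigma_r)$ collects the $r = \rank(A)$ nonzero singular values of $A$. By definition this gives $\|A\|_F^2 = \tr(A^\top A) = \sum_{i=1}^r \sigma_i^2$ and $\|A\|_* = \sum_{i=1}^r \sigma_i$, so the claim becomes the purely numerical statement that for nonnegative reals $\sigma_1,\dots,\sigma_r$ one has $\big(\sum_{i=1}^r \sigma_i^2\big)^{1/2} \le \sum_{i=1}^r \sigma_i \le \sqrt{r}\,\big(\sum_{i=1}^r \sigma_i^2\big)^{1/2}$.

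For the left-hand inequality I would expand $\big(\sum_{i=1}^r \sigma_i\big)^2 = \sum_{i=1}^r \sigma_i^2 + \sum_{i \neq j} \sigma_i \sigma_j \ge \sum_{i=1}^r \sigma_i^2$, since every cross term $\sigma_i\sigma_j$ is nonnegative; taking square roots yields $\|A\|_F \le \|A\|_*$. (Equivalently, this is the $\ell^2 \hookrightarrow \ell^1$ embedding on the finite vector of singular values, i.e.\ the Schatten-norm monotonicity $\|A\|_{S_2} \le \|A\|_{S_1}$.)

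For the right-hand inequality I would apply the Cauchy--Schwarz inequality to the pairing of the all-ones vector $\mathbf{1} \in \RR^r$ with $(\sigma_1,\dots,\sigma_r)$, obtaining $\sum_{i=1}^r \sigma_i = \langle \mathbf{1}, (\sigma_i)_i\rangle \le \|\mathbf{1}\|_2 \,\big\|(\sigma_i)_i\big\|_2 = \sqrt{r}\,\big(\sum_{i=1}^r \sigma_i^2\big)^{1/2}$, that is, $\|A\|_* \le \sqrt{r}\,\|A\|_F$. Chaining the two bounds gives the stated sandwich.

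Since each step is a one-line scalar estimate, there is no genuine obstacle here; the only point requiring care is to sum strictly over the $r$ nonzero singular values (rather than over all $\min(m,n)$ of them) so that the sharp constant $\sqrt{r}$ appears in the upper bound — the vanishing singular values contribute nothing to either norm, but including them would needlessly weaken the Cauchy--Schwarz constant.
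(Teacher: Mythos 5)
Your proof is correct. The paper states this lemma in its preliminary appendix without proof, treating it as a standard fact from matrix analysis, so there is no paper proof to compare against; your argument — reducing to the vector of nonzero singular values and then applying the expansion of the square for the lower bound and Cauchy--Schwarz against the all-ones vector in $\RR^r$ for the upper bound — is the standard and clean way to establish it, and the remark about summing only over the $r$ nonzero singular values to get the sharp constant $\sqrt{r}$ (rather than $\sqrt{\min(m,n)}$) is exactly the right point of care.
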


\section{Proof of Theorem~\ref{thm:opt-Kron-approx}}
\label{app:best-pq-thm}
\begin{proof}
We prove the claim for $P$, then the proof of $Q$ is analogous.

Fix $Q\in \SSS_{++}^n$. For simpler notation, we denote $A(Q):=\mathbb{E}[GQ^{-1}G^\top]$. Then the components of $\cJ(P,Q)$ depending on $P$ is
$$
J(P, Q)\;=\;\mathrm{tr}\left(P^{-1}A(Q)\right)
+\mu\,\mathrm{tr}(Q^{-1})\,\mathrm{tr}(P^{-1})
\;+\; n\log\det P.
$$
By taking the gradient of $\cJ$ w.r.t. $P$, we have
$$
\frac{\partial}{\partial P} J(P,Q) = -P^{-1}(A(Q)+ \mu\tr(Q^{-1})I_m)P^{-1}+nP^{-1}.
$$
Setting $\frac{\partial}{\partial P} J(P,Q)=0$, we have
$$
(A(Q)+ \mu\tr(Q^{-1})I_m)P^{-1} = nI_m.
$$
Therefore,
$$
P = \frac{1}{n}(A(Q)+ \mu\tr(Q^{-1})I_m) = \frac{1}{n}\EE[GQ^{-1}G^T] + \frac{\mu\tr(Q^{-1})}{n} I_m,
$$
which is exactly the claim in Theorem~\ref{thm:opt-Kron-approx}.

It remains to prove the uniqueness. Notice that the function
$P\mapsto \mathrm{tr}(P^{-1}S)+n\log\det P$ with $S\succ 0$ is strictly convex on $\SSS_{++}^m$. Since $\mu\tr(Q^{-1})>0$ when $\mu >0$, here $S = \mathrm{tr}\left(P^{-1}A(Q)\right)
+\mu\,\mathrm{tr}(Q^{-1})$ is positive definite. And hence $J(P,Q)$ is strictly convex w.r.t. $P$. Therefore, the stationary point for $J$ is exactly the unique minimizer. 

This complete the proof. 
\end{proof}

\section{Proof of Theorem~\ref{thm:update}}
\label{app:update-thm}

\begin{proof}[Proof of Theorem~\ref{thm:update}]
    The main idea of the proof is to transform the constraints into a standard spectral-norm ball. Let $\Phi:=P^{1/2}\Delta W Q^{1/2}$ be  the preconditioned update. Since $P$ and $Q$ are positive definite, the mapping from $\Delta W$ to $\Phi$ is bijective, and therefore $\Delta W = P^{-1/2}\Phi Q^{-1/2}$. Then the constraint of \eqref{eq:lmo} becomes $\|\Phi\|_2\le \eta$.

    By the cyclic property of trace, the objective of problem~\eqref{eq:lmo} becomes
    \begin{align}
        \langle G, \Delta W \rangle_F &= \tr\left( G^\top P^{-1/2} \Phi Q^{-1/2} \right) = \tr\left(Q^{-1/2}  G^\top P^{-1/2} \Phi \right)  \nonumber\\ &= \tr\left( \left( P^{-1/2} G Q^{-1/2} \right)^\top \Phi \right) = \langle \widetilde{G}, \Phi \rangle_F.
        \label{eq:tr-cyc}
    \end{align}
    
    Therefore, the problem~\eqref{eq:lmo} is equivalent to
    \begin{equation}
        \label{eq:lmo_new}
        \min_{\Phi \in \RR^{m\times n}}  \langle \widetilde{G}, \Phi \rangle_F  \quad \text{s.t.} \quad \|\Phi\|_2 \le \eta.
    \end{equation}

    \paragraph{Lower Bound.} Let $p=\min\{m,n\}$. Apply lemma~\ref{lem:tr-ineq} with $A = \widetilde{G}$ and $B=\Phi$, we have
    $$\langle \widetilde{G}, \Phi \rangle_F = \operatorname{tr}(\widetilde{G}^\top \Phi) \geq - \sum_{i=1}^{p} \sigma_i(\widetilde{G}) \sigma_i(\Phi),$$
    where the singular values $\sigma_i$'s of both $\widetilde{G}$ and $\Phi$ are arranged in descending order as in lemma~\ref{lem:tr-ineq}. Notice that the spectral norm (i.e. $\|\cdot\|_2$) of a matrix equals to its largest singular value, therefore, 
    $$\sigma_i(\Phi)\le \sigma_1(\Phi)=\|\Phi\|_2\le\eta, \quad \forall i=1, \cdots,p.$$
    Therefore, $$\langle\widetilde{G},\Phi \rangle_F\ge-\sum_{i=1}^p \sigma_i(\widetilde{G})\cdot\eta=-\eta\sum_{i=1}^p\sigma_i(\widetilde{G})=-\eta\|\widetilde{G}\|_*,$$
    which means that $-\eta\|\widetilde{G}\|_*$ is a lower bound on the minimum of the problem~\ref{eq:lmo_new}. 

    \paragraph{Feasibility and Achievability.} We consider the SVD $\widetilde{G}=U\Sigma V^\top$, where $\Sigma=\diag(\sigma_1(\widetilde{G}),\cdots,\sigma_r(\widetilde{G}))$ with $r=\rank(\widetilde{G})\le p$. Define
    \begin{equation}
        \Phi^*:=-\eta UV^\top.
    \end{equation}
    Since the matrix $UV^\top$ has singular values equal to 1, so $\|UV^\top\|_2=1$. So $\|\Phi^*\|_2=\eta\|UV^\top\|_2=\eta$, and thus $\Phi^*$ is feasible.

    Now we consider the objective value of problem~\ref{eq:lmo_new} given $\Phi=\Phi^*$:
    \begin{align*}
        \langle \widetilde{G}, \Phi^\star \rangle_F &= -\eta \langle U \Sigma V^\top, U V^\top \rangle_F = -\eta \tr((U \Sigma V^\top)^\top U V^\top)\\
        &=-\eta\tr(V\Sigma U^\top UV^\top)=-\eta\tr(V\Sigma V^\top)\\
        &=-\eta\sum_{i=1}^r \sigma_i(\widetilde{G})=-\eta\|\widetilde{G}\|_*.
    \end{align*}
    Therefore, $\|\Phi^*\|=UV^\top$ attains the global lower bound and is optimal for problem~\ref{eq:lmo_new}. Thus the solution for problem~\ref{eq:lmo} is:
    $$\Delta W^\star = P^{-1/2} \Phi^\star Q^{-1/2} = -\eta P^{-1/2} U V^\top Q^{-1/2}=-\eta P^{-1/2}\Polar(P^{-1/2}GQ^{-1/2})Q^{-1/2}.$$
\end{proof}

\section{Proof of the Lemmas for Convergence Analysis}
\label{app:cvg-lemma}
%\begin{lemma} %Cvg-lemma
%    \label{lem:cvg-lem}
%    Under the Assumption~\ref{ass:L-smooth}, let $\{W_t,P_t,Q_t,G_t,\widetilde{G}_t,M_t\}$ be generated by Algorithm~\ref{alg:kl}. Denote $K_{PQ}(t) = \|P_t^{-1/2}\|_2 \cdot\|Q_t^{-1/2}\|_2$. Then for each $t\ge1$, the following bound for $\cL(W_t)$ holds:
%    \begin{align}
%    \label{eq:cvg-lem-final}
%    \cL(W_t) \leq \cL(W_{t-1}) 
%    &- \eta K_{PQ}(t)\| \nabla \mathcal{L}(W_{t-1}) \|_*
%    + 2\eta K_{PQ}(t)\| \nabla \mathcal{L}(W_{t-1}) - G_t \|_* \\
%    &+ 2\eta \| \widetilde{G}_t - M_t \|_* + \frac{L\eta^2}{2}\min(m,n)K_{PQ}^2(t).
%\end{align}
%\end{lemma}

\begin{proof}[Proof of Lemma~\ref{lem:cvg-lem}]
We summarize the update rules for Algorithm~\ref{alg:kl} as follows:
$$G_t \leftarrow \nabla_W\mathcal{L}(W_{t-1}),\quad
\widetilde{G}_t := P_t^{-1/2}G_tQ_t^{-1/2},\quad
M_t := \beta M_{t-1} + (1-\beta)\widetilde{G}_t,$$
$$\Delta W_t := P_t^{-1/2}\,\mathrm{Polar}(M_t)\,Q_t^{-1/2},\quad
W_t := W_{t-1} - \eta\,\Delta W_t.$$
Denote the whitened true gradient
\begin{equation}\label{eq:tilde_nabla_def}
\widetilde{\nabla}\mathcal{L}(W_{t-1})
:= P_t^{-1/2}\,\nabla\mathcal{L}(W_{t-1})\,Q_t^{-1/2}.
\end{equation}
Apply the Lipschitz smoothness condition with respect to $W_t$ and $W_{t-1}$, we have
\begin{align}
\mathcal{L}(W_t) &\le \mathcal{L}(W_{t-1})
+\langle \nabla\mathcal{L}(W_{t-1}),\,W_t-W_{t-1}\rangle_F
+\frac{L}{2}\|W_t-W_{t-1}\|_F^2 \nonumber\\
&= \mathcal{L}(W_{t-1}) -\eta\langle \nabla\mathcal{L}(W_{t-1}),\,\Delta W_t\rangle_F
+\frac{L\eta^2}{2}\|\Delta W_t\|_F^2.
\label{eq:smooth-cvg-lem}
\end{align}

We first consider the term $\langle \nabla\cL(W_{t-1}),\Delta W_{t}\rangle_F$. Denote $O_t = \Polar(M_t)$, so $\Delta W_t=P_t^{-1/2}O_tQ_t^{-1/2}$. By the cyclic property of trace (as in equation~\eqref{eq:tr-cyc}), we have
\begin{align}
\langle \nabla\mathcal{L}(W_{t-1}),\,\Delta W_t\rangle_F
&=
\left\langle \nabla\mathcal{L}(W_{t-1}),\,P_t^{-1/2}O_tQ_t^{-1/2}\right\rangle_F \nonumber\\
&=
\left\langle P_t^{-1/2}\nabla\mathcal{L}(W_{t-1})Q_t^{-1/2},\,O_t\right\rangle_F \nonumber\\
&=
\left\langle \widetilde{\nabla}\mathcal{L}(W_{t-1}),\,O_t\right\rangle_F \nonumber\\
&=
\left\langle M_t,O_t\right\rangle_F+\left\langle \widetilde{\nabla}\mathcal{L}(W_{t-1})-M_t,\,O_t\right\rangle_F.
\label{eq:align_whitened}
\end{align}
Since $O_t=\mathrm{Polar}(M_t)$, we have $\|O_t\|_{\mathrm{op}}=\|O_t\|_2=1$ and $\langle M_t,U_t\rangle_F=\|M_t\|_*$. By H\"older inequality~\ref{thm:schatten_holder}, we have
\begin{equation}\label{eq:holder}
\left\langle \widetilde{\nabla}\mathcal{L}(W_{t-1})-M_t,\,O_t\right\rangle_F
\ge
-\big\|\widetilde{\nabla}\mathcal{L}(W_{t-1})-M_t\big\|_*\cdot \|O_t\|_{\mathrm{op}}
=
-\big\|\widetilde{\nabla}\mathcal{L}(W_{t-1})-M_t\big\|_*.
\end{equation}
Moreover, by the triangle inequality,
\begin{equation}
\label{eq:tri_M}
\|M_t\|_*\ge \big\|\widetilde{\nabla}\mathcal{L}(W_{t-1})\big\|_*-\big\|\widetilde{\nabla}\mathcal{L}(W_{t-1})-M_t\big\|_*.
\end{equation}
Combining \eqref{eq:align_whitened}--\eqref{eq:tri_M} yields
\begin{align}
\label{eq:align_lower}
\left\langle \widetilde{\nabla}\mathcal{L}(W_{t-1}),O_t\right\rangle_F
&\ge
\|M_t\|_*-\big\|\widetilde{\nabla}\mathcal{L}(W_{t-1})-M_t\big\|_* \nonumber\\
&\ge
\big\|\widetilde{\nabla}\mathcal{L}(W_{t-1})\big\|_*
-2\big\|\widetilde{\nabla}\mathcal{L}(W_{t-1})-M_t\big\|_*.
\end{align}
Plug \eqref{eq:align_lower} into \eqref{eq:smooth-cvg-lem}, we have 
\begin{align}
\label{eq:cvg-lem-new}
\mathcal{L}(W_t)
&\le
\mathcal{L}(W_{t-1})
-\eta\left\langle \widetilde{\nabla}\mathcal{L}(W_{t-1}),O_t\right\rangle_F
+\frac{L\eta^2}{2}\|\Delta W_t\|_F^2 \nonumber\\
&\le
\mathcal{L}(W_{t-1})
-\eta\big\|\widetilde{\nabla}\mathcal{L}(W_{t-1})\big\|_*
+2 \eta\big\|\widetilde{\nabla}\mathcal{L}(W_{t-1})-M_t\big\|_*
+\frac{L\eta^2}{2}\|\Delta W_t\|_F^2.
\end{align}

For the second term in the RHS of \eqref{eq:cvg-lem-new},
\begin{equation}
    \label{eq:cvg-lem-1}
    \big\|\widetilde{\nabla}\mathcal{L}(W_{t-1})\big\|_* = \big\|P_t^{-1/2}\nabla\mathcal{L}(W_{t-1})Q_t^{-1/2}\big\|_* 
    \le \big\|P_t^{-1/2}\big\|_2 \cdot\big\|\nabla\mathcal{L}(W_{t-1})\big\|_* \cdot \big\|P_t^{-1/2}\big\|_2
 \end{equation}

For the third term in the RHS of \eqref{eq:cvg-lem-new}, subtract and add a $\widetilde{G}_t$ and apply triangle inequality, we have
$$\big\|\widetilde{\nabla}\mathcal{L}(W_{t-1})-M_t\big\|_*
\le \big\|\widetilde{\nabla}\mathcal{L}(W_{t-1})-\widetilde{G}_t\big\|_* + \big\|\widetilde{G}_t-M_t\big\|_*,$$
where 
\begin{align}
\label{eq:cvg-lem-2}
    \big\|\widetilde{\nabla}\mathcal{L}(W_{t-1})-\widetilde{G}_t\big\|_* 
    &= \big\|P_t^{-1/2}\left(\nabla\mathcal{L}(W_{t-1})-G_t\right)Q_t^{-1/2}\big\|_* \nonumber\\
    &\le \big\|P_t^{-1/2}\big\|_2 \cdot\big\|\nabla\mathcal{L}(W_{t-1})-G_t\big\|_* \cdot \big\|P_t^{-1/2}\big\|_2,
\end{align}
according to the Generalized H\"older inequality for Schatten norm in Theorem~\ref{thm:schatten_holder_three}.

For the fourth term in the RHS of \eqref{eq:cvg-lem-new}, we have 
\begin{equation}
\label{eq:cvg-lem-3}
    \|\Delta W_t\|_F=\|P_t^{-1/2}O_tQ_t^{-1/2}\|_F
\le \|P_t^{-1/2}\|_2\|O_t\|_F\|Q_t^{-1/2}\|_2.
\end{equation}
Since $O_t=\mathrm{Polar}(M_t)$ is a partial isometry, we have $\|O_t\|_F^2=\mathrm{rank}(O_t)=\mathrm{rank}(M_t)\le \min(m,n)$.

Plug \eqref{eq:cvg-lem-1}--\eqref{eq:cvg-lem-3} into \eqref{eq:cvg-lem-new}, we derive the final formula in Lemma~\ref{lem:cvg-lem}.

\end{proof}

\begin{lemma}[Positive definiteness of $P$ and $Q$]
\label{lem:pd-pq}
    For each $t$, the $P_t$ and $Q_t$ derived from Algorithm~\ref{alg:kl} are positive definite.
\end{lemma}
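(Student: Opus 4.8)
The plan is to prove the statement by induction on $t$, propagating positive definiteness through each substep of the update in Algorithm~\ref{alg:kl}, and exploiting the Gauss--Seidel ordering so that the factor already shown to be positive definite can be reused. For the base case, $P_0 = I_m \succ 0$ and $Q_0 = I_n \succ 0$ directly by the initialization.

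For the inductive step, I would assume $P_{t-1}\succ 0$ and $Q_{t-1}\succ 0$ (so in particular both are symmetric) and first establish $P_t\succ 0$. Since $Q_{t-1}\succ 0$, its inverse $Q_{t-1}^{-1}$ is symmetric positive definite, hence the congruence $G_t Q_{t-1}^{-1} G_t^\top \succeq 0$ is positive semidefinite (and possibly rank-deficient, e.g.\ when $m>n$ or $G_t$ is rank-deficient). Because $P_{t-1}\succ 0$ we have $\tr(P_{t-1})>0$, and together with $\mu>0$ and $m\ge 1$ the scaled-identity term $\mu\,\tr(P_{t-1})/m\cdot I_m$ is strictly positive definite; adding a PSD matrix to a PD matrix gives $L_t\succ 0$. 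Here the $\mu I$ regularization is precisely what upgrades the (possibly only PSD) empirical curvature term to strict positive definiteness. Since $\gamma\in[0,1]$, the combination $\widetilde{P}_t=\gamma P_{t-1}+(1-\gamma)L_t$ is a nonnegative combination in which at least one summand is strictly positive definite for every value of $\gamma$ (either $P_{t-1}$ if $\gamma=1$, or $L_t$ otherwise), so $\widetilde{P}_t\succ 0$; consequently $\tr(\widetilde{P}_t)>0$ and $\frac{m}{\tr(\widetilde{P}_t)}\widetilde{P}_t\succ 0$. Finally, every matrix entering $\widetilde{P}_t$ is symmetric (using that $Q_{t-1}^{-1}$ is symmetric), so $\widetilde{P}_t$ is symmetric and $\sym(\cdot)$ acts as the identity, yielding $P_t\succ 0$.

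The argument for $Q_t$ is then identical \emph{once $P_t\succ 0$ has been established}, which is where the Gauss--Seidel ordering is essential: with $P_t\succ 0$ we get $P_t^{-1}\succ 0$, hence $G_t^\top P_t^{-1} G_t\succeq 0$, and adding $\mu\,\tr(Q_{t-1})/n\cdot I_n\succ 0$ gives $R_t\succ 0$; the same convex-combination, trace-normalization, and symmetrization steps then produce $Q_t\succ 0$. This closes the induction and proves the lemma.

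I do not expect a genuine obstacle here; the points that require care are (i) ensuring the trace normalization never divides by zero, which follows because positive definiteness forces a strictly positive trace; (ii) verifying that $\sym(\cdot)$ changes nothing, which holds because symmetry is preserved at every step via the symmetry of $Q_{t-1}^{-1}$ and $P_t^{-1}$; and (iii) noting that the empirical terms $G_t Q_{t-1}^{-1} G_t^\top$ and $G_t^\top P_t^{-1} G_t$ can be rank-deficient, so strict positive definiteness relies on $\mu>0$ (the damping term) rather than on the empirical curvature, and does not require $\gamma<1$.
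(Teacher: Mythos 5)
Your proof is correct and follows essentially the same induction argument as the paper: strict positive definiteness of $L_t$ comes from the $\mu$-damping term, nonnegative combination with $P_{t-1}$ gives $\widetilde{P}_t\succ 0$, trace normalization and $\sym(\cdot)$ preserve positive definiteness, and the Gauss--Seidel ordering supplies $P_t\succ 0$ before it is needed for $R_t$. You are slightly more careful than the paper in covering the $\gamma=1$ edge case (the paper restricts to $\gamma\in[0,1)$), but the argument is otherwise the same.
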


\begin{proof}[Proof of Lemma~\ref{lem:pd-pq}]
We prove the lemma by mathematical induction. By initialization, clearly $P_0=I_m\succ 0$ and $Q_0=I_n\succ 0$.

Assume $P_{t-1}\succ 0$ and $Q_{t-1}\succ 0$ for some $t\ge 1$.
We need to show that $P_t\succ 0$ and $Q_t\succ 0$.
By the algorithm we have
$$L_t \;=\; \frac{1}{n}G_t Q_{t-1}^{-1} G_t^\top \;+\; \mu \frac{\tr(P_{t-1})}{m} I_m.$$

Since $Q_{t-1}\succ 0$, we have $Q_{t-1}^{-1}\succ 0$. Hence $\frac{1}{n}G_t Q_{t-1}^{-1} G_t^\top \succeq 0.$

Also, $P_{t-1}\succ 0$ implies $\tr(P_{t-1})>0$, and with $\mu>0$ the term
$\mu \frac{\tr(P_{t-1})}{m} I_m$ is strictly positive definite. Therefore, for any $x\neq 0$,
\[
x^\top L_t x
= \frac{1}{n}(G_t^\top x)^\top Q_{t-1}^{-1}(G_t^\top x) + \mu \frac{\tr(P_{t-1})}{m}\|x\|_2^2
\ge \mu \frac{\tr(P_{t-1})}{m}\|x\|_2^2 > 0,
\]
which shows $L_t\succ 0$.

Since $P_{t-1}\succ 0$ and $L_t\succ 0$ and $\gamma\in[0,1)$, therefore, $\widetilde{P}_t \;=\; \gamma P_{t-1} + (1-\gamma)L_t$ is positive definite.

Finally, Algorithm~\ref{alg:kl} normalizes
$P_t \;=\; \mathrm{sym}\!\left(\frac{m}{\tr(\widetilde{P}_t)}\,\widetilde{P}_t\right)$. Because $\widetilde{P}_t$ is symmetric positive definite and $\frac{m}{\tr(\widetilde{P}_t)}>0$,
the scaled matrix $\frac{m}{\tr(\widetilde{P}_t)}\widetilde{P}_t$ is symmetric positive definite.
Moreover, $\mathrm{sym}(\cdot)$ preserves symmetry and does not affect positive definiteness here.
Thus $P_t$ is positive definite.

Similarly, we can show that $Q_t$ is positive definite as well, and by induction, the positive-definiteness of $P$ and $Q$ extend to all iteration $t$'s.

\end{proof}

\begin{lemma}[Upper bound of $K_{PQ}(t)$]
    \label{lem:Kpq-bdd}
In Algorithm~\ref{alg:kl}, suppose there exists $G>0$ such that for all $t$, $\|G_t\|_F \le G$. Moreover, assume $\mu>G^2/(mn)$. Then there exist constant $\overline{r},\overline{s}$ such that for all $t$,
$$\|P_t^{-1}\|_2 \le\overline{r}, \quad \|Q_t^{-1}\|_2\le \overline{s},$$
and thus 
\begin{equation}
    K_{PQ}(t)=\big\|P_t^{-1/2}\big\|_2\big\|Q_t^{-1/2}\big\|_2\le\sqrt{\overline{r}\ \overline{s}}=:\overline{K},
\end{equation}
which means $K_{PQ}(t)$ is uniformly upper-bounded.
\end{lemma}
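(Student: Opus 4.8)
The plan is to reduce the desired bound on $K_{PQ}(t)$ to uniform bounds on the two scalars $r_t := \|P_t^{-1}\|_2 = 1/\lambda_{\min}(P_t)$ and $s_t := \|Q_t^{-1}\|_2 = 1/\lambda_{\min}(Q_t)$, and to control these \emph{jointly} by a single induction on $t$ that keeps both below a common constant $C$. Since $P_t^{-1/2}$ and $Q_t^{-1/2}$ are positive definite, their spectral norms are $\|P_t^{-1/2}\|_2 = \sqrt{r_t}$ and $\|Q_t^{-1/2}\|_2 = \sqrt{s_t}$, so $K_{PQ}(t) = \sqrt{r_t s_t}$ and the conclusion follows with $\overline{r} = \overline{s} = \overline{K} = C$.

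Next I would extract two structural facts from the algorithm. First, because $\tr(\sym(X)) = \tr(X)$, the trace-normalization steps enforce $\tr(P_{t-1}) = m$ and $\tr(Q_{t-1}) = n$ for every $t \ge 1$ (the base case holds since $P_0 = I_m$, $Q_0 = I_n$); hence the identity-regularization term in $L_t$ is exactly $\mu I_m$, so $L_t = \tfrac1n G_t Q_{t-1}^{-1} G_t^\top + \mu I_m \succeq \mu I_m$, and since $P_{t-1} \succ 0$ by Lemma~\ref{lem:pd-pq} we obtain $\widetilde{P}_t = \gamma P_{t-1} + (1-\gamma) L_t \succeq (1-\gamma)\mu I_m$, i.e.\ $\lambda_{\min}(\widetilde{P}_t) \ge (1-\gamma)\mu$. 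Second, $\tr(L_t) = \tfrac1n \tr(Q_{t-1}^{-1} G_t^\top G_t) + \mu m \le \tfrac1n \|Q_{t-1}^{-1}\|_2 \|G_t\|_F^2 + \mu m \le \tfrac{s_{t-1} G^2}{n} + \mu m$, using $\tr(AB) \le \lambda_{\max}(A)\tr(B)$ for symmetric PSD $A, B$ and $\|G_t\|_F \le G$. Since $\widetilde{P}_t$ is symmetric, $P_t = \tfrac{m}{\tr(\widetilde{P}_t)} \widetilde{P}_t$, so $\lambda_{\min}(P_t) \ge \tfrac{m(1-\gamma)\mu}{\gamma m + (1-\gamma)(s_{t-1} G^2/n + \mu m)}$, which rearranges to
\[ r_t \;\le\; a + b\, s_{t-1}, \qquad a := 1 + \tfrac{\gamma}{(1-\gamma)\mu}, \quad b := \tfrac{G^2}{mn\mu}. \]
An analogous computation for $Q_t$—now using that Gauss--Seidel builds $R_t$ from the already-updated $P_t$, so $\tr(G_t^\top P_t^{-1} G_t) \le r_t G^2$—gives $s_t \le a + b\, r_t$ with the same $a, b$.

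The hypothesis $\mu > G^2/(mn)$ enters precisely here: it makes $b < 1$. Setting $C := a/(1-b)$, so that $a + bC = C$ and $C \ge a \ge 1$, a two-step induction closes the argument: the base case is $r_0 = s_0 = 1 \le C$; and if $r_{t-1}, s_{t-1} \le C$ then $r_t \le a + b s_{t-1} \le a + bC = C$, after which $s_t \le a + b r_t \le a + bC = C$. Hence $r_t \le C =: \overline{r}$ and $s_t \le C =: \overline{s}$ for all $t$, so $K_{PQ}(t) = \sqrt{r_t s_t} \le C = \sqrt{\overline{r}\,\overline{s}} =: \overline{K}$.

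The main obstacle is the coupling between the two factors: $r_t$ depends on $s_{t-1}$ and $s_t$ on $r_t$, so neither can be bounded in isolation and the induction must carry both bounds simultaneously. The composed recursion $r_t \le a(1+b) + b^2 r_{t-1}$ is a contraction exactly because $b^2 < 1$, which is what the damping condition $\mu > G^2/(mn)$ guarantees; the remaining ingredients (the trace/eigenvalue inequalities and the fact that $\sym(\cdot)$ acts trivially on the symmetric iterates) are routine.
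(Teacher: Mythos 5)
Your proof is correct and takes essentially the same route as the paper's: identical trace-normalization and Weyl-type eigenvalue bounds lead to the same coupled recursion $r_t \le a + b\,s_{t-1}$, $s_t \le a + b\,r_t$ with the same constants $a$ and $b$, and the hypothesis $\mu > G^2/(mn)$ makes $b<1$. Your finishing step — a joint induction against the fixed point $C=a/(1-b)$ — is marginally cleaner than the paper's, which composes the two recursions into $s_t \le a(1+b)+b^2 s_{t-1}$ and sums the geometric series to obtain the slightly larger constant $\overline{s}=a/(1-b)+1$, but both are the same contraction argument.
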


\begin{proof}[Proof of Lemma~\ref{lem:Kpq-bdd}]
According to the update rules in Algorithm~\ref{alg:kl},
$$P_t = \sym\!\left(\frac{m}{\tr(\widetilde P_t)}\widetilde P_t\right),\qquad
Q_t = \sym\!\left(\frac{n}{\tr(\widetilde Q_t)}\widetilde Q_t\right).$$
Since $\sym(\cdot)$ preserves trace, we have, for all $t\ge 0$,
\begin{equation}
\tr(P_t)=\tr\left(\frac{m}{\tr(\widetilde P_t)}\widetilde P_t\right)=m,\qquad \tr(Q_t)=\tr\left(\frac{m}{\tr(\widetilde Q_t)}\widetilde Q_t\right)=n.
\label{eq:trace_invariant}
\end{equation}
Hence 
$$\mu \frac{\mathrm{tr}(P_{t-1})}{m}I_m = \mu I_m,\qquad
\mu \frac{\mathrm{tr}(Q_{t-1})}{n}I_n = \mu I_n.$$
Moreover, by Lemma~\ref{lem:pd-pq}, $Q_{t-1}^{-1}\succ 0$, we have $\frac{1}{n}G_tQ_{t-1}^{-1}G_t^\top\succeq 0$, hence
\begin{equation}
L_t=\frac{1}{n}G_tQ_{t-1}^{-1}G_t^\top+\mu I_m \succeq \mu I_m.
\label{eq:L_lower}
\end{equation}
Similarly, we have
\begin{equation}
R_t=\frac{1}{m}G_t^\top P_t^{-1}G_t+\mu I_n \succeq \mu I_n.
\label{eq:R_lower}
\end{equation}

Denote $\lambda_{\min}(A)$ as the smallest eigenvalue of matrix $A$. From $\widetilde P_t=\gamma P_{t-1}+(1-\gamma)L_t$ and \eqref{eq:L_lower}, by Weyl's eigenvalue inequality, 
\begin{equation}
\lambda_{\min}(\widetilde P_t)\ge \gamma\,\lambda_{\min}(P_{t-1})+(1-\gamma)\lambda_{\min}(L_t)\ge (1-\gamma)\mu.
\label{eq:Ptilde_lmin}
\end{equation}
Since $P_t=\frac{m}{\mathrm{tr}(\widetilde P_t)}\widetilde P_t$, we have
$$P_t^{-1}=\frac{\mathrm{tr}(\widetilde P_t)}{m}\,\widetilde P_t^{-1},
\quad\text{hence}\quad
\|P_t^{-1}\|_2=\frac{\mathrm{tr}(\widetilde P_t)}{m}\cdot \frac{1}{\lambda_{\min}(\widetilde P_t)}.$$
It remains to upper bound $\mathrm{tr}(\widetilde P_t)$. By \eqref{eq:trace_invariant},
\begin{equation}
\mathrm{tr}(\widetilde P_t)=\gamma\,\mathrm{tr}(P_{t-1})+(1-\gamma)\mathrm{tr}(L_t)=\gamma m+(1-\gamma)\mathrm{tr}(L_t).
\label{eq:tr_Ptilde}
\end{equation}
We also have,
$$\mathrm{tr}(L_t)=\frac{1}{n}\mathrm{tr}(G_tQ_{t-1}^{-1}G_t^\top)+\mu m
=\frac{1}{n}\mathrm{tr}(Q_{t-1}^{-1}G_t^\top G_t)+\mu m.$$

By von Neumann trace inequality, for positive definite matrix $A, B \in \SSS^n$, we have
\begin{equation}
    \label{eq:tr-ineq2}
    \tr(AB)\le \sum_{i=1}^n \lambda_i(A)\lambda_i(B)\le \sum_{i=1}^n \lambda_{\max}(A)\lambda_i(B)=\lambda_{\max}(A)\sum_{i=1}^{n}\lambda_i(B) = \|A\|_2 \tr(B).
\end{equation}

Since $\mathrm{tr}(G_t^\top G_t)=\|G_t\|_F^2$, we obtain
$$\mathrm{tr}(Q_{t-1}^{-1}G_t^\top G_t)\le \|Q_{t-1}^{-1}\|_2\,\|G_t\|_F^2.$$
By assuming $\|G_t\|_F\le G$ for all $t$,
$$\mathrm{tr}(L_t)\le \mu m+\frac{G^2}{n}\|Q_{t-1}^{-1}\|_2.$$

Substituting this bound into \eqref{eq:tr_Ptilde} and combining with \eqref{eq:Ptilde_lmin} yields
\begin{equation}
\|P_t^{-1}\|_2 \le
\frac{\gamma m+(1-\gamma)\left(\mu m+\frac{G^2}{n}\|Q_{t-1}^{-1}\|_2\right)}{m(1-\gamma)\mu}
= a+b\,\|Q_{t-1}^{-1}\|_2,
\label{eq:r_rec}
\end{equation}
where
\begin{equation}
a:=\frac{\gamma+(1-\gamma)\mu}{(1-\gamma)\mu},
\qquad
b:=\frac{G^2}{mn\,\mu}.
\label{eq:ab_def}
\end{equation}

Similarly, from $\widetilde Q_t=\gamma Q_{t-1}+(1-\gamma)R_t$ and \eqref{eq:R_lower},
\begin{equation}
\lambda_{\min}(\widetilde Q_t)\ge (1-\gamma)\mu, \qquad \mathrm{tr}(\widetilde Q_t)=\gamma n+(1-\gamma)\mathrm{tr}(R_t).
\label{eq:Qtilde_lmin}
\end{equation}
Moreover,
$$\mathrm{tr}(R_t)=\frac{1}{m}\mathrm{tr}(G_t^\top P_t^{-1}G_t)+\mu n
=\frac{1}{m}\mathrm{tr}(P_t^{-1}G_tG_t^\top)+\mu n
\le \frac{1}{m}\|P_t^{-1}\|_2\,\|G_t\|_F^2+\mu n
\le \mu n+\frac{G^2}{m}\|P_t^{-1}\|_2.$$
Therefore, by the same normalization argument as for $P_t$ and \eqref{eq:Qtilde_lmin}, we have
\begin{equation}
\|Q_t^{-1}\|_2
\le
\frac{\gamma n+(1-\gamma)\left(\mu n+\frac{G^2}{m}\|P_t^{-1}\|_2\right)}{n(1-\gamma)\mu}
=
a+b\,\|P_t^{-1}\|_2,
\label{eq:s_rec}
\end{equation}
with the same $a,b$ as in \eqref{eq:ab_def}.

For simplicity, we denote $r_t:=\|P_t^{-1}\|_2$ and $s_t:=\|Q_t^{-1}\|_2$. Equations \eqref{eq:r_rec}--\eqref{eq:s_rec} give
$$r_t\le a+b\,s_{t-1}, \qquad s_t\le a+b\,r_t.$$
Combining them yields
$$s_t \le a+b(a+b\,s_{t-1})=a(1+b)+b^2 s_{t-1}.$$
If $b<1$, then $b^2<1$ and iterating the above recursion gives
$$s_t \le a(1+b)\sum_{k=0}^{t-1}b^{2k}+b^{2t}s_0
\le \frac{a(1+b)}{1-b^2}+s_0.$$
Since $Q_0=I_n$, we have $s_0=\|Q_0^{-1}\|_2=1$. By defining
$$\overline s:=\frac{a(1+b)}{1-b^2}+1,$$
we obtain $s_t\le \overline s$ for all $t$. Plugging into $r_t\le a+b s_{t-1}$ yields
$$r_t\le a+b\,\overline s =:\overline r,\qquad \forall t.$$

Finally,
$$K_{PQ}(t)=\|P_t^{-1/2}\|_2\|Q_t^{-1/2}\|_2
=\sqrt{\|P_t^{-1}\|_2}\,\sqrt{\|Q_t^{-1}\|_2}
=\sqrt{r_t s_t}
\le \sqrt{\overline r\,\overline s}
=: \overline K,$$
which proves that $K_{PQ}(t)$ is uniformly upper-bounded.

\end{proof}

\begin{lemma}[Lower bound of $K_{PQ}$]
    \label{lem:Kpq_lower}
    Under the same condition as in Lemma~\ref{lem:Kpq-bdd}, there exist a constant $\underline{K}$ such that
    \begin{equation}
        K_{PQ}(t)\ge \underline{K}.
    \end{equation}
    for all iteration $t$.
\end{lemma}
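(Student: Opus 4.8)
\textbf{Proof proposal for Lemma~\ref{lem:Kpq_lower}.}
The plan is to turn $K_{PQ}(t)$ into a statement about smallest eigenvalues and then exploit the trace normalization built into Algorithm~\ref{alg:kl}. First I would note that, by Lemma~\ref{lem:pd-pq}, $P_t$ and $Q_t$ are symmetric positive definite, so $P_t^{-1/2}$ and $Q_t^{-1/2}$ are symmetric positive definite as well, and hence
$$\big\|P_t^{-1/2}\big\|_2=\lambda_{\min}(P_t)^{-1/2},\qquad \big\|Q_t^{-1/2}\big\|_2=\lambda_{\min}(Q_t)^{-1/2},$$
which gives $K_{PQ}(t)=\big(\lambda_{\min}(P_t)\,\lambda_{\min}(Q_t)\big)^{-1/2}$.

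Next I would invoke the trace normalization. Exactly as recorded in \eqref{eq:trace_invariant} in the proof of Lemma~\ref{lem:Kpq-bdd}, the $\sym(\cdot)$ and rescaling steps guarantee $\tr(P_t)=m$ and $\tr(Q_t)=n$ for all $t$. Since the trace of a symmetric matrix equals the sum of its eigenvalues, the smallest eigenvalue is at most the average eigenvalue, so
$$\lambda_{\min}(P_t)\le \frac{1}{m}\tr(P_t)=1,\qquad \lambda_{\min}(Q_t)\le \frac{1}{n}\tr(Q_t)=1.$$

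Combining the two displays yields $\lambda_{\min}(P_t)\lambda_{\min}(Q_t)\le 1$, hence $K_{PQ}(t)=\big(\lambda_{\min}(P_t)\lambda_{\min}(Q_t)\big)^{-1/2}\ge 1$ for every $t$, and the claim holds with $\underline{K}:=1$. I do not expect any genuine obstacle here: the only point requiring a little care is citing the trace-invariance property correctly (it comes from the normalization step of Algorithm~\ref{alg:kl} together with the fact that $\sym(\cdot)$ preserves the trace), and checking that positive definiteness legitimizes the identity $\|P_t^{-1/2}\|_2=\lambda_{\min}(P_t)^{-1/2}$; both are immediate from earlier results in the excerpt.
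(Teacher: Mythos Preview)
Your proof is correct and follows essentially the same route as the paper: both reduce $K_{PQ}(t)$ to $\lambda_{\min}(P_t)^{-1/2}\lambda_{\min}(Q_t)^{-1/2}$ via positive definiteness and then bound $\lambda_{\min}$ from above using the trace normalization $\tr(P_t)=m$, $\tr(Q_t)=n$. Your version is in fact slightly sharper, since you use $\lambda_{\min}(P_t)\le \tr(P_t)/m=1$ (minimum eigenvalue at most the average) to obtain $\underline{K}=1$, whereas the paper uses the looser chain $\lambda_{\min}(P_t)\le\lambda_{\max}(P_t)\le\tr(P_t)=m$ and arrives at $\underline{K}=1/\sqrt{mn}$.
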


\begin{proof}[Proof of Lemma~\ref{lem:Kpq_lower}]
Since $P_t\succ 0$ by Lemma~\ref{lem:pd-pq}, following the similar analysis on $P_t$, we have
$$\|P_t^{-1/2}\|_2 \;=\; \sqrt{\lambda_{\max}(P_t^{-1})} \;=\;\frac{1}{\sqrt{\lambda_{\min}(P_t)}}.$$
Moreover, since $P_t \succ 0$ is positive semidefinite, its largest eigenvalue is bounded by its trace:
$$\lambda_{\max}(P_t)\;\le\;\tr(P_t)\;=\;m.$$
It follows that
$$\lambda_{\min}(P_t)\;\le\;m
\qquad\Longrightarrow\qquad
\|P_t^{-1/2}\|_2 \;=\;\frac{1}{\sqrt{\lambda_{\min}(P_t)}}\;\ge\;\frac{1}{\sqrt{m}}.$$
Similarly, since $Q_t\succ 0$ and using $\tr(Q_t)=n$, we obtain
$$\|Q_t^{-1/2}\|_2 \;\ge\; \frac{1}{\sqrt{n}}.$$
Combining the two inequalities yields
$$K_{PQ}(t)
\;=\;\|P_t^{-1/2}\|_2\,\|Q_t^{-1/2}\|_2
\;\ge\;\frac{1}{\sqrt{m}}\cdot \frac{1}{\sqrt{n}}
\;=\;\frac{1}{\sqrt{mn}}=:\underline{K}.$$
This completes the proof.

\end{proof}

\begin{lemma}[Lipschitz bound for inverse square root]
\label{lem:lipschitz-invsqrt}
Let $A,B\in\RR^{m\times m}$ be positive definite matrices. Assume that $A \succeq \underline{c}\,I_m$ and $B \succeq \underline{c}\,I_m$
for some constant $\underline{c}>0$. Then
$$\bigl\|A^{-1/2}-B^{-1/2}\bigr\|_2
\;\le\;
\frac{1}{2\,\underline{c}^{3/2}}\;\|A-B\|_2 .$$
\end{lemma}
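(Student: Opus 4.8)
The plan is to exploit the integral representation of the matrix inverse square root, which linearizes the map $X\mapsto X^{-1/2}$ into an average of resolvents, a form in which non-commutativity of $A$ and $B$ causes no trouble. Concretely, for any positive definite $X\in\RR^{m\times m}$ one has
$$X^{-1/2} \;=\; \frac{1}{\pi}\int_0^\infty t^{-1/2}\,(X+tI_m)^{-1}\,dt,$$
which I would justify by simultaneously diagonalizing $X$ and reducing to the scalar identity $\frac{1}{\pi}\int_0^\infty t^{-1/2}(x+t)^{-1}\,dt = x^{-1/2}$ (substitute $t=xu^2$). The integral converges absolutely in operator norm because $\|(X+tI_m)^{-1}\|_2\le(\lambda_{\min}(X)+t)^{-1}$, so the $t^{-1/2}$ singularity at $0$ and the decay at $\infty$ are both integrable; this same bound is what lets us move the norm inside the integral later.

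Next I would apply the representation to $A$ and $B$, subtract, and use the resolvent identity
$$(A+tI_m)^{-1}-(B+tI_m)^{-1} \;=\; (A+tI_m)^{-1}\,(B-A)\,(B+tI_m)^{-1},$$
obtaining
$$A^{-1/2}-B^{-1/2} \;=\; \frac{1}{\pi}\int_0^\infty t^{-1/2}\,(A+tI_m)^{-1}(B-A)(B+tI_m)^{-1}\,dt.$$
Taking operator norms inside the integral (submultiplicativity of $\|\cdot\|_2$) and using $A\succeq\underline{c}I_m$, $B\succeq\underline{c}I_m$, hence $A+tI_m\succeq(\underline{c}+t)I_m$ and therefore $\|(A+tI_m)^{-1}\|_2\le(\underline{c}+t)^{-1}$ and similarly for $B$, gives
$$\bigl\|A^{-1/2}-B^{-1/2}\bigr\|_2 \;\le\; \frac{\|A-B\|_2}{\pi}\int_0^\infty \frac{dt}{\sqrt{t}\,(\underline{c}+t)^2}.$$

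The last step is the elementary scalar integral: with $t=\underline{c}s^2$ it becomes $\frac{2}{\underline{c}^{3/2}}\int_0^\infty(1+s^2)^{-2}\,ds = \frac{2}{\underline{c}^{3/2}}\cdot\frac{\pi}{4}=\frac{\pi}{2\,\underline{c}^{3/2}}$, so the prefactors collapse to exactly $\frac{1}{2\underline{c}^{3/2}}$, yielding the claimed bound. I do not anticipate a genuine obstacle here: the only point needing care is the legitimacy of the integral representation and of interchanging the operator norm with the integral, both of which follow from the uniform resolvent estimate above, after which the argument is pure verification. As a consistency check, the constant $\tfrac12\underline{c}^{-3/2}$ matches the scalar Lipschitz constant $\sup_{x\ge\underline{c}}\bigl|\tfrac{d}{dx}x^{-1/2}\bigr|$, so it is the sharp constant and no slack has been lost.
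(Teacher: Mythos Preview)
Your proposal is correct and follows essentially the same route as the paper's proof: the integral representation of $X^{-1/2}$, the resolvent identity, the uniform resolvent bound from $A,B\succeq\underline{c}I_m$, and the reduction to the scalar integral $\int_0^\infty t^{-1/2}(\underline{c}+t)^{-2}\,dt=\pi/(2\underline{c}^{3/2})$. The only cosmetic difference is that the paper evaluates this integral via the substitution $t=\underline{c}u$ and the Beta function $\mathrm{B}(\tfrac12,\tfrac32)$, whereas you use $t=\underline{c}s^2$ and $\int_0^\infty(1+s^2)^{-2}\,ds=\pi/4$; both give the same constant.
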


\begin{proof}[Proof of Lemma~\ref{lem:lipschitz-invsqrt}]
For any positive definite matrix $X$, the following identity holds:
\begin{equation}
X^{-1/2}=
\frac{1}{\pi}\int_{0}^{\infty} t^{-1/2}\,(X+tI_m)^{-1}\,\mathrm{d}t .
\label{eq:invsqrt-integral}
\end{equation}
This follows from the identity
$x^{-1/2}=\frac{1}{\pi}\int_{0}^{\infty} t^{-1/2}(x+t)^{-1}\,\mathrm{d}t$.

Therefore, we have
\begin{align}
A^{-1/2}-B^{-1/2}
&=
\frac{1}{\pi}\int_{0}^{\infty} t^{-1/2}\Bigl[(A+tI_m)^{-1}-(B+tI_m)^{-1}\Bigr]\mathrm{d}t .
\label{eq:diff-integral}
\end{align}
Moreover, the resolvent identity gives
\begin{equation}
(A+tI_m)^{-1}-(B+tI_m)^{-1}
=
(A+tI_m)^{-1}(B-A)(B+tI_m)^{-1}.
\label{eq:resolvent}
\end{equation}
Substituting \eqref{eq:resolvent} into \eqref{eq:diff-integral} yields
\begin{equation}
A^{-1/2}-B^{-1/2}
=
\frac{1}{\pi}\int_{0}^{\infty} t^{-1/2}(A+tI_m)^{-1}(B-A)(B+tI_m)^{-1}\,\mathrm{d}t .
\label{eq:diff-resolvent-integral}
\end{equation}

Taking spectral norm of both sides and applying submultiplicativity, we get
\begin{align}
\bigl\|A^{-1/2}-B^{-1/2}\bigr\|_2
&\le
\frac{1}{\pi}\int_{0}^{\infty}
t^{-1/2}\,\bigl\|(A+tI_m)^{-1}\bigr\|_2\,\|A-B\|_2\,\bigl\|(B+tI_m)^{-1}\bigr\|_2
\,\mathrm{d}t .
\label{eq:norm-bound-int}
\end{align}
Since $A\succeq \underline{c}I_m$, we have $A+tI_m\succeq (\underline{c}+t)I_m$, hence
\begin{equation}
\bigl\|(A+tI_m)^{-1}\bigr\|_2 \le \frac{1}{\underline{c}+t},
\qquad
\bigl\|(B+tI_m)^{-1}\bigr\|_2 \le \frac{1}{\underline{c}+t}.
\label{eq:resolvent-bound}
\end{equation}
Plugging \eqref{eq:resolvent-bound} into \eqref{eq:norm-bound-int} gives
\begin{equation}
\bigl\|A^{-1/2}-B^{-1/2}\bigr\|_2
\le
\frac{\|A-B\|_2}{\pi}\int_{0}^{\infty} t^{-1/2}\,\frac{1}{(\underline{c}+t)^2}\,\mathrm{d}t .
\label{eq:int-reduced}
\end{equation}

For simplicity, we change the variable by $t=\underline{c}u$. Then $\mathrm{d}t=\underline{c}\,\mathrm{d}u$,
$t^{-1/2}=\underline{c}^{-1/2}u^{-1/2}$, and $(\underline{c}+t)^2=\underline{c}^2(1+u)^2$.
Therefore,
\begin{align}
\int_{0}^{\infty} t^{-1/2}\,\frac{1}{(\underline{c}+t)^2}\,\mathrm{d}t
&=
\underline{c}^{-3/2}\int_{0}^{\infty}\frac{u^{-1/2}}{(1+u)^2}\,\mathrm{d}u .
\label{eq:int-changevar}
\end{align}
Clearly the integral is in form of Beta-function. By the Beta-function identity
$$\int_{0}^{\infty}\frac{u^{a-1}}{(1+u)^{a+b}}\,\mathrm{d}u=
\mathrm{B}(a,b)=\frac{\Gamma(a)\Gamma(b)}{\Gamma(a+b)},$$
with $a=\tfrac12$ and $b=\tfrac32$, we obtain
\begin{equation}
\int_{0}^{\infty}\frac{u^{-1/2}}{(1+u)^2}\,\mathrm{d}u
=\mathrm{B}\!\left(\tfrac12,\tfrac32\right)
=\frac{\Gamma(\tfrac12)\Gamma(\tfrac32)}{\Gamma(2)}
=\frac{\sqrt{\pi}\cdot \tfrac12\sqrt{\pi}}{1}
=\frac{\pi}{2}.
\label{eq:beta-eval}
\end{equation}
Combining \eqref{eq:int-changevar} and \eqref{eq:beta-eval} yields
\begin{equation}
\int_{0}^{\infty} t^{-1/2}\,\frac{1}{(\underline{c}+t)^2}\,\mathrm{d}t
=
\underline{c}^{-3/2}\cdot \frac{\pi}{2}.
\label{eq:int-final}
\end{equation}

Plug \eqref{eq:int-final} into \eqref{eq:int-reduced}, we obtain
\[
\bigl\|A^{-1/2}-B^{-1/2}\bigr\|_2
\le
\frac{\|A-B\|_2}{\pi}\cdot \underline{c}^{-3/2}\cdot \frac{\pi}{2}
=
\frac{1}{2\,\underline{c}^{3/2}}\|A-B\|_2,
\]
which completes the proof.
    
\end{proof}

\begin{lemma}[Preconditioner drift bound by step size]
    \label{lem:drift-PQ-bdd}
Suppose the EMA hyperparameter $\gamma$ and the learning rate $\eta$ in Algorithm~\ref{alg:kl} satisfies $1-\gamma\le c_\gamma \eta$, for some constant $c_\gamma>0$. Under the same condition as in Lemma\ref{lem:Kpq-bdd}, there exist constants $C_P$ and $C_Q$ such that 
\begin{equation}
    \label{eq:lem_drift_bdd}
    \big\|P_t^{-1/2} - P_{t-1}^{-1/2}\big\|_2 \le C_P\eta, \qquad \big\|Q_t^{-1/2} - Q_{t-1}^{-1/2}\big\|_2 \le C_Q\eta.
\end{equation}
\end{lemma}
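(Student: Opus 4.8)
The plan is to combine the Lipschitz estimate for the inverse square root (Lemma~\ref{lem:lipschitz-invsqrt}) with a direct bound on the drift $\|P_t-P_{t-1}\|_2$ of the \emph{un-square-rooted} preconditioners, which I will show is $O(1-\gamma)$, hence $O(\eta)$ under the hypothesis $1-\gamma\le c_\gamma\eta$. First I would reduce to the preconditioners themselves: by Lemmas~\ref{lem:pd-pq} and~\ref{lem:Kpq-bdd} we have $\|P_t^{-1}\|_2\le\overline r$ for all $t$, i.e.\ $P_t\succeq \overline r^{-1}I_m$, and likewise for $P_{t-1}$ (this covers $P_0=I_m$ as well, since $\overline r\ge 1$). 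Applying Lemma~\ref{lem:lipschitz-invsqrt} with $\underline c=\overline r^{-1}$ gives $\|P_t^{-1/2}-P_{t-1}^{-1/2}\|_2\le \tfrac12\overline r^{3/2}\,\|P_t-P_{t-1}\|_2$, so it suffices to prove $\|P_t-P_{t-1}\|_2\le O(\eta)$; the $Q$-side is identical using $\|Q_t^{-1}\|_2\le\overline s$.

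The key algebraic step is to exploit trace normalization, which fixes $\tr(P_t)=m$ for every $t$ (including $t=0$). Writing $\widetilde P_t=\gamma P_{t-1}+(1-\gamma)L_t$, $\tau_t:=\tr(\widetilde P_t)=\gamma m+(1-\gamma)\tr(L_t)$, and $P_t=(m/\tau_t)\widetilde P_t$, the scalar prefactor of $P_{t-1}$ collapses, since $m\gamma/\tau_t-1=-(1-\gamma)\tr(L_t)/\tau_t$. This yields the clean identity
\[
P_t-P_{t-1}=\frac{1-\gamma}{\tau_t}\bigl(m\,L_t-\tr(L_t)\,P_{t-1}\bigr).
\]
I expect this to be the heart of the argument: one must resist bounding $\|\widetilde P_t-P_{t-1}\|_2$ and the normalization step separately, and instead combine them so that $1-\gamma$ factors out exactly.

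Next I would bound the bracket and $\tau_t$ by absolute constants. Since $\tr(P_{t-1})=m$, we have $L_t=\tfrac1n G_tQ_{t-1}^{-1}G_t^\top+\mu I_m$; using $\|G_t\|_F\le G$ (inherited from Lemma~\ref{lem:Kpq-bdd}, which follows from $\|G_t\|_*\le G_*$), the bound $\|Q_{t-1}^{-1}\|_2\le\overline s$ from Lemma~\ref{lem:Kpq-bdd}, and the trace inequality~\eqref{eq:tr-ineq2}, gives $\|L_t\|_2\le \mu+G^2\overline s/n$ and $\tr(L_t)\le \mu m+G^2\overline s/n$; combined with $\|P_{t-1}\|_2\le\tr(P_{t-1})=m$ this bounds $\|m L_t-\tr(L_t)P_{t-1}\|_2$ by a constant depending only on $m,n,\mu,G,\overline s$. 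For the denominator, $\tau_t=\gamma m+(1-\gamma)\tr(L_t)\ge m\bigl(\gamma+(1-\gamma)\mu\bigr)\ge m\min(1,\mu)>0$. Hence $\|P_t-P_{t-1}\|_2\le c\,(1-\gamma)\le c\,c_\gamma\,\eta$, and the first step gives $\|P_t^{-1/2}-P_{t-1}^{-1/2}\|_2\le C_P\eta$ with $C_P:=\tfrac12\overline r^{3/2}c\,c_\gamma$.

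Finally I would repeat the last two steps verbatim with $P\leftrightarrow Q$: using $R_t=\tfrac1m G_t^\top P_t^{-1}G_t+\mu I_n$ (valid since $\tr(Q_{t-1})=n$), $\|P_t^{-1}\|_2\le\overline r$, and $\|Q_t^{-1}\|_2\le\overline s$, one obtains $Q_t-Q_{t-1}=\tfrac{1-\gamma}{\tr(\widetilde Q_t)}\bigl(n R_t-\tr(R_t)Q_{t-1}\bigr)$ and then $\|Q_t^{-1/2}-Q_{t-1}^{-1/2}\|_2\le C_Q\eta$ by the same chain of estimates. The only genuine obstacle is the algebraic cancellation that isolates $1-\gamma$; everything else is routine application of the uniform spectral bounds established in Lemmas~\ref{lem:pd-pq}--\ref{lem:Kpq-bdd}.
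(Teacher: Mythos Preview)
Your proof is correct and follows the same overall strategy as the paper: reduce $\|P_t^{-1/2}-P_{t-1}^{-1/2}\|_2$ to $\|P_t-P_{t-1}\|_2$ via the Lipschitz estimate for the inverse square root, then show the latter is $O(1-\gamma)\le O(\eta)$ using the uniform spectral bounds from the preceding lemmas. The only real difference is the algebraic decomposition. You combine the EMA and the trace normalization into the single identity $P_t-P_{t-1}=\tfrac{1-\gamma}{\tau_t}\bigl(mL_t-\tr(L_t)P_{t-1}\bigr)$, which factors out $1-\gamma$ in one stroke. The paper instead writes $P_t-P_{t-1}=(\alpha_t-1)P_{t-1}+\alpha_t(\widetilde P_t-P_{t-1})$ with $\alpha_t=m/\tr(\widetilde P_t)$ and bounds $|\alpha_t-1|$ and $\|\widetilde P_t-P_{t-1}\|_2$ separately, each by $O(1-\gamma)$. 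So your remark that ``one must resist bounding $\|\widetilde P_t-P_{t-1}\|_2$ and the normalization step separately'' is too strong: the paper does exactly that and it works, since both pieces carry the $1-\gamma$ factor individually. Your identity is tidier and avoids needing a lower bound on $\gamma$ in the denominator (the paper uses $\tr(\widetilde P_t)\ge \gamma\underline p\,m$, you use $\tau_t\ge m\min(1,\mu)$), but neither is essential to the argument.
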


\begin{proof}[Proof of Lemma~\ref{lem:drift-PQ-bdd}]

We first prove that there exist positive $\underline{p}$ and $\overline{p}$ such that $\underline{p}I \preceq P_t \preceq \overline{p} I$ for all $t$.

From Lemma~\ref{lem:Kpq-bdd} and Lemma~\ref{lem:Kpq_lower} we know there exist positive $a_P$ and $b_P$ such that $a_P \le \big\|P_t^{-1/2}\big\|_2 \le b_P$ for all $t$. In particular, $a_P \le 1/\sqrt{m}$ and $b_P \ge \sqrt{\overline{r}}$.
Also we have
$$\big\|P_t^{-1/2}\big\|_2 = \lambda_{\max}(P_t^{-1/2}) = \frac{1}{\sqrt{\lambda_{\min}(P_t)}} \le b_P.$$
Therefore,
$$\lambda_{\min}(P_t) \ge \frac{1}{b_P^2}.$$
Take $\underline{p}=1/b_P^2$. For the upper bound, we can simply take $\overline{p}=m$, since $\lambda_{\max}(P_t)\le \tr(P_t)=m$. Therefore, we have
\begin{equation}
    \lambda_{\min}(P_t) \ge \underline{p}, \qquad \lambda_{\max}(P_t)\le \overline{p}.
\end{equation}
Hence $\underline{p}I \preceq P_t \preceq \overline{p} I$.

Similarly, we conclude that there exist positive $\underline{q}$ and $\overline{q}$ such that $\underline{q}I \preceq Q_t \preceq \overline{q}I$ for all $t$.

According to Lemma~\ref{lem:lipschitz-invsqrt}, we have
$$\big\|P_t^{-1/2}-P_{t-1}^{-1/2}\big\|_2 \le \frac{1}{2\underline{p}^{3/2}}\|P_t-P_{t-1}\|_2.
$$
Therefore, it suffice to prove 
\begin{equation}
    \label{eq:Pt-drift-goal}
    \|P_t - P_{t-1}\|_2 \le c_P \eta
 \end{equation}
for some constant $c_P$, after which setting $C_P=c_p/(2\underline{p}^{3/2})$ will give the result.

Recall that from Algorithm~\ref{alg:kl}, the update rules are
$$\widetilde{P}_t = \gamma P_{t-1} + (1-\gamma)L_t,
\qquad
P_t = \frac{m}{\tr(\widetilde{P}_t)}\,\widetilde{P}_t.$$
Define $\alpha_t := \frac{m}{\tr(\widetilde{P}_t)} \;>\;0$, then $P_t = \alpha_t \widetilde{P}_t$.
Then we have
$$P_t - P_{t-1}= \alpha_t \widetilde{P}_t - P_{t-1} \nonumber= (\alpha_t-1)P_{t-1} + \alpha_t(\widetilde{P}_t - P_{t-1}).$$
Taking operator norms on both sides, we get
\begin{equation}
\label{eq:Pt-norm-split}
\|P_t-P_{t-1}\|_2
\le |\alpha_t-1|\,\|P_{t-1}\|_2 + \alpha_t\,\|\widetilde{P}_t-P_{t-1}\|_2.
\end{equation}
By the definition of EMA in Algorithm~\ref{alg:kl},
\begin{equation}\label{eq:Ptilde-minus-P}
\widetilde{P}_t - P_{t-1}
= \gamma P_{t-1} + (1-\gamma)L_t - P_{t-1}
= (1-\gamma)(L_t - P_{t-1}).
\end{equation}
Hence
\begin{equation}\label{eq:Ptilde-bound}
\|\widetilde{P}_t - P_{t-1}\|_2
\le (1-\gamma)\big(\|L_t\|_2 + \|P_{t-1}\|_2\big).
\end{equation}
Using $\|P_{t-1}\|_2 \le \overline{p}$, it remains to bound $\|L_t\|_2$.
Recall
$$L_t = \frac{1}{n}G_tQ_{t-1}^{-1}G_t^\top + \mu\frac{\tr(P_{t-1})}{m}I_m
= \frac{1}{n}G_tQ_{t-1}^{-1}G_t^\top + \mu I_m.$$
Therefore,
\begin{align}
\|L_t\|_2
&\le \frac{1}{n}\|G_t\|_2^2\,\|Q_{t-1}^{-1}\|_2 + \mu
\le \frac{1}{n}\,G_2^2\cdot \frac{1}{\underline{q}} + \mu
=: \overline{L}_P. \label{eq:Lt-op-bound}
\end{align}
Plugging \eqref{eq:Lt-op-bound} into \eqref{eq:Ptilde-bound} yields
\begin{equation}\label{eq:Ptilde-final}
\|\widetilde{P}_t - P_{t-1}\|_2
\le (1-\gamma)\big(\overline{L}_P + \overline{p}\big).
\end{equation}
Under the timescale condition $1-\gamma \le c_\gamma \eta$, we obtain
\begin{equation}\label{eq:Ptilde-Oeta}
\|\widetilde{P}_t - P_{t-1}\|_2
\le c_\gamma\big(\overline{L}_P + \overline{p}\big)\eta.
\end{equation}

Now we consider $\alpha_t$ and $|\alpha_t - 1|$. Since $\widetilde{P}_t = \gamma P_{t-1} + (1-\gamma)L_t \succeq \gamma P_{t-1} \succeq \gamma \underline{p}I_m$, therefore, $\tr(\widetilde{P}_t) \ge \gamma \underline{p}\,m$.
Hence we have
\begin{equation}\label{eq:alpha-upper}
\alpha_t = \frac{m}{\tr(\widetilde{P}_t)} \le \frac{1}{\gamma \underline{p}}.
\end{equation}
For $|\alpha_t-1|$, we have
\begin{equation}\label{eq:alpha-minus-1}
|\alpha_t-1|
= \left|\frac{m-\tr(\widetilde{P}_t)}{\tr(\widetilde{P}_t)}\right|
\le \frac{|m-\tr(\widetilde{P}_t)|}{\gamma \underline{p}\,m}.
\end{equation}
Taking trace of $\widetilde{P}_t = \gamma P_{t-1} + (1-\gamma)L_t$ gives
$$\tr(\widetilde{P}_t) = \gamma \tr(P_{t-1}) + (1-\gamma)\tr(L_t) = \gamma m + (1-\gamma)\tr(L_t),$$
hence
\begin{equation}\label{eq:trace-diff}
m-\tr(\widetilde{P}_t) = (1-\gamma)\big(m-\tr(L_t)\big).
\end{equation}
Combining \eqref{eq:alpha-minus-1}--\eqref{eq:trace-diff} yields
\begin{equation}\label{eq:alpha-minus-1-bound}
|\alpha_t-1|
\le \frac{1-\gamma}{\gamma \underline{p}\,m}\,|m-\tr(L_t)|.
\end{equation}
It remains to find an upper bound for $|m-\tr(L_t)|$.
Using $L_t = \frac{1}{n}G_tQ_{t-1}^{-1}G_t^\top + \mu I_m$,
$$\tr(L_t) = \frac{1}{n}\tr(Q_{t-1}^{-1}G_t^\top G_t) + \mu m
\le \frac{1}{n}\|Q_{t-1}^{-1}\|_2\tr(G_t^\top G_t) + \mu m.$$
Moreover, $\|Q_{t-1}^{-1}\|_2 \le 1/\underline{q}$ and $\tr(G_t^\top G_t)=\|G_t\|_F^2 \le \min(m,n)\|G_t\|_2^2 \le \min(m,n)G_2^2$.
Thus there exists a constant $c_L>0$ such that $\tr(L_t)\le c_L m$, and consequently
\begin{equation}\label{eq:m-trLt}
|m-\tr(L_t)| \le m+\tr(L_t) \le (1+c_L)m.
\end{equation}
Plugging \eqref{eq:m-trLt} into \eqref{eq:alpha-minus-1-bound} gives
\begin{equation}\label{eq:alpha-O1mgamma}
|\alpha_t-1|
\le \frac{1+c_L}{\gamma \underline{p}}\,(1-\gamma)
\le \frac{1+c_L}{\gamma \underline{p}}\,c_\gamma\,\eta.
\end{equation}

Substitute \eqref{eq:Ptilde-Oeta}, \eqref{eq:alpha-upper}, and \eqref{eq:alpha-O1mgamma} into \eqref{eq:Pt-norm-split}, and use $\|P_{t-1}\|_2\le \overline{p}$:
\begin{align*}
\|P_t-P_{t-1}\|_2
&\le |\alpha_t-1|\,\|P_{t-1}\|_2 + \alpha_t\,\|\widetilde{P}_t-P_{t-1}\|_2\\
&\le \left(\frac{1+c_L}{\gamma \underline{p}}c_\gamma\,\eta\right)\overline{p}
\;+\;
\left(\frac{1}{\gamma \underline{p}}\right)\left(c_\gamma(\overline{L}_P+\overline{p})\eta\right)\\
&= \left[
\frac{(1+c_L)c_\gamma\,\overline{p}}{\gamma \underline{p}}
+\frac{c_\gamma(\overline{L}_P+\overline{p})}{\gamma \underline{p}}
\right]\eta
=: c_P\,\eta,
\end{align*}
which proves \eqref{eq:Pt-drift-goal}. Therefore, we finish the boundedness of the drift for preconditioner $P_t$. i.e. 
$$\big\|P_t^{-1/2}-P_{t-1}^{-1/2}\big\|_2
\le \frac{c_P}{2\underline{p}^{3/2}}\eta
= C_P\eta,$$
as claimed. 

Similarly, replacing $(m,P,L,\underline{p},\overline{p})$ by $(n,Q,R,\underline{q},\overline{q})$, we can prove that there exist a constant $C_Q$ such that
$$\big\|Q_t^{-1/2}-Q_{t-1}^{-1/2}\big\|_2
\le C_Q\eta,$$
which finishs the whole proof.

\end{proof}

\begin{lemma}
\label{lem:ema-update-bd}
    Given the update rules in Algorithm~\ref{alg:kl}, the following inequality holds:
\begin{equation}
\label{eq:ema-update-bd}
    \sum_{t=1}^{T} \|\widetilde{G}_t - M_t\|_* \leq \frac{\beta}{1-\beta} \sum_{t=2}^{T} \|\widetilde{G}_t - \widetilde{G}_{t-1}\|_* + \frac{\beta}{1-\beta} \sup_{1 \leq t \leq T} \|\widetilde{G}_t\|_*.
\end{equation}
\end{lemma}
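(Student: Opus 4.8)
The plan is to track the momentum error $e_t := \widetilde{G}_t - M_t$ through a simple one-step recursion and then sum a geometric series. First I would use the momentum update $M_t = \beta M_{t-1} + (1-\beta)\widetilde{G}_t$ to write
$$
e_t = \widetilde{G}_t - M_t = \widetilde{G}_t - \beta M_{t-1} - (1-\beta)\widetilde{G}_t = \beta\bigl(\widetilde{G}_t - M_{t-1}\bigr)
= \beta\bigl(\widetilde{G}_t - \widetilde{G}_{t-1}\bigr) + \beta\, e_{t-1}
$$
for $t \ge 2$, together with the base case $e_1 = \widetilde{G}_1 - M_1 = \widetilde{G}_1 - (1-\beta)\widetilde{G}_1 = \beta\widetilde{G}_1$, which uses $M_0 = 0$.

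Next I would unroll this recursion. Applying the triangle inequality for the nuclear norm to the telescoped form of $e_t$ gives
$$
\|e_t\|_* \le \beta^{t-1}\|e_1\|_* + \sum_{i=2}^{t} \beta^{\,t-i+1}\,\|\widetilde{G}_i - \widetilde{G}_{i-1}\|_*
= \beta^{t}\,\|\widetilde{G}_1\|_* + \sum_{i=2}^{t} \beta^{\,t-i+1}\,\|\widetilde{G}_i - \widetilde{G}_{i-1}\|_*,
$$
using $\|e_1\|_* = \beta\|\widetilde{G}_1\|_*$. Then I would sum this inequality over $t = 1,\dots,T$. The contribution of the base-case term is $\sum_{t=1}^{T}\beta^t\|\widetilde{G}_1\|_* \le \tfrac{\beta}{1-\beta}\|\widetilde{G}_1\|_* \le \tfrac{\beta}{1-\beta}\sup_{1\le t\le T}\|\widetilde{G}_t\|_*$. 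For the double sum, swapping the order of summation yields $\sum_{i=2}^{T}\|\widetilde{G}_i - \widetilde{G}_{i-1}\|_* \sum_{t=i}^{T}\beta^{\,t-i+1}$, and since $\sum_{t=i}^{T}\beta^{\,t-i+1} = \beta\sum_{k=0}^{T-i}\beta^k \le \tfrac{\beta}{1-\beta}$, this term is bounded by $\tfrac{\beta}{1-\beta}\sum_{i=2}^{T}\|\widetilde{G}_i-\widetilde{G}_{i-1}\|_*$. Adding the two bounds gives exactly \eqref{eq:ema-update-bd}.

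This argument is essentially bookkeeping, so there is no serious obstacle; the only points requiring care are (i) correctly peeling off the initial term via $M_0 = 0$ so that the ``$\sup$'' term appears with the right constant, and (ii) the interchange of summation order in the geometric double sum, where one must keep the index ranges straight so that each geometric tail is bounded uniformly by $\beta/(1-\beta)$ rather than by a $t$- or $i$-dependent quantity.
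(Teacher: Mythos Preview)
Your proof is correct and slightly more economical than the paper's. The paper first derives the closed form $M_t=(1-\beta)\sum_{s=1}^t\beta^{t-s}\widetilde G_s$, rewrites $\widetilde G_t-M_t=(1-\beta)\sum_{s=1}^{t-1}\beta^{t-s}(\widetilde G_t-\widetilde G_s)+\beta^t\widetilde G_t$, then telescopes each $\widetilde G_t-\widetilde G_s$ into consecutive increments before swapping sums; only after that does it reach the per-step bound $\|\widetilde G_t-M_t\|_*\le \sum_{j=2}^t\beta^{t-j+1}\|\widetilde G_j-\widetilde G_{j-1}\|_*+\beta^t\|\widetilde G_t\|_*$, and the final summation over $t$ is then identical to yours. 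Your one-line recursion $e_t=\beta(\widetilde G_t-\widetilde G_{t-1})+\beta e_{t-1}$ already expresses the error in terms of consecutive increments, so you bypass the closed form and the extra telescoping/swap step entirely. The only cosmetic difference in the resulting per-step bound is that your residual term is $\beta^t\|\widetilde G_1\|_*$ while the paper's is $\beta^t\|\widetilde G_t\|_*$; both are absorbed by $\sup_t\|\widetilde G_t\|_*$ in the same way.
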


\begin{proof}[Proof of Lemma~\ref{lem:ema-update-bd}]
We first claim that the closed form for $M_t$ is given by
\begin{equation}
    \label{eq:closed-form-M_t}
    M_t = (1 - \beta) \sum_{s=1}^{t} \beta^{t-s} \widetilde{G}_s.
\end{equation}
We prove the claim by induction as follows.
Clear equation~\eqref{eq:closed-form-M_t} holds for $t=1$. Suppose \eqref{eq:closed-form-M_t} also holds for $t-1$, then
\begin{align*}
M_t &= \beta M_{t-1} + (1 - \beta)\widetilde{G}_t \\
&= \beta \left( (1-\beta) \sum_{s=1}^{t-1} \beta^{t-1-s} \widetilde{G}_s \right) + (1-\beta)\widetilde{G}_t \\
&= (1-\beta) \sum_{s=1}^{t-1} \beta^{t-s} \widetilde{G}_s + (1-\beta)\beta^0 \widetilde{G}_t \\
&= (1-\beta) \sum_{s=1}^{t} \beta^{t-s} \widetilde{G}_s,
\end{align*}
which proves the claim.

Therefore, we have
\begin{equation}
\begin{aligned}
    \widetilde{G}_t-M_t &= \widetilde{G}_t-(1-\beta)\sum_{s=1}^t \beta^{t-s} \widetilde{G}_s\\
    &= \beta \widetilde{G}_t - (1-\beta)\sum_{s=1}^{t-1} \beta^{t-s} \widetilde{G}_s.
\end{aligned}
\end{equation}

Notice that by geometric-sum,
$$
(1-\beta)\sum_{s=1}^{t-1} \beta^{t-s} = (1-\beta)\sum_{\hat{s}=1}^{t-1}\beta^{\hat{s}} = \beta(1-\beta^{t-1})=\beta-\beta^t.
$$
Therefore,
$$
\beta \widetilde{G}_t = (1-\beta)\sum_{s=1}^{t-1}\beta^{t-s}\widetilde{G}_t + \beta^t \widetilde{G}_t.
$$
Hence
\begin{equation}
    \label{eq:G_t-M_t-new}
    \widetilde{G}_t - M_t = (1 - \beta) \sum_{s=1}^{t-1} \beta^{t-s} (\widetilde{G}_t - \widetilde{G}_s) + \beta^t \widetilde{G}_t.
\end{equation}

Take nuclear norm on both sides and apply triangle inequality, we get
\begin{equation}
\|\widetilde{G}_t - M_t\|_*
\le
(1-\beta)\sum_{s=1}^{t-1}\beta^{\,t-s}\|\widetilde{G}_t-\widetilde{G}_s\|_*
+\beta^t\|\widetilde{G}_t\|_*.
\label{eq:lag-norm-1}
\end{equation}
For each $1\le s\le t-1$, telescoping yields
\begin{equation}
\|\widetilde{G}_t-\widetilde{G}_s\|_*
\le
\sum_{j=s+1}^{t}\|\widetilde{G}_j-\widetilde{G}_{j-1}\|_*.
\label{eq:telescoping}
\end{equation}
Plugging \eqref{eq:telescoping} into \eqref{eq:lag-norm-1} gives
\begin{align}
\|\widetilde{G}_t - M_t\|_*
&\le
(1-\beta)\sum_{s=1}^{t-1}\beta^{\,t-s}\sum_{j=s+1}^{t}\|\widetilde{G}_j-\widetilde{G}_{j-1}\|_*
+\beta^t\|\widetilde{G}_t\|_* \notag\\
&=
\sum_{j=2}^{t}\Big((1-\beta)\sum_{s=1}^{j-1}\beta^{\,t-s}\Big)\|\widetilde{G}_j-\widetilde{G}_{j-1}\|_*
+\beta^t\|\widetilde{G}_t\|_* \notag\\
&\le
\sum_{j=2}^{t}\beta^{\,t-j+1}\,\|\widetilde{G}_j-\widetilde{G}_{j-1}\|_*
\;+\;\beta^t\|\widetilde{G}_t\|_*,
\label{eq:swap-sums}
\end{align}

where the third line use the fact that, for each $j\le t$,
$$(1-\beta)\sum_{s=1}^{j-1}\beta^{\,t-s}
=(1-\beta)\beta^{\,t-j+1}\sum_{r=0}^{j-2}\beta^r
\le \beta^{\,t-j+1}.$$

Summing \eqref{eq:swap-sums} over $t=1,\dots,T$,
\begin{align}
\sum_{t=1}^{T}\|\widetilde{G}_t - M_t\|_*
&\le
\sum_{t=2}^{T}\sum_{j=2}^{t}\beta^{\,t-j+1}\|\widetilde{G}_j-\widetilde{G}_{j-1}\|_*
+
\sum_{t=1}^{T}\beta^{t}\|\widetilde{G}_t\|_* \notag\\
&=
\sum_{j=2}^{T}\Big(\sum_{t=j}^{T}\beta^{\,t-j+1}\Big)\|\widetilde{G}_j-\widetilde{G}_{j-1}\|_*
+
\sum_{t=1}^{T}\beta^{t}\|\widetilde{G}_t\|_*.
\label{eq:sum-swap-2}
\end{align}
Notice that, for the geometric tail,
$$\sum_{t=j}^{T}\beta^{\,t-j+1} \le \sum_{r=1}^{\infty}\beta^r = \frac{\beta}{1-\beta}.$$
Thus the first term in \eqref{eq:sum-swap-2} is bounded by
$$\frac{\beta}{1-\beta}\sum_{j=2}^{T}\|\widetilde{G}_j-\widetilde{G}_{j-1}\|_*.$$
For the residual term, since $\|\widetilde{G}_t\|_*\le \sup_{1\le s\le T}\|\widetilde{G}_s\|_*$, we obtain
$$\sum_{t=1}^{T}\beta^{t}\|\widetilde{G}_t\|_*
\le
\Big(\sum_{t=1}^{\infty}\beta^{t}\Big)\sup_{1\le s\le T}\|\widetilde{G}_s\|_*
=
\frac{\beta}{1-\beta}\sup_{1\le s\le T}\|\widetilde{G}_s\|_*.$$
This complete the proof.
    
\end{proof}

\begin{lemma}[Variance reduction w.r.t. batch size]
\label{lem:var-bd}
Define the gradient sampling noise
\begin{align*}
    N_t :=& G_t - \nabla \mathcal{L}(W_{t-1}) \\
    =& \frac{1}{B} \sum_{i=1}^{B} \left( \nabla \ell(W_{t-1}; \xi_{t,i}) - \nabla \mathcal{L}(W_{t-1}) \right).
\end{align*}
Under Assumption~\ref{ass:unbias-grad}--\ref{ass:bv}, we have
\begin{equation}
    \mathbb{E}[\|N_t\|_F^2 \mid \mathcal{F}_{t-1}] \leq \frac{\sigma^2}{B}.
\end{equation}
\end{lemma}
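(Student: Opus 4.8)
The plan is to treat $N_t$ as a normalized sum of $B$ conditionally independent, conditionally mean-zero matrices, so that the classical $1/B$ variance-reduction identity applies verbatim. First I would set $e_{t,i} := \nabla\ell(W_{t-1};\xi_{t,i}) - \nabla\cL(W_{t-1})$ for $i = 1,\dots,B$, so that $N_t = \tfrac{1}{B}\sum_{i=1}^{B} e_{t,i}$. Since $W_{t-1}$ is $\cF_{t-1}$-measurable and the samples $\xi_{t,1},\dots,\xi_{t,B}$ are drawn i.i.d.\ from $\cD$ independently of $\cF_{t-1}$, the matrices $e_{t,1},\dots,e_{t,B}$ are conditionally independent given $\cF_{t-1}$; moreover Assumption~\ref{ass:unbias-grad} gives $\EE[e_{t,i}\mid\cF_{t-1}] = 0$ for each $i$.

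Next I would expand the squared Frobenius norm via the Frobenius inner product,
\begin{equation*}
\|N_t\|_F^2 \;=\; \frac{1}{B^2}\sum_{i=1}^{B}\sum_{j=1}^{B}\langle e_{t,i}, e_{t,j}\rangle_F,
\end{equation*}
take the conditional expectation, and split into diagonal and off-diagonal terms. For $i\neq j$, conditional independence together with the mean-zero property yields $\EE[\langle e_{t,i}, e_{t,j}\rangle_F \mid \cF_{t-1}] = \langle \EE[e_{t,i}\mid\cF_{t-1}], \EE[e_{t,j}\mid\cF_{t-1}]\rangle_F = 0$, so only the $B$ diagonal terms remain:
\begin{equation*}
\EE\big[\|N_t\|_F^2 \,\big|\, \cF_{t-1}\big] \;=\; \frac{1}{B^2}\sum_{i=1}^{B}\EE\big[\|e_{t,i}\|_F^2 \,\big|\, \cF_{t-1}\big].
\end{equation*}

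Finally I would apply Assumption~\ref{ass:bv}, which bounds each term by $\EE[\|e_{t,i}\|_F^2 \mid \cF_{t-1}] \le \sigma^2$, so the right-hand side is at most $B\sigma^2/B^2 = \sigma^2/B$, which establishes the claim. This lemma is essentially routine; the only mildly delicate point is the vanishing of the cross terms, which relies on the per-sample stochastic gradients being conditionally independent given $\cF_{t-1}$ — precisely the i.i.d.\ mini-batch sampling convention recorded in the remark preceding the convergence analysis. Once that is granted, the remainder is a standard second-moment computation with no further obstacles.
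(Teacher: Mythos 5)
Your proof is correct and follows essentially the same route as the paper's: define per-sample noise matrices (you call them $e_{t,i}$, the paper calls them $Z_{t,i}$), expand $\|N_t\|_F^2$ via the Frobenius inner product, kill the cross terms by conditional independence and the mean-zero property from Assumption~\ref{ass:unbias-grad}, and bound the diagonal terms by Assumption~\ref{ass:bv}. No substantive differences.
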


\begin{proof}
Let $Z_{t,i} := \nabla \ell(W_{t-1}; \xi_{t,i}) - \nabla \mathcal{L}(W_{t-1})$. Then $N_t = \frac{1}{B} \sum_{i=1}^B Z_{t,i}$ and $\mathbb{E}[Z_{t,i} \mid \mathcal{F}_{t-1}] = 0$. Using independence across $i$ and expanding the squared Frobenius norm:
\begin{align*}
    \mathbb{E}[\|N_t\|_F^2 \mid \mathcal{F}_{t-1}] &= \mathbb{E}\bigg[\bigg\|\frac{1}{B} \sum_{i=1}^B Z_{t,i}\bigg\|_F^2 \bigg| \mathcal{F}_{t-1}\bigg] \\
    &= \frac{1}{B^2}\mathbb{E}\bigg[ \sum_{i=1}^B \|Z_{t,i}\|_F^2 + \sum_{i \neq j} \langle Z_{t,i}, Z_{t,j}\rangle_F \bigg| \mathcal{F}_{t-1}\bigg].
\end{align*}
The cross terms vanish:
\[
    \mathbb{E}[\langle Z_{t,i}, Z_{t,j}\rangle_F \mid \mathcal{F}_{t-1}] = \Big\langle \mathbb{E}[Z_{t,i} \mid \mathcal{F}_{t-1}], \mathbb{E}[Z_{t,j} \mid \mathcal{F}_{t-1}] \Big\rangle_F = 0 \quad (i \neq j),
\]
so
\[
    \mathbb{E}[\|N_t\|_F^2 \mid \mathcal{F}_{t-1}] = \frac{1}{B^2} \sum_{i=1}^B \mathbb{E}[\|Z_{t,i}\|_F^2 \mid \mathcal{F}_{t-1}] \leq \frac{1}{B^2} \cdot B\sigma^2 = \frac{\sigma^2}{B}.
\]
\end{proof}

\section{Proof of the convergence theorem}
\label{app:cvg-thm}

\begin{proof}[Proof of Theorem~\ref{thm:main_convergence_kl_muon_bigO}]
Taking the expectation on both sides of the inequality in Lemma~\ref{lem:cvg-lem}, we have
\begin{align}
\EE[\cL(W_t)]
&\le
\EE[\cL(W_{t-1})]
-\eta\,\EE\!\left[K_{PQ}(t)\|\nabla \cL(W_{t-1})\|_*\right]
+2\eta\,\EE\!\left[K_{PQ}(t)\|\nabla \cL(W_{t-1})-G_t\|_*\right] \notag\\
&\quad
+2\eta\,\EE\!\left[\|\widetilde{G}_t-M_t\|_*\right]
+\frac{L\eta^2}{2}\min(m,n)\,\EE\!\left[K_{PQ}^2(t)\right].
\label{eq:exp_one_step_v2}
\end{align}
Summing \eqref{eq:exp_one_step_v2} from $t=1$ to $T$ yields
\begin{align}
\EE[\cL(W_T)]
&\le
\cL(W_0)
-\eta\sum_{t=1}^T \EE\!\left[K_{PQ}(t)\|\nabla \cL(W_{t-1})\|_*\right]
+2\eta\sum_{t=1}^T \EE\!\left[K_{PQ}(t)\|\nabla \cL(W_{t-1})-G_t\|_*\right] \notag\\
&\quad
+2\eta\sum_{t=1}^T \EE\!\left[\|\widetilde{G}_t-M_t\|_*\right]
+\frac{L\eta^2}{2}\min(m,n)\sum_{t=1}^T \EE\! \left[K_{PQ}^2(t)\right],
\label{eq:telescope_v2}
\end{align}
where the objective terms telescope as
$\sum_{t=1}^T(\EE[\cL(W_t)]-\EE[\cL(W_{t-1})])=\EE[\cL(W_T)]-\cL(W_0)$.

By Assumption~\ref{ass:LB}, the objective function is lower-bounded by $\cL_*$, so $\EE[\cL(W_T)]\ge \cL_\star$. Rearranging \eqref{eq:telescope_v2} and dividing both sides by $\eta T$ gives

\begin{equation}
\begin{aligned}
\frac{1}{T}\sum_{t=1}^T \EE\!\left[K_{PQ}(t)\|\nabla \cL(W_{t-1})\|_*\right]
&\le
\frac{\cL(W_0)-\cL_\star}{\eta T}
+\bmark{\frac{2}{T}\sum_{t=1}^T \EE\!\left[K_{PQ}(t)\|\nabla \cL(W_{t-1})-G_t\|_*\right]}{Stochastic gradient noise term}
\\
&+
\bmark{\frac{2}{T}\sum_{t=1}^T \EE\!\left[\|\widetilde{G}_t-M_t\|_*\right]}{Momentum tracking error}
+\bmark{\frac{L\eta}{2}\min(m,n)\cdot \frac{1}{T}\sum_{t=1}^T \EE\!\left[K_{PQ}(t)^2\right]}{Quadratic term (smoothness penalty)}.
\end{aligned}
\label{eq:avg_start_v2} 
\end{equation}

Now we consider the terms in the right-head side of \eqref{eq:avg_start_v2} separately.
\paragraph{Quadratic term.} According to Lemma~\ref{lem:Kpq-bdd}, the term $K_{PQ}(t)$ is upper-bounded by $\overline{K}$, implying $K_{PQ}^2(t) \le\overline{K}^2$. Therefore,
\begin{equation}
\label{eq:quad-rs}
    \frac{1}{T}\sum_{t=1}^T \EE\!\left[K_{PQ}(t)^2\right]
\le
\overline{K}^2.
\end{equation}

\paragraph{Noise term.} By Lemma~\ref{lem:eq-norm}, we know $\|A\|_*\le \sqrt{\min(m,n)}\,\|A\|_F$ for all matrix $A$. Applying the upper bound of variance in Lemma~\ref{lem:var-bd}, we have
\begin{align}
\EE\!\left[K_{PQ}(t)\|\nabla \cL(W_{t-1})-G_t\|_*\right]
&\le
\overline{K}\,\EE\!\left[\|\nabla \cL(W_{t-1})-G_t\|_*\right] \notag\\
&\le
\overline{K}\sqrt{\min(m,n)}\,
\EE\!\left[\|\nabla \cL(W_{t-1})-G_t\|_F\right] \notag\\
&\le
\overline{K}\sqrt{\min(m,n)}\,
\sqrt{\EE\!\left[\|\nabla \cL(W_{t-1})-G_t\|_F^2\right]} \notag\\
&\le
\overline{K}\sqrt{\min(m,n)}\,
\frac{\sigma}{\sqrt{B}},
\label{eq:noise_bound_single_v2}
\end{align}
Averaging \eqref{eq:noise_bound_single_v2} over $t$ gives
\begin{equation}
\label{eq:noise-rs}
\frac{2}{T}\sum_{t=1}^T \EE\!\left[K_{PQ}(t)\|\nabla \cL(W_{t-1})-G_t\|_*\right]
\le
\frac{2\overline{K}\sqrt{\min(m,n)}\,\sigma}{\sqrt{B}}.
\end{equation}

\paragraph{Momentum tracking term.} By Lemma~\ref{lem:ema-update-bd}, we have
\begin{equation}
    \label{eq:momentum-tracking-formula}
    \sum_{t=1}^{T} \|\widetilde{G}_t - M_t\|_* \leq \frac{\beta}{1-\beta} \sum_{t=2}^{T} \|\widetilde{G}_t - \widetilde{G}_{t-1}\|_* + \frac{\beta}{1-\beta} \sup_{1 \leq t \leq T} \|\widetilde{G}_t\|_*.
\end{equation}
Now we need to scale-up the term $\|\widetilde{G}_t - \widetilde{G}_{t-1}\|_*$. By adding and subtracting intermediate terms, we have
\begin{align*}
\widetilde{G}_t-\widetilde{G}_{t-1}
=\;&(P_t^{-1/2}-P_{t-1}^{-1/2})\,G_t\,Q_t^{-1/2} \\
&+ P_{t-1}^{-1/2}\,(G_t-G_{t-1})\,Q_t^{-1/2}\\
&+ P_{t-1}^{-1/2}\,G_{t-1}\,(Q_t^{-1/2}-Q_{t-1}^{-1/2}). 
\end{align*}
Using Lemma~\ref{thm:schatten_holder_three} and triangle inequality, we obtain
\begin{align}
\|\widetilde{G}_t-\widetilde{G}_{t-1}\|_*
\le\;& \|P_t^{-1/2}-P_{t-1}^{-1/2}\|_2\;\|G_t\|_*\;\|Q_t^{-1/2}\|_2 \label{eq:inc1}\\
&+\|P_{t-1}^{-1/2}\|_2\;\|G_t-G_{t-1}\|_*\;\|Q_t^{-1/2}\|_2 \label{eq:inc2}\\
&+\|P_{t-1}^{-1/2}\|_2\;\|G_{t-1}\|_*\;\|Q_t^{-1/2}-Q_{t-1}^{-1/2}\|_2. \label{eq:inc3}
\end{align}

By Lemma~\ref{lem:Kpq-bdd} and Lemma~\ref{lem:drift-PQ-bdd}, there exist constants $\overline{K},C_P,C_Q>0$ such that for all $t$,

$$K_{PQ}(t):=\|P_t^{-1/2}\|_2\|Q_t^{-1/2}\|_2 \le \overline{K},$$

$$\|P_t^{-1/2}-P_{t-1}^{-1/2}\|_2 \le C_P\,\eta, \qquad
\|Q_t^{-1/2}-Q_{t-1}^{-1/2}\|_2 \le C_Q\,\eta.$$

Since we assume a uniform bound on the stochastic gradient magnitude in nuclear norm:
\begin{equation}
\|G_t\|_* \le G_* \quad \text{for all } t. \label{eq:Gstar}
\end{equation}
Then \eqref{eq:inc1} and \eqref{eq:inc3} are bounded by

$$\|P_t^{-1/2}-P_{t-1}^{-1/2}\|_2\;\|G_t\|_*\;\|Q_t^{-1/2}\|_2
\le C_P\eta\cdot \overline{K}G_*,$$
and
$$\|P_{t-1}^{-1/2}\|_2\;\|G_{t-1}\|_*\;\|Q_t^{-1/2}-Q_{t-1}^{-1/2}\|_2
\le C_Q\eta\cdot \overline{K}G_*.$$
Moreover, the middle term \eqref{eq:inc2} satisfies
$$\|P_{t-1}^{-1/2}\|_2\;\|G_t-G_{t-1}\|_*\;\|Q_t^{-1/2}\|_2
\le \overline{K}\,\|G_t-G_{t-1}\|_*.$$
Hence, combining the three parts, we have
\begin{equation}
\|\widetilde{G}_t-\widetilde{G}_{t-1}\|_*
\le (C_P+C_Q)\eta\cdot \overline{K}G_* \;+\; \overline{K}\,\|G_t-G_{t-1}\|_*.
\label{eq:inc_reduced}
\end{equation}

Since $\mathcal{L}$ is $L$-smooth in Frobenius norm by Assumption~\ref{ass:L-smooth}, applying $\|A\|_* \le \sqrt{\min(m,n)}\|A\|_F$, we have
\begin{align}
\|G_t-G_{t-1}\|_*
&\le \sqrt{\min(m,n)}\,\|G_t-G_{t-1}\|_F \nonumber\\
&\le L\sqrt{\min(m,n)}\,\|W_{t-1}-W_{t-2}\|_F. \label{eq:Gdiff_smooth}
\end{align}
Hence it remains to bound $\|\Delta W_{t-1}\|_F$.
Recall $\Delta W_{t-1} = P_{t-1}^{-1/2}\,\mathrm{Polar}(M_{t-1})\,Q_{t-1}^{-1/2}$.
By Theorem~\ref{thm:schatten_holder_three} and $\|\mathrm{Polar}(M_{t-1})\|_F \le \sqrt{\min(m,n)}$, we get
\begin{equation}
\|\Delta W_{t-1}\|_F
\le \|P_{t-1}^{-1/2}\|_2 \,\|\mathrm{Polar}(M_{t-1})\|_F\,\|Q_{t-1}^{-1/2}\|_2
\le \sqrt{\min(m,n)}\,K_{PQ}(t-1)
\le \sqrt{\min(m,n)}\,\overline{K}.
\label{eq:DeltaW_bound}
\end{equation}
Plugging \eqref{eq:DeltaW_bound} into \eqref{eq:Gdiff_smooth}, we obtain
\begin{equation}
\|G_t-G_{t-1}\|_* \le L\sqrt{\min(m,n)}\cdot \eta\cdot \sqrt{\min(m,n)}\,\overline{K}
= L\,\eta\,\min(m,n)\,\overline{K}.
\label{eq:Gdiff_final}
\end{equation}
Combining \eqref{eq:inc_reduced} and \eqref{eq:Gdiff_final} yields an increment bound:
\begin{equation}
\|\widetilde{G}_t-\widetilde{G}_{t-1}\|_*
\le \eta\Big[(c_P+c_Q)\overline{K}G_* \;+\; L\,\min(m,n)\,\overline{K}^2\Big].
\label{eq:Gtilde_increment_Oeta}
\end{equation}

Plugging \eqref{eq:Gtilde_increment_Oeta} and $\sup_t\|\widetilde{G}_t\|_* \le \overline{K}G_*$ into \eqref{eq:momentum-tracking-formula} yields
\begin{equation*}
\sum_{t=1}^T \|\widetilde{G}_t - M_t\|_*
\le
\frac{\beta}{1-\beta}(T-1)\eta\Big[(C_P+C_Q)\overline{K}G_* + L\,\min(m,n)\,\overline{K}^2\Big]
\;+\;
\frac{\beta}{1-\beta}\,\overline{K}G_*.
\end{equation*}
Dividing by $T$ and taking expectation,
\begin{equation}
\label{eq:moment-rs}
\frac{1}{T} \sum_{t=1}^{T} \mathbb{E}\bigl[\|\widetilde{G}_t - M_t\|_*\bigr]
\;\le\;
\frac{\beta}{1-\beta}\,\eta\Bigl[(c_P+c_Q)\,\overline{K}\,G_* \;+\; L\,\min(m,n)\,\overline{K}^{\,2}\Bigr]
\;+\;
\frac{\beta}{1-\beta}\,\frac{\overline{K}\,G_*}{T}.
\end{equation}

\paragraph{Final upper bound}
Plugging \eqref{eq:quad-rs}, \eqref{eq:noise-rs}, and \eqref{eq:moment-rs} into \eqref{eq:avg_start_v2}, we get
\begin{equation}
\label{eq:final_bound}
\begin{aligned}
\frac{1}{T}\sum_{t=1}^T \EE\!\left[K_{PQ}(t)\,\|\nabla \cL(W_{t-1})\|_*\right]
\;\le\;&
\frac{\cL(W_0)-\cL_\star}{\eta T}
\;+\;\frac{2\beta}{1-\beta}\,\frac{\overline K\,G_*}{T}
\;+\;\frac{2\overline{K}\sqrt{\min(m,n)}\,\sigma}{\sqrt{B}}
\\
&+\eta\Biggl[
\frac{L}{2}\,\min(m,n)\,\overline K^{\,2}
+\frac{2\beta}{1-\beta}\Bigl((C_P+C_Q)\,\overline K\,G_* + L\,\min(m,n)\,\overline K^{\,2}\Bigr)
\Biggr].
\end{aligned}
\end{equation}

For simplicity, we denote $R:=\cL(W_0)-\cL_\star$, $C_{dr} = C_P+C_Q$ and $r:=\min(m,n)$. By Lemma~\ref{lem:Kpq_lower}, there exist $\underline{K}>0$ be such that $K_{PQ}(t)\ge \underline{K}$ for all $t$.
Then for each $t$,
$$K_{PQ}(t)\,\|\nabla \cL(W_{t-1})\|_*
\;\ge\;
\underline{K}\,\|\nabla \cL(W_{t-1})\|_*.$$
Taking expectation and 
averaging over $t=1,\dots,T$ gives
\begin{equation}
\label{eq:lhs_lower_via_K}
\frac{1}{T}\sum_{t=1}^T \EE\!\left[K_{PQ}(t)\,\|\nabla \cL(W_{t-1})\|_*\right]
\;\ge\;
\underline{K}\cdot \frac{1}{T}\sum_{t=1}^T \EE\!\left[\|\nabla \cL(W_{t-1})\|_*\right].
\end{equation}

Combining \eqref{eq:lhs_lower_via_K} with \eqref{eq:final_bound}, we obtain
\begin{equation}
\label{eq:grad_avg_final}
\begin{aligned}
\frac{1}{T}\sum_{t=1}^T \EE\!\left[\|\nabla \cL(W_{t-1})\|_*\right]
\;&\le\;
\frac{1}{\underline K}\Biggl[
\frac{R}{\eta T}
+\frac{2\beta}{1-\beta}\,\frac{\overline K\,G_*}{T}
+\frac{2\overline{K}\sqrt{r}\,\sigma}{\sqrt{B}}
+\eta\!\left(
\frac{L}{2}\,r\,\overline K^{\,2}
+\frac{2\beta}{1-\beta}\Bigl(C_{dr}\,\overline K\,G_* + L\,r\,\overline K^{\,2}\Bigr)
\right)
\Biggr].\\
&=\frac{R}{\underline{K}\,\eta\,T}
+\frac{2\beta}{1-\beta}\,\frac{\overline K\,G_*}{\underline{K}\,T}
+\frac{2\overline{K}}{\underline{K}}\frac{\sigma\sqrt{r}}{\sqrt{B}} 
+\frac{2\beta}{1-\beta}\,\frac{C_{dr}\,\overline K\,G_*}{\underline{K}}\eta
+\frac{1+3\beta}{2-2\beta}\frac{L\overline K^{\,2}}{\underline{K}}\,r\eta.
\end{aligned}
\end{equation}

In particular, if we take $\eta = C/\sqrt{T}$ for any constant $C$, we have 
\begin{equation*}
\label{eq:grad_avg_final_eta_C_over_sqrtT}
\frac{1}{T}\sum_{t=1}^T \EE\!\left[\|\nabla \cL(W_{t-1})\|_*\right]
\le\frac{R}{\underline{K}\,C\,\sqrt{T}}
+\frac{2\beta}{1-\beta}\,\frac{\overline K\,G_*}{\underline{K}\,T}
+\frac{2\overline{K}}{\underline{K}}\frac{\sigma\sqrt{r}}{\sqrt{B}}
+\frac{2\beta}{1-\beta}\,\frac{C_{dr}\,\overline K\,G_*}{\underline{K}}\frac{C}{\sqrt{T}}
+\frac{1+3\beta}{2-2\beta}\frac{\overline K^{\,2}}{\underline{K}}\,rL\frac{C}{\sqrt{T}}.
\end{equation*}

Therefore,
\begin{equation}
\label{eq:rate_bigO_concrete}
\frac{1}{T}\sum_{t=1}^T \EE\!\left[\|\nabla \cL(W_{t-1})\|_*\right]
\;=\;
\cO\!\left(\frac{R + rL + G_*}{\sqrt{T}} +\frac{G_*}{T}+\frac{\sigma\sqrt{r}}{\sqrt{B}}\right).
\end{equation}

This finishes the proof.

\end{proof}

%%%%%%%%%%%%%%%%%%%%%%%%%%%%%%%%%%%%%%%%%%%%%%%%%%%%%%%%%%%%%%%%%%%%%%%%%%%%%%%
%%%%%%%%%%%%%%%%%%%%%%%%%%%%%%%%%%%%%%%%%%%%%%%%%%%%%%%%%%%%%%%%%%%%%%%%%%%%%%%

\end{document}